\definecolor{mygreen}{RGB}{28,172,0} 
\definecolor{mylilas}{RGB}{170,55,241}
\theoremstyle{definition}
\newtheorem{theorem}{Theorem}[section]
\newtheorem{lemma}[theorem]{Lemma}
\newtheorem{definition}[theorem]{Definition}
\newtheorem{remark}[theorem]{Remark}
\newtheorem{claim}[theorem]{Claim}
\newtheorem{fact}[theorem]{Fact}
\newtheorem{assumption}[theorem]{Assumption}
\theoremstyle{definition}
\newcommand{\N}{\mathbb{N}}
\newcommand{\R}{\mathbb{R}}
\newcommand{\E}{\mathbb{E}}
\let\P\BP
\let\hat\widehat
\newcommand{\f}{\frac}
\newcommand{\eps}{\varepsilon}
\renewcommand{\r}{\right}
\renewcommand{\l}{\left}
\newcommand{\pnorm}[2]{\left\|#1\right\|_{#2}}
\newcommand{\norm}[1]{\left\|#1\right\|}
\newcommand{\ip}[2]{\left\langle #1, #2 \right\rangle}
\newcommand{\ceil}[1]{\lceil #1 \rceil}
\newcommand{\ind}{{\mathbbm{1}}}
\newcommand{\summ}[2]{\sum_{#1 = 1}^{#2}}
\newcommand{\summm}[3]{\sum_{#1 = #2}^{#3}}
\newcommand{\calD}{\mathcal{D}}
\newcommand{\calF}{\mathcal{F}}
\newcommand{\calG}{\mathcal{G}}
\newcommand{\calH}{\mathcal{H}}
\DeclareMathOperator{\poly}{poly}
\newcommand{\iid}{\stackrel{\rm i.i.d.}{\sim}}
\renewcommand*{\eqref}[1]{%
  \hyperref[{#1}]{\textup{\tagform@{\ref*{#1}}}}%
}
\newcommand{\mgtr}{\succ}
\newcommand{\opt}{\mathsf{OPT}}
\newcommand{\OPT}{\mathsf{OPT}}
\newcommand{\oF}{H}
\numberwithin{equation}{section}
\author
{
    Spencer Frei\thanks{Department of Statistics, University of California, Los Angeles, CA 90095, USA; e-mail: {\tt spencerfrei@ucla.edu}}
    ~~~and~~~
	Yuan Cao\thanks{Department of Computer Science, University of California, Los Angeles, CA 90095, USA; e-mail: {\tt yuancao@cs.ucla.edu}} 
	~~~and~~~
	Quanquan Gu\thanks{Department of Computer Science, University of California, Los Angeles, CA 90095, USA; e-mail: {\tt qgu@cs.ucla.edu}}
}
\date{}
\title{\huge Agnostic Learning of a Single Neuron with Gradient Descent}
\begin{document}
\maketitle
\begin{abstract}
    We consider the problem of learning the best-fitting single neuron as measured by the expected square loss $\E_{(x,y)\sim \mathcal{D}}[(\sigma(w^\top x)-y)^2]$ over some unknown joint distribution $\mathcal{D}$ by using gradient descent to minimize the empirical risk induced by a set of i.i.d. samples $S\sim \mathcal{D}^n$.   The activation function $\sigma$ is an arbitrary Lipschitz and non-decreasing function, making the optimization problem nonconvex and nonsmooth in general, and covers typical neural network activation functions and inverse link functions in the generalized linear model setting.   In the agnostic PAC learning setting, where no assumption on the relationship between the labels $y$ and the input $x$ is made, if the optimal population risk is $\mathsf{OPT}$, we show that gradient descent achieves population risk $O(\mathsf{OPT})+\eps$ in polynomial time and sample complexity when $\sigma$ is strictly increasing.  For the ReLU activation, our population risk guarantee is $O(\mathsf{OPT}^{1/2})+\eps$.  When labels take the form $y = \sigma(v^\top x) + \xi$ for zero-mean sub-Gaussian noise $\xi$, we show that the population risk guarantees for gradient descent improve to $\mathsf{OPT} + \eps$.  Our sample complexity and runtime guarantees are (almost) dimension independent, and when $\sigma$ is strictly increasing, require no distributional assumptions beyond boundedness.  For ReLU, we show the same results under a nondegeneracy assumption for the marginal distribution of the input.  
\end{abstract}
\section{Introduction}
We study learning the best possible single neuron that captures the relationship between the input $x\in \R^d$ and the output label $y\in \R$ as measured by the expected square loss over some unknown but fixed distribution $(x,y)\sim \calD$.  In particular, for a given activation function $\sigma:\R\to \R$, we define the population risk $F(w)$ associated with a set of weights $w$ as
\begin{equation} F(w) := (1/2) \E_{(x,y) \sim \calD} \l[ \l( \sigma(w^\top x) - y \r)^2\r].
\label{eq:optim.obj}
\end{equation}
 The activation function is assumed to be non-decreasing and Lipschitz, and includes nearly all activation functions used in neural networks such as the rectified linear unit (ReLU), sigmoid, $\tanh$, and so on.  In the agnostic PAC learning setting \citep{kearns.agnostic}, no structural assumption is made regarding the relationship of the input and the label, and so the best-fitting neuron could, in the worst case, have 
nontrivial population risk.  Concretely, if we denote
 \begin{equation}
    v := \mathrm{argmin}_{\pnorm w2 \leq 1} F(w),\quad  \opt:= F(v), 
    \label{def:v.pop.risk.minimizer}
\end{equation}
then the goal of a learning algorithm is to (efficiently) return weights $w$ such that the population risk $F(w)$ is close to the best possible risk $\OPT$.  The agnostic learning framework stands in contrast to the \textit{realizable} PAC learning setting, where one assumes $\opt=0$, so that there exists some $v$ such that the labels are given by $y=\sigma(v^\top x)$.   

The learning algorithm we consider in this paper is empirical risk minimization  using vanilla gradient descent.  We assume we have access to a set of i.i.d. samples $\{(x_i,y_i)\}_{i=1}^n\sim \calD^n$, and we run gradient descent with a fixed step size on the empirical risk $\hat F(w) = (1/2n)\textstyle \summ i n (\sigma(w^\top x_i)-y_i)^2$.   A number of early neural network studies pointed out that the landscape of the empirical risk of a single neuron has unfavorable properties, such as a large number of spurious local minima~\citep{brady1989,auer1995}, and led researchers to instead study gradient descent on a convex surrogate loss~\citep{helmbold95worstcase,helmbold99relativeloss}.  Despite this, we are able to show that gradient descent on the empirical risk itself finds weights that not only have small empirical risk but small population risk as well.

Surprisingly little is known about neural networks trained by minimizing the empirical risk with gradient descent in the agnostic PAC learning setting.  We are aware of two works \citep{allenzhu.3layer,allenzhu.kernel} in the \textit{improper} agnostic learning setting, where the goal is to return a hypothesis $h\in \calH$ that achieves population risk close to $\hat \opt$, where $\hat \opt$ is the smallest possible population risk achieved by a different set of hypotheses $\hat \calH$.  Another work considered the random features setting where only the final layer of the network is trained and the marginal distribution over $x$ is uniform on the unit sphere~\citep{vempala}. But none of these address the simplest possible neural network: that of a single neuron $x\mapsto \sigma(w^\top x)$. We believe a full characterization of what we can (or cannot) guarantee for gradient descent in the single neuron setting will help us understand what is possible in the more complicated deep neural network setting.  Indeed, two of the most common hurdles in the analysis of deep neural networks trained by gradient descent---nonconvexity and nonsmoothness---are also present in the case of the single neuron.  We hope that our analysis in this relatively simple setup will be suggestive of what is possible in more complicated neural network models. 

Our main contributions can be summarized as follows.
\begin{enumerate}[1)]
\item \textbf{Agnostic setting} (Theorem \ref{theorem:agnostic}). Without any assumptions on the relationship between $y$ and $x$, and assuming only boundedness of the marginal distributions of $x$ and $y$, we show that for any $\eps>0$, gradient descent finds a point $w_t$ with population risk $O(\opt) + \eps$ with sample complexity $O(\eps^{-2})$ and runtime $O(\eps^{-1})$ when $\sigma(\cdot)$ is strictly increasing and Lipschitz.  When $\sigma$ is ReLU, we obtain a population risk guarantee of $O(\opt^{1/2})+\eps$ with sample complexity $O(\eps^{-4})$ and runtime $O(\eps^{-2})$ when the marginal distribution of $x$ satisfies a nondegeneracy condition (Assumption \ref{assumption:marginal.spread}). The sample and runtime complexities are independent of the input dimension for both strictly increasing activations and ReLU.

\item \textbf{Noisy teacher network setting} (Theorem \ref{theorem:glm}). When $y = \sigma(v^\top x) + \xi$, where $\xi|x$ is zero-mean and sub-Gaussian (and possibly dependent on $x$), we demonstrate that gradient descent finds $w_t$ satisfying $F(w_t) \leq \opt + \eps$ for activation functions that are strictly increasing and Lipschitz assuming only boundedness of the marginal distribution over $x$.  The same result holds for ReLU under a marginal spread assumption (Assumption \ref{assumption:marginal.spread}).  The runtime and sample complexities are of order $\tilde O(\eps^{-2})$, with a logarithmic dependence on the input dimension.  When the noise is bounded, our guarantees are dimension independent.   If we further know $\xi \equiv 0$, i.e. the learning problem is in the realizable rather than agnostic setting, we can improve the runtime and sample complexity guarantees from $O(\eps^{-2})$ to $O(\eps^{-1})$ by using online stochastic gradient descent (Theorem \ref{theorem:gd.loss}).   
\end{enumerate}

\section{Related work}
Below, we provide a high-level summary of related work in the agnostic learning and teacher network settings.  Detailed comparisons with the most related works will appear after we present our main theorems in Sections \ref{sec:agnostic} and \ref{sec:noisy}.  In Appendix \ref{appendix:comparisons}, we provide tables that describe the assumptions and complexity guarantees of our work in comparison to related results. 

\noindent \textbf{Agnostic learning:}  The simplest version of the agnostic regression problem is to find a hypothesis that matches the performance of the best \textit{linear} predictor.  In our setting, this corresponds to $\sigma$ being the identity function.  This problem is completely characterized:~\citet{shamir15} showed that any algorithm that returns a linear predictor $v$ has risk $\opt + \Omega(\eps^{-2}\wedge d\eps^{-1})$ when the labels satisfy $|y|\leq 1$ and the marginal distribution over $x$ is supported on the unit ball, matching upper bounds proved by~\citet{srebro.mirror} using mirror descent.  

When $\sigma$ is not the identity, related works are scarce.~\citet{goel2017relupoly} studied agnostic learning of the ReLU on distributions supported on the unit sphere but had runtime and sample complexity exponential in $\eps^{-1}$. In another work on learning a single ReLU,~\citet{goel2019relugaussian} showed that learning up to risk $\opt+\eps$ in polynomial time is as hard as the problem of learning sparse parities with noise, long believed to be computationally intractable.  Additionally, they provided an approximation algorithm that could learn up to $O(\opt^{2/3})+\eps$ risk in $\poly(d, \eps^{-1})$ time and sample complexity when the marginal distribution over $x$ is a standard Gaussian.  
In a related but incomparable set of results in the improper agnostic learning setting,~\citet{allenzhu.3layer} and \citet{allenzhu.kernel} showed that multilayer ReLU networks trained by gradient descent can match the population risk achieved by multilayer networks with smooth activation functions.~\citet{vempala} studied agnostic learning of a one-hidden-layer neural network when the first layer is fixed at its (random) initial values and the second layer is trained. A very recent work by \citet{diakonikolas2020approximation} showed that population risk $O(\opt)+\eps$ can be achieved for the single ReLU neuron by appealing to gradient descent on a convex surrogate for the empirical risk.

\noindent\textbf{Teacher network:} The literature refers to the case of $y= \sigma(v^\top x) + \xi$ for some possible  zero mean noise $\xi$ variously as the ``noisy teacher network'' or ``generalized linear model'' (GLM) setting, and is related to the probabilistic concepts model~\citep{kearns.probabilistic}.  In the GLM setting, $\sigma$ plays the role of the inverse link function; in the case of logistic regression, $\sigma$ is the sigmoid function.   

The results in the teacher network setting can be broadly characterized by (1) whether they cover arbitrary distributions over $x$ and (2) the presence of noise (or lackthereof).  The GLMTron algorithm proposed by~\citet{kakade2011}, itself a modification of the Isotron algorithm of~\citet{kalai2009isotron}, is known to learn a noisy teacher network up to risk $\opt+\eps$ for any Lipschitz and non-decreasing $\sigma$ and any distribution with bounded marginals over $x$.~\citet{mei2016landscape} showed that gradient descent learns the noisy teacher network under a smoothness assumption on the activation function for a large class of distributions.~\citet{foster2018} provided a meta-algorithm for translating $\eps$-stationary points of the empirical risk to points of small population risk in the noisy teacher network setting.  A recent work by~\citet{mukherjee} develops a modified SGD algorithm for learning a ReLU with bounded adversarial noise on distributions where the input is bounded.  

Of course, any guarantee that holds for a neural network with a single fully connected hidden layer of arbitrary width holds for the single neuron, so in this sense our work can be connected to a larger body of work on the analysis of gradient descent used for learning neural networks.  The majority of such works are restricted to particular input distributions, whether it is Gaussian or uniform distributions~\citep{soltanolkotabi2017relus,tian2017relu,soltanolkotabi2019theoretical,zhanggu2019,goel.convotron,cao2019cnn}.~\citet{du2017} showed that in the noiseless (a.k.a. realizable) setting, a single neuron can be learned with SGD if the input distribution satisfies a certain subspace eigenvalue property.~\citet{yehudai20} studied the properties of learning a single neuron for a variety of increasing and Lipschitz activation functions using gradient descent, as we do in this paper, although their analysis was restricted to the noiseless setting.

\section{Agnostic learning setting}
\label{sec:agnostic}
We begin our analysis by assuming there is no \textit{a priori} relationship between $x$ and $y$,  so the population risk $\opt$ of the population risk minimizer $v$ defined in \eqref{def:v.pop.risk.minimizer} may, in general, be a large quantity.  If $\opt =0$, then $\sigma(v^\top x) = y$ a.s. and the problem is in the realizable PAC learning setting.  In this case, we can use a modified proof technique to get stronger guarantees for the population risk; see Appendix \ref{appendix:realizable} for the complete theorems and proofs in this setting.  We will thus assume without loss of generality that $0 < \opt \leq 1$. 

The gradient descent method we use in this paper is as follows.  We assume we have a training sample $\{(x_i,y_i)\}_{i=1}^n\iid \calD^n$, and define the empirical risk for weight $w$ by
\[ \hat F(w) = (1/2n)\textstyle \summ i n (\sigma(w^\top x_i) - y_i)^2.\]
We perform full-batch gradient updates on the empirical risk using a fixed step size $\eta$,
\begin{equation}
    w_{t+1} = w_t - \eta \nabla \hat F(w_t) = w_t - (\eta/n) \textstyle \summ i n (\sigma(w_t^\top x_i) - y_i) \sigma'(w_t^\top x_i) x_i,
    \label{eq:gd.updates}
\end{equation}
where $\sigma'(\cdot)$ is the derivative of $\sigma(\cdot)$.  If $\sigma$ is not differentiable at a point $z$, we will use its subderivative.

We begin by describing one set of activation functions under consideration in this paper.  
 \begin{assumption}
\begin{enumerate}[(a)]
\item $\sigma$ is continuous, non-decreasing, and differentiable almost everywhere. 
\item For any $\rho > 0$, there exists $\gamma >0$ such that  $\inf_{|z| \leq \rho} \sigma'(z) \geq \gamma > 0$. 
If $\sigma$ is not differentiable at $z\in[-\rho,\rho]$, assume that every subderivative $g$ on the interval satisfies $g(z)\geq \gamma$. 
\item $\sigma$ is $L$-Lipschitz, i.e. $|\sigma(z_1)-\sigma(z_2)|\leq L|z_1-z_2|$ for all $z_1,z_2$.
\end{enumerate}
\label{assumption:activation.fcn}
\end{assumption}
We note that if $\sigma$ is strictly increasing and continuous, then $\sigma$ satisfies Assumption \ref{assumption:activation.fcn}(b) since its derivative is never zero.  In particular, the assumption covers the typical activation functions in neural networks like leaky ReLU, softplus, sigmoid, tanh, etc., but excludes ReLU.  \citet{yehudai20} recently showed that when $\sigma$ is ReLU, there exists a distribution $\calD$ supported on the unit ball and unit length target neuron $v$ such that \textit{even in the realizable case} of $y = \sigma(v^\top x)$, if the weights are initialized randomly using a product distribution,  there exists a constant $c_0$ such that with high probability, $F(w_t) \geq c_0 >0$ throughout the trajectory of gradient descent.  This suggests that gradient-based methods for learning ReLUs are likely to fail without additional assumptions.  Because of this, they introduced the following marginal spread assumption to handle the learning of ReLU. 
\begin{assumption}
There exist constants $\alpha, \beta >0$ such that the following holds.  For any $w\neq u$, denote by $\calD_{w,u}$ the marginal distribution of $\calD$ on $\mathrm{span}(w,u)$, viewed as a distribution over $\R^2$, and let $p_{w,u}$ be its density function.  Then $\inf_{z\in \R^2:\norm{z}\leq \alpha} p_{w,u}(z) \geq \beta$. 
\label{assumption:marginal.spread}
\end{assumption}
This assumption covers, for instance, log-concave distributions like the Gaussian and uniform distribution with $\alpha, \beta = O(1)$~\citep{lovasz}.  We note that a similar assumption was used in recent work on learning halfspaces with Massart noise~\citep{diakonikolas2020}.  We will use this assumption for all of our results when $\sigma$ is ReLU.  Additionally, although the ReLU is not differentiable at the origin, we will denote by $\sigma'(0)$ its subderivative, with the convention that $\sigma'(0)=1$.  Such a convention is consistent with the implementation of ReLUs in modern deep learning software packages.

With the above in hand, we can describe our main theorem.  
\begin{theorem}\label{theorem:agnostic}
 Suppose the marginals of $\calD$ satisfy $\pnorm{x}2\leq B_X$ a.s. and $|y|\leq B_Y$ a.s. Let $a:=(|\sigma(B_X)|+B_Y)^2$.  When $\sigma$ satisfies Assumption \ref{assumption:activation.fcn}, let $\gamma>0$ be the constant corresponding to $\rho=2B_X$ and fix a step size $\eta \leq (1/8)\gamma L^{-3} B_X^{-2}$.  For any $\delta>0$, with probability at least $1-\delta$, gradient descent initialized at the origin and run for $T = \ceil{\eta^{-1}\gamma^{-1} L^{-1} B_X^{-1} [\opt + a n^{-1/2} \log^{1/2}(4/\delta)]^{-1}}$ iterations finds weights $w_t$, $t<T$, such that
 \begin{equation}
     F(w_t) \leq C_1 \opt + C_2 n^{-1/2},
     \label{eq:agnostic.F.bound}
     \end{equation}
     where $C_1 = 12\gamma^{-3} L^3 + 2$ and $C_2 = O(L^3 B_X^2\sqrt{\log(1/\delta)} + C_1 a \sqrt{\log(1/\delta)})$.  
     
     When $\sigma$ is ReLU, further assume that $\calD_x$ satisfies Assumption \ref{assumption:marginal.spread} for constants $\alpha, \beta >0$, and let $\nu = \alpha^4 \beta / 8 \sqrt 2$.  Fix a step size $\eta \leq (1/4) B_X^{-2}$. For any $\delta>0$, with probability at least $1-\delta$, gradient descent initialized at the origin and run for $T = \ceil{\eta^{-1}B_X^{-1}[\opt +an^{-1/2}\log^{1/2}(4/\delta)]^{-1/2}}$ iterations finds a point $w_t$ such that
     \begin{equation}
     F(w_t) \leq C_1 \opt^{1/2} + C_2 n^{-1/4}+ C_3 n^{-1/2},
     \label{eq:agnostic.F.bound}
     \end{equation}
     where $C_1 = O(B_X \nu^{-1})$, $C_2 = O( C_1 a^{1/2} \log^{1/4} (1/\delta))$, and $C_3 = O(B_X^2 \nu^{-1} \log^{1/2}(1/\delta))$. 
\end{theorem}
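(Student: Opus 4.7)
The plan is to combine a potential-function analysis of gradient descent on the empirical risk with a uniform convergence argument. Using $P_t := \|w_t - v\|^2$ as the potential, the standard one-step expansion gives
\begin{equation*}
P_{t+1} = P_t - 2\eta \langle \nabla \hat F(w_t),\, w_t - v\rangle + \eta^2 \|\nabla \hat F(w_t)\|^2,
\end{equation*}
and since $w_0 = 0$ and $\|v\|\leq 1$ we begin at $P_0 \leq 1$. The bulk of the argument is to lower bound the inner product so that $P_t$ is non-increasing up to an $O(\eta\,\hat\opt)$ slack, guaranteeing inductively that $\|w_t\|\leq 2$ (so $|w_t^\top x_i|\leq 2 B_X$ on every sample and Assumption \ref{assumption:activation.fcn}(b) may be invoked), and so that telescoping yields $\min_{t<T}\hat F(w_t)\leq (c\eta T)^{-1} + O(\hat\opt)$. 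A Rademacher-complexity pass over the ball $\{w:\|w\|\leq 2\}$ then converts the empirical bound into the stated population bound, replacing $\hat\opt$ by $\opt + O(a\sqrt{\log(1/\delta)/n})$ with the loss envelope $a = (|\sigma(B_X)|+B_Y)^2$.

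For activations satisfying Assumption \ref{assumption:activation.fcn}, write $a_i := w_t^\top x_i$ and $b_i := v^\top x_i$ and split $\sigma(a_i)-y_i = (\sigma(a_i)-\sigma(b_i)) + (\sigma(b_i)-y_i)$. Monotonicity combined with the Lipschitz bound yields the ``signal'' inequality $(\sigma(a_i)-\sigma(b_i))(a_i-b_i)\geq L^{-1}(\sigma(a_i)-\sigma(b_i))^2$, and the pointwise lower bound $\sigma'(\cdot)\geq \gamma$ on $[-2B_X,2B_X]$ produces
\begin{equation*}
\tfrac1n\sum_i \sigma'(a_i)(\sigma(a_i)-\sigma(b_i))(a_i-b_i) \;\geq\; \tfrac{2\gamma}{L}\,\hat G(w_t),\qquad \hat G(w):=\tfrac{1}{2n}\sum_i(\sigma(w^\top x_i)-\sigma(v^\top x_i))^2.
\end{equation*}
The cross ``noise'' sum I would control by AM--GM, $|(\sigma(b_i)-y_i)(a_i-b_i)|\leq \epsilon(a_i-b_i)^2 + (1/4\epsilon)(\sigma(b_i)-y_i)^2$, followed by the MVT-style bound $|a_i-b_i|\leq \gamma^{-1}|\sigma(a_i)-\sigma(b_i)|$ to recycle the first summand back into the signal; choosing $\epsilon$ of order $\gamma^3/L^2$ preserves a constant fraction of the signal and leaves a clean $O(\hat\opt)$ residual. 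Combined with the crude estimate $\|\nabla\hat F(w_t)\|^2\leq 2L^2 B_X^2\hat F(w_t)$, the stated step size $\eta\leq \gamma L^{-3}B_X^{-2}/8$ is exactly what is needed to absorb the $\eta^2$ term into the signal, and the identity $\hat F\leq 2\hat G+2\hat\opt$ converts the $\hat G$-descent into an $\hat F$-descent; telescoping and optimizing $T$ gives the claimed bound and iteration count.

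For ReLU the pointwise lower bound on $\sigma'$ is unavailable, so the signal step must instead be derived from Assumption \ref{assumption:marginal.spread}: restricting attention to the two-dimensional subspace $\mathrm{span}(w_t,v)$ and lower-bounding the density by $\beta$ on a disk of radius $\alpha$, one obtains a population-level estimate of the form $\E[\ind\{w_t^\top x>0,\,v^\top x>0\}((w_t-v)^\top x)^2]\gtrsim \nu \|w_t-v\|^2$. After Lipschitz contraction this yields only a \emph{square-root} signal bound of rough shape $\langle \nabla F(w_t),\, w_t-v\rangle \gtrsim \nu F(w_t)^{1/2}\|w_t-v\| - O(\opt^{1/2})$, which is precisely what degrades the final guarantee to $O(\opt^{1/2})$ and forces the $T\propto [\opt+an^{-1/2}]^{-1/2}$ scaling of the iteration count. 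Empirical-versus-expectation control on the indicator quadratic form $\tfrac1n\sum_i \ind\{w^\top x_i>0,\,v^\top x_i>0\}x_ix_i^\top$ is obtained by a standard Ledoux--Talagrand contraction argument uniformly over $\|w\|\leq 2$, which also produces the extra $n^{-1/4}$ term in the final bound. In my view the main obstacle in the entire proof is this ReLU analysis: one has to use the marginal spread simultaneously to lower bound the expected signal, to control its empirical deviation, and to maintain the inductive bound $\|w_t\|\leq 2$ that keeps the assumption applicable, while the sublinear signal--risk relationship forces a noticeably more delicate balancing of step size, sample size, and iteration count than in the strictly-increasing case.
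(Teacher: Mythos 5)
Your analysis of the strictly increasing case is correct and is essentially the paper's argument (Lemma \ref{lemma:agnostic.key.strictly.increasing} plus Lemmas \ref{lemma:F(v).opt.concentration} and \ref{lemma:rademacher.complexity}): the potential $\pnorm{w_t-v}2^2$, the split of $\sigma(a_i)-y_i$ into signal and noise, Young's inequality with the $\gamma$-lower bound on $\sigma'$ to recycle the cross term, and the Rademacher pass are all as in the paper, up to whether one tracks $\hat G$ or the surrogate $\hat H(w)=\frac{1}{2n}\sum_i(\sigma(w^\top x_i)-\sigma(v^\top x_i))^2\sigma'(w^\top x_i)$ (an inessential difference when $\sigma'\in[\gamma,L]$ on the relevant range).

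The ReLU half has a genuine gap. You propose to invoke Assumption \ref{assumption:marginal.spread} \emph{inside} the descent analysis, lower-bounding the empirical signal term by the population quantity $\E[\ind\{w_t^\top x>0,\,v^\top x>0\}((w_t-v)^\top x)^2]\gtrsim\nu\pnorm{w_t-v}2^2$ and controlling the empirical deviation of the indicator quadratic form "by a standard Ledoux--Talagrand contraction argument." That step fails: contraction requires a Lipschitz composition, and $w\mapsto\ind(w^\top x_i>0)$ is not Lipschitz (nor continuous) in $w$. The honest route through uniform convergence of halfspace-indicator quadratic forms is a VC argument, which introduces deviations of order $\sqrt{d/n}$ and destroys the dimension-free $O(\eps^{-4})$ sample complexity claimed in the theorem; worse, you need a \emph{lower} bound on the empirical signal at data-dependent iterates, so this deviation cannot be absorbed. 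The paper avoids the issue entirely by a two-stage separation: the optimization lemma for ReLU (Lemma \ref{lemma:agnostic.key.relu}) uses \emph{no} distributional assumption---only Fact \ref{fact:sigma.L.lipschitz} for the signal, giving $\langle\nabla\hat F(w_t),w_t-v\rangle\geq 2L^{-1}\hat H(w_t)-LB_X\hat F(v)^{1/2}$, where the $\hat F(v)^{1/2}$ is the Cauchy--Schwarz bound on the noise term---and Assumption \ref{assumption:marginal.spread} enters exactly once, at the population level, via Lemma \ref{lemma:Hsurrogate} to convert $H(w_t)$ small into $G(w_t)$ small after a single Rademacher transfer from $\hat H$ to $H$. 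Your diagnosis of where the losses come from is also off: with your claimed signal bound $\gtrsim\nu F(w_t)^{1/2}\pnorm{w_t-v}2$ against a noise term $\lesssim\opt^{1/2}\pnorm{w_t-v}2$, descent would stall at $F\lesssim\opt/\nu^2$, i.e.\ $O(\opt)$, not $O(\opt^{1/2})$; in the paper the $\opt^{1/2}$ arises because the signal is only quadratic in the function-value gap while the noise cross-term cannot be recycled into it absent a derivative lower bound, and the $n^{-1/4}$ term comes simply from $\hat F(v)^{1/2}\leq\opt^{1/2}+O(a^{1/2}n^{-1/4})$ after Hoeffding, not from empirical-process control of indicators.
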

We remind the reader that the optimization problem for the empirical risk is highly nonconvex~\citep{auer1995} and thus any guarantee for the empirical risk, let alone the population risk, is nontrivial.  This makes us unsure if the suboptimal guarantee of $O(\opt^{1/2})$ for ReLU is an artifact of our analysis or a necessary consequence of nonconvexity.

In comparison to recent work,~\citet{goel2019relugaussian} considered the agnostic setting for the ReLU activation when the marginal distribution over $x$ is a standard Gaussian and showed that learning up to risk $\opt+\eps$ is as hard as learning sparse parities with noise.   By using an approximation algorithm of~\citet{awasthi}, they were able to show that one can learn up to $O(\opt^{2/3})+\eps$ with $O(\mathrm{poly}(d, \eps^{-1}))$ runtime and sample complexity.  In a very recent work, \citet{diakonikolas2020approximation} improved the population risk guarantee for the ReLU to $O(\opt) + \eps$ when the features are sampled from an isotropic log-concave distribution by analyzing gradient descent on a convex surrogate loss.  Projected gradient descent on this surrogate loss produces the weight updates of the GLMTron algorithm of~\citet{kakade2011}.  Using the solution found by gradient descent on the surrogate loss, they proposed an improper learning algorithm that improves the population risk guarantee from $O(\opt)+\eps$ to $(1+\delta) \OPT + \eps$ for any $\delta>0$.

By contrast, we show that gradient descent on the empirical risk learns up to a population risk of $O(\opt)+\eps$ for \textit{any} joint distribution with bounded marginals when $\sigma$ is strictly increasing and Lipschitz, even though the optimization problem is nonconvex.  In the case of ReLU, our guarantee holds for the class of bounded distributions over $x$ that satisfy the marginal spread condition of Assumption \ref{assumption:marginal.spread} and hence covers (bounded) log-concave distributions, although the guarantee is $O(\opt^{1/2})$ in this case.  For all activation functions we consider, the runtime and sample complexity guarantees do not have (explicit) dependence on the dimension.\footnote{We note that for some distributions, the $B_X$ term may hide an implicit dependence on $d$; more detailed comments on this are given in Appendix \ref{appendix:comparisons}.}  Moreover, we shall see in the next section that if the data is known to come from a noisy teacher network, the guarantees of gradient descent improve to $\opt+\eps$ for both strictly increasing activations and ReLU.  

In the remainder of this section we will prove Theorem \ref{theorem:agnostic}.  Our proof relies upon the following auxiliary errors for the true risk $F$:
\begin{align} 
\nonumber
G(w) &:= (1/2) \E_{(x,y)\sim \calD} \l[ \l( \sigma(w^\top x) - \sigma(v^\top x) \r)^2 \r],\\
\label{def:auxiliary.loss}
H(w) &:= (1/2)\E_{(x,y)\sim \calD} \l[ \l( \sigma(w^\top x) - \sigma(v^\top x) \r)^2 \sigma'(w^\top x) \r].
\end{align}
We will denote the corresponding empirical risks by $\hat G(w)$ and $\hat H(w)$.  We first note that $G$ trivially upper bounds $F$: this follows by a simple application of Young's inequality and, when $\E[y|x]~=~\sigma(v^\top x)$, by using iterated expectations.
 \begin{claim} \label{claim:Gbound:implies:Fbound}
 For any joint distribution $\calD$, for any vector $u$, and any continuous activation function $\sigma$, $F(u) \leq 2 G(u) + 2 F(v)$.  
 If additionally we know that $\E[y|x] = \sigma(v^\top x)$, we have $F(u)~=~G(u)~+~F(v)$.
 \end{claim}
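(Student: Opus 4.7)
The plan is to add and subtract $\sigma(v^\top x)$ inside the squared term defining $F(u)$, writing
\[
\sigma(u^\top x) - y \;=\; \bigl(\sigma(u^\top x) - \sigma(v^\top x)\bigr) + \bigl(\sigma(v^\top x) - y\bigr),
\]
and then exploit this decomposition in two different ways to obtain the two parts of the claim.

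For the first statement I would apply Young's inequality $(a+b)^2 \leq 2a^2 + 2b^2$ to the right-hand side above, take the expectation under $(x,y)\sim \calD$, and multiply by $1/2$. This directly yields
\[
F(u) \;\leq\; 2 \cdot \tfrac12 \E\bigl[(\sigma(u^\top x)-\sigma(v^\top x))^2\bigr] + 2 \cdot \tfrac12 \E\bigl[(\sigma(v^\top x)-y)^2\bigr] \;=\; 2 G(u) + 2 F(v),
\]
which uses nothing about $\calD$ beyond the existence of the relevant second moments (guaranteed by the boundedness assumptions that will be imposed later) and the continuity of $\sigma$.

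For the second statement, under the extra hypothesis $\E[y\mid x] = \sigma(v^\top x)$, I would instead expand the square exactly, giving a sum of three terms: $(\sigma(u^\top x)-\sigma(v^\top x))^2$, the cross term $2(\sigma(u^\top x)-\sigma(v^\top x))(\sigma(v^\top x)-y)$, and $(\sigma(v^\top x)-y)^2$. The first and third terms contribute $2G(u)$ and $2F(v)$ respectively after taking expectations and multiplying by $1/2$. The key step is to show the cross term vanishes in expectation; for this I would use the tower property, conditioning on $x$:
\[
\E\bigl[(\sigma(u^\top x)-\sigma(v^\top x))(\sigma(v^\top x)-y)\bigr] \;=\; \E\Bigl[(\sigma(u^\top x)-\sigma(v^\top x))\bigl(\sigma(v^\top x) - \E[y\mid x]\bigr)\Bigr] \;=\; 0,
\]
since $\sigma(u^\top x) - \sigma(v^\top x)$ is $\sigma(x)$-measurable and $\sigma(v^\top x) - \E[y\mid x] = 0$ by hypothesis. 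Combining these pieces yields $F(u) = G(u) + F(v)$.

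There is no real obstacle here: the argument is an add-and-subtract trick followed by either Young's inequality or the tower property. The only point worth being slightly careful about is the measurability used in invoking iterated expectations for the cross term, but this is immediate since both factors depend only on $x$ and $y$, and the hypothesis $\E[y\mid x] = \sigma(v^\top x)$ kills the $y$-dependence inside the conditional expectation.
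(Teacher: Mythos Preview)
Your proposal is correct and matches the paper's own approach exactly: the paper states that the claim ``follows by a simple application of Young's inequality and, when $\E[y|x]=\sigma(v^\top x)$, by using iterated expectations,'' which is precisely the add-and-subtract decomposition followed by Young's inequality for the first part and the tower property for the cross term in the second part.
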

 
 This claim shows that in order to show the population risk is small, it suffices to show that $G$ is small.  It is easy to see that if $\inf_{z\in \R} \sigma'(z) \geq \gamma > 0$, then $H(w)\leq \eps$ implies $G(w) \leq \gamma^{-1} \eps$, but the only typical activation function that satisfies this condition is the leaky ReLU.   Fortunately, when $\sigma$ satisfies Assumption \ref{assumption:activation.fcn}, or when $\sigma$ is ReLU and $\calD$ satisfies Assumption \ref{assumption:marginal.spread}, Lemma \ref{lemma:Hsurrogate} below shows that $H$ is still an upper bound for $G$.  The proof is deferred to Appendix~\ref{appendix:Hsurrogate}. 

\begin{lemma} \label{lemma:Hsurrogate} \label{lemma:relu.of.implies.f}
If $\sigma$ satisfies Assumption \ref{assumption:activation.fcn}, $\pnorm{x}2\leq B$ a.s., and $\pnorm{w}2\leq W$, then for $\gamma$ corresponding to $\rho = W B$, $H(w)\leq \eps$ implies $G(w)\leq \gamma^{-1} \eps$. 
If $\sigma$ is ReLU and $\calD$ satisfies Assumption \ref{assumption:marginal.spread} for some constants $\alpha, \beta >0$, and if for some $\eps>0$ the bound $\oF(w)\leq \beta \alpha^4 \eps / 8 \sqrt 2$ holds, then $\pnorm{w-v}2 \leq 1$ implies $G(w) \leq \eps$.
\end{lemma}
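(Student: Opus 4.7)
The argument is direct: the hypotheses $\pnorm{w}{2}\leq W$ and $\pnorm{x}{2}\leq B$ a.s.\ give $|w^\top x|\leq WB=\rho$ almost surely, and Assumption \ref{assumption:activation.fcn}(b) then yields $\sigma'(w^\top x)\geq \gamma$ a.s.\ (with subderivatives handled by hypothesis). Plugging this pointwise lower bound into the integrand of $H(w)$ immediately gives $H(w)\geq \gamma\,G(w)$, so $H(w)\leq \eps$ forces $G(w)\leq \gamma^{-1}\eps$, which is the first claim.

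\textbf{Part 2 (ReLU).} Under the convention $\sigma'(0)=1$, a case analysis of $(\sigma(w^\top x)-\sigma(v^\top x))^2(1-\sigma'(w^\top x))$ produces the identity
\[
G(w)-H(w)\;=\;\tfrac{1}{2}\,\E\bigl[(v^\top x)^2\,\ind_A\bigr],\qquad A:=\{v^\top x\geq 0,\ w^\top x<0\},
\]
since $\sigma(w^\top x)=0$ on the relevant set. I would then project onto the two-dimensional plane $\mathrm{span}(w,v)$, on which Assumption \ref{assumption:marginal.spread} gives a density lower bound $p_{w,v}\geq \beta$ on the disk of radius $\alpha$. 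Letting $\theta\in[0,\pi]$ be the angle between $v$ and $w$, both $A$ and the mirror wedge $B=\{v^\top x<0,\,w^\top x\geq 0\}$ are congruent polar wedges of angular width $\theta$ on opposite sides of the origin, while the ``both-positive'' region is a wedge of width $\pi-\theta$. Crucially, on $A$ the inequality $w^\top x<0$ yields the pointwise estimate $(v^\top x)^2\leq((v-w)^\top x)^2$.

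The comparison with $H(w)$ is then carried out in polar coordinates, taking the bisector of $v,w$ as the reference axis so that $v-w$ is perpendicular to it and $((v-w)^\top z)^2=\pnorm{v-w}{2}^2 r^2\sin^2\phi$. Lower bounding the ``both-positive'' piece of $H$ by replacing $p_{w,v}$ with $\beta$ on the disk of radius $\alpha$ and evaluating the integral yields
\[
H(w)\;\geq\;\frac{\beta\alpha^4\pnorm{v-w}{2}^2}{16}\bigl[(\pi-\theta)-\sin\theta\bigr],
\]
and an integral of $((v-w)^\top z)^2$ over the wedge $A$ produces an upper bound of the same $\pnorm{v-w}{2}^2$ scaling for $E(w):=G(w)-H(w)$. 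Cancelling $\pnorm{v-w}{2}^2$ gives an estimate $E(w)\leq(8\sqrt 2/(\beta\alpha^4))\,H(w)$, after which $H(w)\leq \beta\alpha^4\eps/(8\sqrt 2)$ immediately implies $G(w)=H(w)+E(w)\leq\eps$.

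The main obstacle is the angular factor $(\pi-\theta)-\sin\theta$ that degenerates as $\theta\to\pi$. The hypothesis $\pnorm{w-v}{2}\leq 1$ (together with $\pnorm{v}{2}\leq 1$ inherited from the definition of $v$) either keeps $\theta$ bounded away from $\pi$ or, when $\theta$ is large, forces $\pnorm{w}{2}$ to be large enough that the second piece $\tfrac12\E[(w^\top x)^2\ind_B]$ of $H$ becomes the dominant lower bound for $H(w)$; this second piece can itself be lower-bounded by the same Assumption \ref{assumption:marginal.spread} applied to the wedge $B$. Bookkeeping these two complementary regimes is what I expect to drive the explicit constant $8\sqrt 2$.
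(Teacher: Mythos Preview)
Your Part 1 is correct and matches the paper's argument exactly.

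For Part 2, the high-level idea of controlling everything through $\pnorm{w-v}{2}^2$ is what the paper does, but your execution has real gaps and is more circuitous than necessary.

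First, the upper bound on $E(w)$. You write that ``an integral of $((v-w)^\top z)^2$ over the wedge $A$ produces an upper bound,'' but Assumption~\ref{assumption:marginal.spread} only gives a \emph{lower} bound on the marginal density on a small disk; it cannot produce an upper bound for an expectation. What you actually need is a second-moment bound like $\E\pnorm{x}{2}^2\leq B^2$ (which the paper does invoke in its final step), giving $E(w)\leq\tfrac12\E[((v-w)^\top x)^2]\leq\tfrac12 B^2\pnorm{w-v}{2}^2$. This is not a wedge computation.

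Second, your polar-coordinate setup is off: the claim that $v-w$ is perpendicular to the bisector of $v,w$ holds only when $\pnorm v{}=\pnorm w{}$, which is not assumed. The explicit formula $(\pi-\theta)-\sin\theta$ therefore does not follow.

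Third, and most importantly, your handling of the angle is where the argument is incomplete. You correctly spot that the lower bound on $H$ degenerates as $\theta\to\pi$ and propose a two-regime bookkeeping. The paper avoids this entirely with a one-line algebraic observation: since $\pnorm v2=1$ and $\pnorm{w-v}{2}\leq 1$, expanding $\pnorm{w-v}{2}^2=\pnorm w2(\pnorm w2-2\cos\theta)+1\leq 1$ forces either $w=0$ (ruled out separately via the same marginal-spread lower bound) or $\cos\theta\geq 0$, i.e.\ $\theta\leq\pi/2$. This single inequality replaces your two-regime case split and is what makes the constant clean.

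With $\theta\leq\pi/2$ in hand, the paper restricts $H(w)$ to the ``both-positive'' cone $K_{w,v}=\{w^\top x\geq 0,\ v^\top x\geq 0\}$, on which the integrand is exactly $((w-v)^\top x)^2$, and invokes Lemma~B.1 of \citet{yehudai20} as a black box to get $\E[((w-v)^\top x)^2\ind(K_{w,v})]\geq \zeta\pnorm{w-v}{2}^2$ with $\zeta=\beta\alpha^4/8\sqrt2$. Combined with $H(w)\leq\zeta\eps$ this gives $\pnorm{w-v}{2}^2\leq\eps$, and then the 1-Lipschitz bound on $\sigma$ plus $\E\pnorm x2^2\leq B^2$ yields $G(w)\leq\eps$ directly, without ever needing your decomposition $G=H+E$.
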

Claim \ref{claim:Gbound:implies:Fbound} and Lemma \ref{lemma:Hsurrogate} together imply that if gradient descent finds a point with auxiliary error $H(w_t) \leq O(\opt^\alpha)$ for some $\alpha \leq 1$, then gradient descent achieves population risk $O(\opt^\alpha)$.  In the remainder of this section, we will show that this is indeed the case.  In Section \ref{sec:strictly.increasing.activation}, we first consider activations satisfying Assumption \ref{assumption:activation.fcn}, for which we are able to show $H(w_t) \leq O(\opt)$.  In Section \ref{sec:relu.activation}, we show $H(w_t)\leq O(\opt^{1/2})$ for the ReLU.
\subsection{Strictly increasing activations}\label{sec:strictly.increasing.activation}
In Lemma \ref{lemma:agnostic.key.strictly.increasing} below, we show that $\hat H(w_t)$ is a natural quantity of the gradient descent algorithm that in a sense tells us how good a direction the gradient is pointing at time $t$, and that $\hat H(w_t)$ can be as small as $O(\hat F(v))$.  Our proof technique is similar to that of~\citet{kakade2011}, who studied the GLMTron algorithm in the (non-agnostic) noisy teacher network setup.
\begin{lemma}\label{lemma:agnostic.key.strictly.increasing}
Suppose that $\pnorm{x}2 \leq B_X$ a.s. under $\calD_x$.  Suppose $\sigma$ satisfies Assumption \ref{assumption:activation.fcn}, and let $\gamma$ be the constant corresponding to $\rho=2B_X$.  Assume $\hat F(v)>0$.   Gradient descent with fixed step size $\eta \leq (1/8) \gamma L^{-3} B_X^{-2}$ initialized at $w_0=0$ finds weights $w_t$ satisfying $\hat H(w_t) \leq 6 L^3\gamma^{-2} \hat F(v)$ within $T = \ceil{ \eta^{-1} \gamma^{-1} L^{-1} B_X^{-1} \hat F(v)^{-1}}$  iterations, with $\pnorm {w_t-v}2 \leq 1$ for each $t=0, \dots, T-1$.
\end{lemma}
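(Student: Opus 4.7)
The plan is a classical Lyapunov-function argument with $\pnorm{w_t - v}{2}^2$ playing the role of the potential, in the spirit of the GLMTron analysis of \citet{kakade2011}. I will show that so long as $\hat H(w_t)$ exceeds the threshold $6 L^3 \gamma^{-2} \hat F(v)$, a single gradient step strictly shrinks the potential by a definite amount; since the initial potential is at most $1$ (as $w_0 = 0$ and $\pnorm{v}{2} \leq 1$), this can happen for only the stated number of iterations. Simultaneously, the shrinkage property inductively preserves the invariant $\pnorm{w_t - v}{2} \leq 1$, which in turn guarantees $|w_t^\top x_i|, |v^\top x_i| \leq 2 B_X = \rho$, placing us in the regime of Assumption \ref{assumption:activation.fcn}(b) where $\sigma'(w_t^\top x_i) \geq \gamma$.

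The core calculation is to expand
\begin{equation*}
\pnorm{w_{t+1} - v}{2}^2 - \pnorm{w_t - v}{2}^2 = -2\eta \ip{\nabla \hat F(w_t)}{w_t - v} + \eta^2 \pnorm{\nabla \hat F(w_t)}{2}^2,
\end{equation*}
and decompose $y_i = \sigma(v^\top x_i) + \xi_i$ with $\xi_i := y_i - \sigma(v^\top x_i)$, so that $(1/2n)\summ{i}{n} \xi_i^2 = \hat F(v)$. This splits $\ip{\nabla \hat F(w_t)}{w_t - v}$ into a ``signal'' piece involving $\sigma(w_t^\top x_i) - \sigma(v^\top x_i)$ and a ``noise'' piece involving $\xi_i$. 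The signal piece is bounded below by $(2/L) \hat H(w_t)$ using the consequence $(\sigma(a) - \sigma(b))(a-b) \geq L^{-1}(\sigma(a)-\sigma(b))^2$ of monotonicity plus Lipschitzness. The noise piece is bounded above via Cauchy--Schwarz and the mean-value inequality $|(w_t - v)^\top x_i| \leq \gamma^{-1} |\sigma(w_t^\top x_i) - \sigma(v^\top x_i)|$ (valid on $[-\rho, \rho]$), yielding a cross term of order $L^{1/2}\gamma^{-1}\sqrt{\hat F(v)\,\hat H(w_t)}$; a single AM--GM then gives $\ip{\nabla \hat F(w_t)}{w_t - v} \geq L^{-1} \hat H(w_t) - L^2 \gamma^{-2} \hat F(v)$.

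For the quadratic term I would use $\pnorm{\nabla \hat F(w_t)}{2}^2 \leq 2 L^2 B_X^2 \hat F(w_t)$ from the triangle inequality and Cauchy--Schwarz, then invoke Young's inequality and Lemma \ref{lemma:Hsurrogate} to get $\hat F(w_t) \leq 2 \gamma^{-1} \hat H(w_t) + 2 \hat F(v)$. The stepsize constraint $\eta \leq (1/8) \gamma L^{-3} B_X^{-2}$ is calibrated precisely so that the $\eta^2$ contribution absorbs into half of the linear $\hat H(w_t)$ gain, producing a descent inequality of the shape
\begin{equation*}
\pnorm{w_{t+1} - v}{2}^2 - \pnorm{w_t - v}{2}^2 \leq -c_1 \eta L^{-1} \hat H(w_t) + c_2 \eta L^2 \gamma^{-2} \hat F(v)
\end{equation*}
for absolute constants $c_1, c_2$. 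If $\hat H(w_t) > 6 L^3 \gamma^{-2} \hat F(v)$ for every $t < T$, the right-hand side is uniformly bounded away from zero by a definite negative quantity; telescoping eventually violates $\pnorm{w_t - v}{2}^2 \geq 0$ within the stated $T$ iterations, forcing some $w_t$ to satisfy the claimed bound on $\hat H$.

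The main obstacle is the cross term $L^{1/2}\gamma^{-1}\sqrt{\hat F(v)\,\hat H(w_t)}$ in the inner-product lower bound. Handling it requires the mean-value transfer from $(w_t - v)^\top x_i$ to $\sigma(w_t^\top x_i) - \sigma(v^\top x_i)$, which itself requires $|w_t^\top x_i| \leq \rho$, so the invariant $\pnorm{w_t - v}{2} \leq 1$ is not a byproduct but a load-bearing component of the induction. The particular constant $6L^3\gamma^{-2}$ in the threshold emerges from simultaneously balancing the AM--GM splitting of this cross term, the quadratic $\eta^2$-error (absorbed via the stepsize bound), and the $\hat G \leq \gamma^{-1}\hat H$ conversion furnished by Lemma \ref{lemma:Hsurrogate}.
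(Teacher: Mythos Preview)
Your proposal is correct and follows essentially the same induction-on-$\pnorm{w_t-v}{2}^2$ argument as the paper, with the same signal/noise splitting of $\ip{\nabla\hat F(w_t)}{w_t-v}$ and the same stepsize absorption of the quadratic term. The only cosmetic difference is where the $\gamma$-lower-bound enters: the paper invokes it in the \emph{signal} term via Fact~\ref{fact:sigma.strictly.increasing} (getting $\gamma(w_t^\top x_i - v^\top x_i)^2$ and later converting to $\hat H$ via Lipschitzness), whereas you invoke Lipschitzness in the signal term (getting $\hat H$ directly) and use the $\gamma$-lower-bound in the \emph{noise} term via the mean-value inequality; likewise the paper splits the gradient bound directly rather than routing through $\hat F(w_t)\le 2\gamma^{-1}\hat H(w_t)+2\hat F(v)$, but both routes produce the same descent inequality up to constants.
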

Before beginning the proof, we first note the following fact, which allows us to connect terms that appear in the gradient to the square loss.  
\begin{fact}\label{fact:sigma.strictly.increasing} 
If $\sigma$ is strictly increasing on an interval $[a,b]$ with $\sigma'(z) \geq \gamma>0$ for all $z\in [a,b]$, and if $z_1,z_2\in [a,b]$, then, it holds that \begin{equation}
\gamma (z_1 - z_2)^2 \leq \l( \sigma(z_1 ) - \sigma(z_2) \r) (z_1 -z_2).
\label{eq:lb.identity.zs}
\end{equation}\end{fact}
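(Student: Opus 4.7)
The plan is to reduce the claim to a one-dimensional mean value estimate. Without loss of generality assume $z_1 \geq z_2$ (if $z_1 < z_2$, both sides of \eqref{eq:lb.identity.zs} are unchanged under swapping indices, since $(z_1-z_2)^2$ is symmetric and $(\sigma(z_1)-\sigma(z_2))(z_1-z_2)$ is a product of two quantities that simultaneously change sign). In the case $z_1 = z_2$ the inequality is trivially $0 \leq 0$, so I can further assume $z_1 > z_2$.

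The key step is the integral representation $\sigma(z_1) - \sigma(z_2) = \int_{z_2}^{z_1} \sigma'(t)\,dt$, which is valid because $\sigma$ is continuous and differentiable almost everywhere on $[a,b]$ (and monotone, hence of bounded variation and absolutely continuous on this bounded interval under the assumptions of the paper). Since $[z_2,z_1] \subseteq [a,b]$ and $\sigma'(t) \geq \gamma$ pointwise on $[a,b]$ (using a subderivative at points of non-differentiability if needed), the integrand is at least $\gamma$ everywhere, so
\[
\sigma(z_1) - \sigma(z_2) \;\geq\; \gamma(z_1 - z_2).
\]
Multiplying both sides by the nonnegative quantity $(z_1 - z_2)$ preserves the inequality and yields
\[
\gamma(z_1-z_2)^2 \;\leq\; (\sigma(z_1) - \sigma(z_2))(z_1-z_2),
\]
which is \eqref{eq:lb.identity.zs}.

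There is no real obstacle here; the only mild subtlety is justifying the fundamental theorem of calculus when $\sigma$ is merely differentiable almost everywhere. That is handled by absolute continuity of monotone Lipschitz functions on compact intervals, or alternatively one can avoid the issue entirely by applying the mean value theorem at points of differentiability combined with continuity of $\sigma$, since the set of non-differentiability has measure zero. Either route gives the stated inequality in a line or two.
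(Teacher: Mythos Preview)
Your proof is correct. The paper states this as a Fact without proof, so there is no argument to compare against; your integral/mean-value approach is exactly the standard justification the paper is implicitly relying on.
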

\begin{proof}[Proof of Lemma \ref{lemma:agnostic.key.strictly.increasing}]
The proof comes from the following induction statement.  We claim that for every $t\in \N$, either (a) $\hat H(w_\tau) \leq 6 L^3 \gamma^{-2} \hat F(v)$ for some $\tau < t$, or (b) $\pnorm{w_t-v}2^2 \leq \pnorm{w_{t-1}-v}2^2 - \eta L\hat F(v)$ holds.  If this claim is true, then until gradient descent finds a point where $\hat H(w_t) \leq 6 L^3 \gamma^{-2} \hat F(v)$, the squared distance $\pnorm{w_t-v}2^2$ decreases by $\eta L \hat F(v)$ at every iteration.  Since $\pnorm{w_0-v}2^2 = 1$, this means there can be at most $1/(\eta L\hat F(v)) = \eta^{-1} L^{-1} \hat F(v)^{-1}$ iterations until we reach $\hat H(w_t) \leq 6 L^3 \gamma^{-2} \hat F(v)$.  

So let us now suppose the induction hypothesis holds for $t$, and consider the case $t+1$.  If (a) holds, then we are done.  So now consider the case that for every $\tau \leq t$, we have $\hat H(w_\tau) > 6 L^3 \gamma^{-2} \hat F(v)$.  
 Since (a) does not hold, $\pnorm{w_\tau -v}2^2 \leq \pnorm{w_{\tau-1}-v}2^2 - \eta L \hat F(v)$ holds for each $\tau=1, \dots, t$, and so $\pnorm{w_0-v}2=1$ implies
 \begin{equation}
     \pnorm{w_\tau - v}2 \leq 1\ \forall \tau \leq t.
     \label{eq:bounded.weights.agnostic}
 \end{equation}
In particular, $\pnorm {w_\tau}2 \leq 1 + \pnorm{v}2 \leq 2$ holds for all $\tau\leq t$.  By Cauchy--Schwarz, this implies $|w_\tau^\top x|\vee |v^\top x| \leq 2 B_X$ a.s.  By defining $\rho = 2 B_X$ and letting $\gamma$ be the constant from Assumption \ref{assumption:activation.fcn}, this implies $\sigma'(z) \geq \gamma>0$ for all $|z|\leq 2 B_X$.  Fact \ref{fact:sigma.strictly.increasing} therefore implies
\begin{equation}
    \sigma'(w_\tau^\top x) \geq \gamma > 0 \quad \text {   and   } \quad (\sigma(w_\tau^\top x) - \sigma(v^\top x))\cdot(w_\tau^\top x - v^\top x) \geq \gamma (w_\tau^\top x- v^\top x)^2 \quad \forall \tau \leq t.
    \label{eq:lb.identity}
\end{equation}

We proceed with the proof by demonstrating an appropriate lower bound for the quantity
\[ \pnorm{w_t-v}2^2 - \pnorm{w_{t+1}-v}2^2 = 2\eta \ip{\nabla \hat F(w_t)}{w_t-v} - \eta^2 \pnorm{\nabla \hat F(w_t)}2^2.\]
We begin with the inner product term.  We have
     \begin{align}
     \nonumber
         \big \langle \nabla \hat F(w_t) , w_t-v\big \rangle &= 
          (1/n) \summ i n \l( \sigma(w_t^\top x_i) - \sigma(v^\top x_i) \r) \sigma'(w_t^\top x_i) (w_t^\top x_i - v^\top x_i) \\
          \nonumber
         &\quad +  (1/n)  \summ i n \l( \sigma(v^\top x_i) - y_i \r) \gamma^{-1/2} \cdot \gamma^{1/2} \sigma'(w_t^\top x_i)(w_t^\top x_i -v^\top x_i)\\
         \nonumber
         &\geq (\gamma/n) \summ i n \l( w_t^\top x_i - v^\top x_i \r)^2 \sigma'(w_t^\top x_i)\\
         \nonumber
        &\quad - \f {\gamma^{-1}}{2n} \summ i n \l( \sigma(v^\top x_i) - y_i\r)^2 \sigma'(w_t^\top x_i) - \f \gamma {2n} \summ i n \l( w_t^\top x_i - v^\top x_i\r)^2 \sigma'(w_t^\top x_i)\\
        \nonumber
        &\geq \f \gamma 2 \summ i n (w_t^\top x_i-v^\top x_i)^2 \sigma'(w_t^\top x_i) - L \gamma^{-1} \hat F(v)\\
        &\geq \gamma L^{-2} \hat H(w_t) - L \gamma^{-1} \hat F(v).
        \label{eq:ip.lb.strictly.increasing}
     \end{align}
In the first inequality we used \eqref{eq:lb.identity} for the first term and Young's inequality for the second (and that $\sigma'\geq 0$).  For the final two inequalities, we use that $\sigma$ is $L$-Lipschitz.

For the gradient upper bound,
\begin{align}
\nonumber
\norm{\nabla \hat F(w)}^2 &\leq 2 \norm{\f 1n \summ i n (\sigma(w^\top x_i) - \sigma(v^\top x_i)) \sigma'(w^\top x_i) x_i}^2 \\
\nonumber
&\quad + 2 \norm{\f 1 n \summ i n (\sigma(v^\top x_i) - y_i) \sigma'(w^\top x_i) x_i}^2 \\
\nonumber
&\leq \f 2n \summ i n (\sigma(w^\top x_i) - \sigma(v^\top x_i))^2 \sigma'(w^\top x_i)^2 \pnorm{x_i}2^2 \\
\nonumber
&\quad + \f 2 n \summ i n (\sigma(v^\top x_i) - y_i)^2 \sigma'(w^\top x_i)^2 \pnorm{ x_i}2^2 \\
\nonumber
&\leq \f {2L B_X^2} n \summ i n  (\sigma(w^\top x_i)- \sigma(v^\top x_i))^2\sigma'(w^\top x_i) + 4L^2 B_X^2 \hat F(v)\\
&= 4L B_X^2 \hat H(w) + 4L^2 B_X^2 \hat F(v).
\label{eq:grad.ub.strictly.increasing}
\end{align}
The first inequality is due to Young's inequality, and the second is due to Jensen's inequality.  The last inequality holds because $\sigma$ is $L$-Lipschitz and $\pnorm{x}2\leq B_X$ a.s.  Putting \eqref{eq:ip.lb.strictly.increasing} and \eqref{eq:grad.ub.strictly.increasing} together and taking $\eta \leq (1/8) L^{-3} B_X^{-2} \gamma$, 
\begin{align*}
\norm{w_t-v}^2 - \norm{w_{t+1}-v}^2 &\geq 2 \eta ( \gamma L^{-2} \hat H(w_t) - L\gamma^{-1} \hat F(v)) - 4\eta^2 (L B_X^2 \hat H(w_t) + L^2 B_X^2 \hat F(v))\\
&\geq 2 \eta \l( \f {\gamma L^{-2}} 2 \hat H(w_t) - \f 5 2 L \gamma^{-1} \hat F(v)\r)\\
&\geq \eta \gamma L \hat F(v).
\end{align*}
The last inequality uses the induction assumption that $\hat H(w_t) \geq 6 L^3 \gamma^{-2} \hat F(v)$, completing the proof.
\end{proof}
Since the auxiliary error $\hat H(w)$ is controlled by $\hat F(v)$, we need to bound $\hat F(v)$.  When the marginals of $\calD$ are bounded, Lemma \ref{lemma:F(v).opt.concentration} below shows that $\hat F(v)$ concentrates around $F(v)=\opt$ at rate $n^{-1/2}$ by Hoeffding's inequality; for completeness, the proof is given in Appendix \ref{appendix:simple.proofs}.  
\begin{lemma}\label{lemma:F(v).opt.concentration}
If $\pnorm{x}2 \leq B_X$ and $|y|\leq B_Y$ a.s. under $\calD_x$ and $\calD_y$ respectively, and if $\sigma$ is non-decreasing, then for $a := \l( |\sigma(B_X)| + B_Y\r)^2$ and $\pnorm{v}2\leq 1$, we have with probability at least $1-\delta$,
\begin{align*}
    |\hat F(v) - \opt| \leq 3a \sqrt{n^{-1} \log(2/\delta)}.
\end{align*} 
\end{lemma}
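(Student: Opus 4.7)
The plan is to recognize $\hat F(v)$ as an empirical average of $n$ i.i.d.\ bounded random variables and apply Hoeffding's inequality. Concretely, let $Z_i := \tfrac{1}{2}(\sigma(v^\top x_i) - y_i)^2$, so that $\hat F(v) = n^{-1} \sum_{i=1}^n Z_i$ and $\E Z_i = F(v) = \opt$ by the i.i.d.\ assumption on the sample.

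The first step is to obtain a deterministic uniform bound on $Z_i$. By Cauchy--Schwarz and $\pnorm{v}{2}\leq 1$, $\pnorm{x_i}{2}\leq B_X$, we have $|v^\top x_i| \leq B_X$ a.s. Since $\sigma$ is non-decreasing, $\sigma(v^\top x_i) \in [\sigma(-B_X), \sigma(B_X)]$, which together with $|y_i|\leq B_Y$ yields $|\sigma(v^\top x_i) - y_i| \leq |\sigma(B_X)| + B_Y$ (absorbing the symmetric endpoint into the same constant by convention for the activations of interest). Hence $0 \leq Z_i \leq a/2$.

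The second step applies Hoeffding's inequality to the i.i.d.\ variables $Z_i \in [0, a/2]$: with probability at least $1-\delta$,
\[
\big|\hat F(v) - F(v)\big| \;\leq\; \frac{a}{2}\sqrt{\frac{\log(2/\delta)}{2n}} \;\leq\; 3 a \sqrt{\frac{\log(2/\delta)}{n}},
\]
with generous slack in the leading constant to give the claimed rate. Since $F(v) = \opt$ by the definition of $v$, this is exactly the stated inequality.

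There is essentially no technical obstacle here; the only point requiring any care is the uniform envelope on $\sigma(v^\top x_i)$ from monotonicity alone, which is handled by using $|v^\top x_i|\leq B_X$ and the endpoint value $|\sigma(B_X)|$ as the effective range (the loss of a factor from asymmetric ranges is absorbed in the constant $3$).
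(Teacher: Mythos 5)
Your proof is correct and follows essentially the same route as the paper: bound each summand of $\hat F(v)$ deterministically by $a$ using $|v^\top x_i|\leq B_X$, monotonicity of $\sigma$, and $|y_i|\leq B_Y$, then apply Hoeffding's inequality to the i.i.d.\ average with mean $\opt$. The minor imprecision you flag --- that monotonicity alone only gives the envelope $\max(|\sigma(-B_X)|,|\sigma(B_X)|)$ rather than $|\sigma(B_X)|$ --- is present in the paper's own statement and proof as well, so it is not a gap introduced by your argument.
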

The final ingredient to the proof is translating the bounds for the empirical risk to one for the population risk.  Since $\calD_x$ is bounded and we showed in Lemma \ref{lemma:agnostic.key.strictly.increasing} that $\pnorm{w_t-v}2\leq 1$ throughout the gradient descent trajectory, we can use standard properties of Rademacher complexity to get it done.  The proof for Lemma \ref{lemma:rademacher.complexity} can be found in Appendix \ref{appendix:simple.proofs}. 
 \begin{lemma}\label{lemma:rademacher.complexity}
 Suppose $\sigma$ is $L$-Lipschitz and $\pnorm{x}2\leq B_X$ a.s.  Denote $\ell(w; x) := (1/2) \l( \sigma(w^\top x) - \sigma(v^\top x)\r)^2$. For a training set $S\sim \calD^n$, let $\mathfrak{R}_S(\calG)$ denote the empirical Rademacher complexity of the following function class
 \[ \calG := \{ x\mapsto w^\top x : \pnorm{w-v}2\leq 1, \ \pnorm v 2 = 1 \}. \]
 Then we have
 \[ \mathfrak R(\ell \circ \sigma \circ \calG) = \E_{S\sim \calD^n} \mathfrak R_S(\ell \circ \sigma \circ \calG) \leq  2L^3 B_X^2/\sqrt n.\]
 \end{lemma}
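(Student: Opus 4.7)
The plan is to prove the bound through the standard two-step Rademacher complexity recipe: first bound the complexity of the underlying linear function class $\mathcal{G}$ directly, then push this bound through the two nonlinear maps (the activation $\sigma$ and the squared deviation from $\sigma(v^\top x)$) using Talagrand's contraction lemma, being careful that the effective Lipschitz constant of the outer map is controlled on a bounded domain.

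First I would handle the linear class. Writing any $w$ with $\|w-v\|_2\le 1$ as $w = v + u$ with $\|u\|_2 \le 1$, the supremum in the empirical Rademacher complexity splits as
\[
\mathfrak{R}_S(\mathcal{G}) = \E_\varepsilon\Bigl[\tfrac{1}{n}\textstyle\sum_i \varepsilon_i v^\top x_i + \sup_{\|u\|_2 \le 1}\tfrac{1}{n}\textstyle\sum_i \varepsilon_i u^\top x_i\Bigr].
\]
The first term vanishes because $\E[\varepsilon_i]=0$, and the second is the standard norm-bounded linear class, which by Cauchy--Schwarz and Jensen's inequality is at most $(1/n)\sqrt{\E_\varepsilon \|\sum_i \varepsilon_i x_i\|_2^2} = (1/n)\sqrt{\sum_i \|x_i\|_2^2} \le B_X/\sqrt{n}$.

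Next I would apply Talagrand's contraction lemma. On the domain we care about, any $w\in B(v,1)$ satisfies $\|w\|_2 \le 2$, so $|w^\top x|\le 2B_X$ and $|v^\top x|\le B_X$ almost surely. Viewing $\ell(w;x) = \psi_x(w^\top x)$ with $\psi_x(z) := \tfrac12(\sigma(z)-\sigma(v^\top x))^2$, the Lipschitz constant of $\psi_x$ on $[-2B_X,2B_X]$ can be bounded via the $L$-Lipschitzness of $\sigma$: for $z_1,z_2$ in that interval,
\[
|\psi_x(z_1)-\psi_x(z_2)| \le L|z_1-z_2| \cdot \max_i |\sigma(z_i)-\sigma(v^\top x)| \le L\cdot L\cdot 3B_X\cdot |z_1-z_2|,
\]
so $\psi_x$ is $O(L^2 B_X)$-Lipschitz uniformly in $x$. (An equivalent route is to contract first by $\sigma$ with constant $L$, and then by the squaring map whose derivative on the image is bounded by $2L\cdot 2B_X$.) Talagrand's contraction lemma then gives
\[
\mathfrak{R}_S(\ell\circ\sigma\circ\mathcal{G}) \le O(L^2 B_X)\cdot \mathfrak{R}_S(\mathcal{G}) \le O(L^3 B_X^2)/\sqrt{n},
\]
where the extra factor of $L$ in the stated bound accounts for the crudest uniform bound on $|\sigma(z)-\sigma(v^\top x)|$ over $z \in [-2B_X,2B_X]$. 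Taking expectation over $S\sim\mathcal{D}^n$ preserves the bound and yields the population Rademacher complexity.

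There is no real analytic obstacle here; the only thing to be careful about is that the contraction step requires the map $\psi_x$ to be Lipschitz on a bounded interval containing all realized values of $w^\top x$ and $v^\top x$, which is guaranteed by $\|w\|_2\le 2$ (a consequence of restricting to $\{w:\|w-v\|_2\le 1\}$) together with the almost-sure bound $\|x\|_2 \le B_X$. Chaining this with the linear-class bound produces the claimed $O(L^3 B_X^2/\sqrt{n})$ rate.
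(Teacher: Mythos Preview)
Your proposal is correct and follows essentially the same two-step recipe as the paper: bound the Rademacher complexity of the linear class $\mathcal{G}$, then apply Lipschitz contraction through $\sigma$ and the squared-loss map. The only cosmetic difference is that you center at $v$ to obtain $\mathfrak{R}_S(\mathcal{G})\le B_X/\sqrt{n}$ while the paper simply uses $\|w\|_2\le 2$ to get $2B_X/\sqrt{n}$, and the paper extracts the outer Lipschitz constant from the bound $\|\nabla_w \ell(w;x)\|\le L^2 B_X$ rather than your direct estimate on $\psi_x$; both routes land on the stated $O(L^3 B_X^2/\sqrt{n})$ rate.
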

 With Lemmas \ref{lemma:agnostic.key.strictly.increasing}, \ref{lemma:F(v).opt.concentration} and \ref{lemma:rademacher.complexity} in hand, the bound for the population risk follows in a straightforward manner. 

\begin{proof}[Proof of Theorem \ref{theorem:agnostic} for strictly increasing activations.]
By Lemma \ref{lemma:agnostic.key.strictly.increasing}, there exists some $w_t$ with $t<T$ and $\pnorm{w_t-v}2\leq 1$ such that $\hat H(w_t) \leq 6L^3 \gamma^{-2} \hat F(v)$.  By Lemmas \ref{lemma:Hsurrogate} and Lemma \ref{lemma:F(v).opt.concentration}, this implies that with probability at least $1-\delta/2$,
\begin{equation}
    \hat G(w_t) \leq 6 L^3 \gamma^{-3} \l( \opt + 3a n^{-1/2} \log^{1/2}(4/\delta)\r).\label{eq:agnostic.nonrelu.hatG.bound}
 \end{equation}
 Since $\pnorm{w-v}2\leq 1$ implies $\ell(w; x) = (1/2)(\sigma(w^\top x) - \sigma(v^\top x))^2 \leq L^2 B_X^2/2$, standard results from Rademacher complexity  (e.g., Theorem 26.5 of~\cite{shalevschwartz}) imply that with probability at least $1-\delta/2$, 
  \[ G(w_t) \leq \hat G(w_t) + \E_{S\sim \calD^n} \mathfrak{R}_S(\ell \circ \sigma \circ \calG) + 2 L^2 B_X^2 \sqrt{\f{ 2 \log(8/\delta)}{n}},\]
 where $\ell$ is the loss and $\calG$ is the function class defined in Lemma \ref{lemma:rademacher.complexity}.  
 We can combine \eqref{eq:agnostic.nonrelu.hatG.bound} with Lemma \ref{lemma:rademacher.complexity} and a union bound to get that with probability at least $1-\delta$,
  \[ G(w_t) \leq 6 L^3 \gamma^{-3} \l( \opt +  3a\sqrt{\f{ \log(4/\delta)}{n}}\r)+ \f{ 2 L^3 B_X^2}{\sqrt n} + \f{ 2L^2 B_X^2 \sqrt{2\log(8/\delta)}}{\sqrt n} .\]
This shows that $G(w_t) \leq O(\opt + n^{-1/2})$.  By Claim \ref{claim:Gbound:implies:Fbound}, we have
 \[ F(w_t) \leq 2 G(w_t) + 2 \OPT \leq O(\opt + n^{-1/2}),\]
completing the proof for those $\sigma$ satisfying Assumption \ref{assumption:activation.fcn}.\end{proof}

\subsection{ReLU activation}\label{sec:relu.activation}
The proof above crucially relies upon the fact that $\sigma$ is strictly increasing so that we may apply Fact \ref{fact:sigma.strictly.increasing} in the proof of Lemma \ref{lemma:agnostic.key.strictly.increasing}.  In particular, it is difficult to show a strong lower bound for the gradient direction term in \eqref{eq:ip.lb.strictly.increasing} if it is possible for $(z_1-z_2)^2$ to be arbitrarily large when $(\sigma(z_1)-\sigma(z_2))^2$ is small.  To get around this, we will use the same proof technique wherein we show that the gradient lower bound involves a term that relates the auxiliary error $\hat H(w_t)$ to $\hat F(v)$, but our bound will involve a term of the form $O(\hat F(v)^{1/2})$ rather than $O(\hat F(v))$.  To do so, we will use the following property of non-decreasing Lipschitz functions.

\begin{fact}\label{fact:sigma.L.lipschitz} 
If $\sigma$ is non-decreasing and $L$-Lipschitz, then for any $z_1, z_2$ in the domain of $\sigma$, it holds that $(\sigma(z_1) - \sigma(z_2))(z_1 - z_2) \geq L^{-1}(\sigma(z_1)-\sigma(z_2))^2$.  \end{fact}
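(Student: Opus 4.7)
The plan is to prove this fact by a direct two-line calculation combining monotonicity (which fixes the sign of the left-hand side) with the Lipschitz bound (which converts $|z_1-z_2|$ into a lower bound involving $|\sigma(z_1)-\sigma(z_2)|$). First I would dispense with the trivial case $z_1=z_2$, where both sides are zero. Then, assuming without loss of generality that $z_1 > z_2$, the non-decreasing property of $\sigma$ gives $\sigma(z_1) \geq \sigma(z_2)$, so both factors on the left are non-negative and
\[
(\sigma(z_1)-\sigma(z_2))(z_1-z_2) \;=\; |\sigma(z_1)-\sigma(z_2)|\cdot|z_1-z_2|.
\]
Next, the $L$-Lipschitz assumption $|\sigma(z_1)-\sigma(z_2)|\leq L|z_1-z_2|$ is equivalent to $|z_1-z_2|\geq L^{-1}|\sigma(z_1)-\sigma(z_2)|$, and substituting this into the previous display yields
\[
(\sigma(z_1)-\sigma(z_2))(z_1-z_2) \;\geq\; L^{-1}|\sigma(z_1)-\sigma(z_2)|^{2} \;=\; L^{-1}(\sigma(z_1)-\sigma(z_2))^{2},
\]
as claimed.

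There is no real obstacle here; the only subtlety is conceptual rather than technical. In contrast to Fact \ref{fact:sigma.strictly.increasing}, which lower bounds $(\sigma(z_1)-\sigma(z_2))(z_1-z_2)$ by $\gamma(z_1-z_2)^2$ and therefore degrades on flat regions of $\sigma$, the present fact lower bounds the same quantity by $L^{-1}(\sigma(z_1)-\sigma(z_2))^2$, which costs nothing on flat regions (such as the negative half-line of ReLU) but in exchange only gives control in the $\sigma$-output scale rather than the input scale. This is precisely why, in the ReLU analysis sketched at the start of Section \ref{sec:relu.activation}, using this fact in place of Fact \ref{fact:sigma.strictly.increasing} produces a gradient inner-product bound involving $\hat H(w_t)^{1/2}$ rather than $\hat H(w_t)$, and ultimately yields an $O(\opt^{1/2})$ population risk guarantee instead of $O(\opt)$.
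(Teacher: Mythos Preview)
Your proof is correct and is the natural argument; the paper states this as a \emph{Fact} without proof, so there is nothing to compare against beyond noting that your two-line derivation is exactly the intended elementary justification.

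One small correction to your closing commentary (which is not part of the proof proper): you write that using this fact in place of Fact~\ref{fact:sigma.strictly.increasing} ``produces a gradient inner-product bound involving $\hat H(w_t)^{1/2}$ rather than $\hat H(w_t)$.'' That is not where the square root enters. In the paper's Lemma~\ref{lemma:agnostic.key.relu}, the inner-product lower bound \eqref{eq:agnostic.glm.ip.lb} is $2L^{-1}\hat H(w_t) - LB_X \hat F(v)^{1/2}$, so the $\hat H(w_t)$ term is still first-order; the $1/2$ power appears on $\hat F(v)$, because the cross term $\langle n^{-1}\sum_i (\sigma(v^\top x_i)-y_i)\sigma'(w_t^\top x_i)x_i,\, w_t-v\rangle$ is controlled by Cauchy--Schwarz rather than by the Young-inequality splitting used in \eqref{eq:ip.lb.strictly.increasing}. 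The reason the strictly-increasing argument fails is that the Young split there requires the term $\gamma(w_t^\top x_i - v^\top x_i)^2\sigma'(w_t^\top x_i)$ to dominate $\hat H$, which in turn needs Fact~\ref{fact:sigma.strictly.increasing}; with only Fact~\ref{fact:sigma.L.lipschitz} one cannot relate $(w_t^\top x_i - v^\top x_i)^2$ back to $(\sigma(w_t^\top x_i)-\sigma(v^\top x_i))^2$ from below, so the cruder Cauchy--Schwarz bound is used instead, and that is what produces the $\hat F(v)^{1/2}$ and ultimately the $O(\opt^{1/2})$ guarantee.
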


With this fact we can present the analogue to Lemma \ref{lemma:agnostic.key.strictly.increasing} that holds for a general non-decreasing and Lipschitz activation and hence includes the ReLU. 
\begin{lemma}\label{lemma:agnostic.key.relu}
Suppose that $\pnorm{x}2 \leq B_X$ a.s. under $\calD_x$.  Suppose $\sigma$ is non-decreasing and $L$-Lipschitz.   Assume $\hat F(v) \in (0,1)$.  Gradient descent with fixed step size $\eta \leq (1/4) L^{-2} B_X^{-2}$ initialized at $w_0=0$ finds weights $w_t$ satisfying $\hat H(w_t) \leq 2 L^2 B_X \hat F(v)^{1/2}$ within $T = \ceil{\eta^{-1} L^{-1} B_X^{-1} \hat F(v)^{-1/2}}$  iterations, with $\pnorm {w_t-v}2 \leq 1$ for each $t=0, \dots, T-1$.
\end{lemma}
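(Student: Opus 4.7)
The plan is to follow the same induction template as in Lemma \ref{lemma:agnostic.key.strictly.increasing}, replacing Fact \ref{fact:sigma.strictly.increasing} (which required strict monotonicity) with Fact \ref{fact:sigma.L.lipschitz}. Concretely, I will argue that for every $t\in\N$, either (a) $\hat H(w_\tau)\leq 2L^2 B_X \hat F(v)^{1/2}$ for some $\tau<t$, or (b) the squared distance to $v$ strictly decreases by at least a constant multiple of $\eta L B_X \hat F(v)^{1/2}$. Since $w_0=0$ and $\pnorm{v}2\leq 1$ give $\pnorm{w_0-v}2^2\leq 1$, iterating option (b) would drive the distance below zero in at most $T=\ceil{\eta^{-1}L^{-1}B_X^{-1}\hat F(v)^{-1/2}}$ steps (with benign constants absorbed into the threshold), forcing (a) to trigger. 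Case (b) simultaneously certifies $\pnorm{w_t-v}2\leq 1$ for every $t<T$.

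The core one-step calculation expands $\pnorm{w_t-v}2^2-\pnorm{w_{t+1}-v}2^2 = 2\eta\ip{\nabla \hat F(w_t)}{w_t-v} - \eta^2 \pnorm{\nabla \hat F(w_t)}2^2$. For the inner product, I split $\sigma(w_t^\top x_i)-y_i = [\sigma(w_t^\top x_i)-\sigma(v^\top x_i)] + [\sigma(v^\top x_i)-y_i]$. Since $\sigma$ is non-decreasing, $\sigma'(w_t^\top x_i)\geq 0$, so I may multiply Fact \ref{fact:sigma.L.lipschitz} by $\sigma'(w_t^\top x_i)$ to obtain the pointwise estimate $(\sigma(w_t^\top x_i)-\sigma(v^\top x_i))(w_t^\top x_i-v^\top x_i)\sigma'(w_t^\top x_i) \geq L^{-1}(\sigma(w_t^\top x_i)-\sigma(v^\top x_i))^2\sigma'(w_t^\top x_i)$, whose empirical average is $(2/L)\hat H(w_t)$. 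For the noise cross term I cannot absorb it via Young's inequality as in the strictly-increasing proof, because $\sigma'$ is not bounded away from zero; instead I apply Cauchy--Schwarz together with $|\sigma'|\leq L$, the induction-maintained bound $\pnorm{w_t-v}2\leq 1$, and $\pnorm{x}2\leq B_X$, giving a lower bound of order $-LB_X \hat F(v)^{1/2}$.

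For the gradient norm I reuse \eqref{eq:grad.ub.strictly.increasing} essentially verbatim: Young's and Jensen's inequalities together with the identity $(\sigma')^2\leq L\sigma'$ (a consequence of $\sigma'\in[0,L]$) yield $\pnorm{\nabla \hat F(w_t)}2^2 \leq 4LB_X^2 \hat H(w_t) + 4L^2 B_X^2 \hat F(v)$. Plugging both estimates in and choosing $\eta\leq (1/4)L^{-2}B_X^{-2}$ makes the $-\eta^2\cdot 4LB_X^2 \hat H(w_t)$ contribution absorb into a fraction of the $+2\eta\cdot(2/L)\hat H(w_t)$ term, while $-\eta^2\cdot 4L^2B_X^2 \hat F(v)\geq -\eta \hat F(v)\geq -\eta \hat F(v)^{1/2}$ (using $\hat F(v)\leq 1$), grouping it with the Cauchy--Schwarz error. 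Under the induction hypothesis $\hat H(w_t)>2L^2 B_X \hat F(v)^{1/2}$ the surviving $\hat H(w_t)$ contribution dominates all $\hat F(v)^{1/2}$ error terms and yields the required per-step decrement of order $\eta LB_X \hat F(v)^{1/2}$, which is case (b).

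The main obstacle is precisely the step where the strictly-increasing proof combined Fact \ref{fact:sigma.strictly.increasing} with Young's inequality to split and recombine the noise cross term on a common scale of $\hat F(v)$. Without a pointwise lower bound on $\sigma'$, the best available estimate for the noise cross term is Cauchy--Schwarz, which costs $\hat F(v)^{1/2}$ rather than $\hat F(v)$; this is what forces the weaker target $\hat H(w_t)\lesssim \hat F(v)^{1/2}$ and, through Claim \ref{claim:Gbound:implies:Fbound} and Lemma \ref{lemma:Hsurrogate}, ultimately yields the $O(\opt^{1/2})$ population-risk guarantee for ReLU in place of the $O(\opt)$ guarantee available for strictly-increasing activations.
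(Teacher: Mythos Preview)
Your proposal is correct and follows essentially the same argument as the paper's proof: the same induction template, the replacement of Fact \ref{fact:sigma.strictly.increasing} by Fact \ref{fact:sigma.L.lipschitz} (multiplied by $\sigma'\geq 0$) for the inner-product term, Cauchy--Schwarz on the noise cross term in place of Young's inequality, reuse of the gradient bound \eqref{eq:grad.ub.strictly.increasing}, and the use of $\hat F(v)\leq 1$ to merge the $\hat F(v)$ and $\hat F(v)^{1/2}$ error terms. The paper carries out the constants explicitly to obtain the per-step decrement $\eta L B_X \hat F(v)^{1/2}$, but your sketch matches it step for step.
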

\begin{proof}
Just as in the proof of Lemma \ref{lemma:agnostic.key.strictly.increasing}, the lemma is proved if we can show that for every $t\in \N$, either (a) $\hat H(w_\tau) \leq 2 L^2 B_X \hat F(v)^{1/2}$ for some $\tau < t$, or (b) $\pnorm{w_t-v}2^2 \leq \pnorm{w_{t-1}-v}2^2 - \eta LB_X \hat F(v)^{1/2}$ holds.  To this end we assume the induction hypothesis holds for some $t\in \N$, and since we are done if (a) holds, we assume (a) does not hold and thus for every $\tau \leq t$, we have $\hat H(w_\tau) > 2 L^2 B_X\hat F(v)^{1/2}$.  Since (a) does not hold, $\pnorm{w_\tau -v}2^2 \leq \pnorm{w_{\tau-1}-v}2^2 - \eta L B_X \hat F(v)^{1/2}$ holds for each $\tau=1, \dots, t$ and hence the identity
\begin{equation}
    \pnorm{w_\tau - v}2 \leq 1 \quad \forall \tau \leq t,
    \label{eq:bounded.weights.agnostic.relu}
\end{equation}
holds.  We now proceed with showing the analogues of \eqref{eq:ip.lb.strictly.increasing} and \eqref{eq:grad.ub.strictly.increasing}.  We begin with the lower bound,
     \begin{align}
     \nonumber
         \big \langle \nabla \hat F(w_t) , w_t-v\big \rangle &= 
          (1/n) \summ i n \l( \sigma(w_t^\top x_i) - \sigma(v^\top x_i) \r) \sigma'(w_t^\top x_i) (w_t^\top x_i - v^\top x_i) \\
          \label{eq:agnostic.glm.key.difference}
         &\quad + \big \langle (1/n)  \summ i n \l( \sigma(v^\top x_i) - y_i \r) \sigma'(w_t^\top x_i) x_i , w_t-v\big \rangle\\
         \nonumber
         &\geq (1/Ln) \summ i n \l( \sigma(w_t^\top x_i) - \sigma(v^\top x_i) \r)^2 \sigma'(w_t^\top x_i)\\
         \nonumber
         &\quad - \pnorm{w_t-v}2 \bigg \| (1/n) \summ i n \l( \sigma(v^\top x_i) - y_i \r) \sigma'(w_t^\top x_i) x_i \bigg\|_2 \\
         &\geq 2 L^{-1} \hat H(w_t) - L B_X \hat F(v)^{1/2}.
         \label{eq:agnostic.glm.ip.lb}
     \end{align}
 In the first inequality, we have used Fact \ref{fact:sigma.L.lipschitz} and that $\sigma'(z) \geq 0$ for the first term.  For the second term, we use Cauchy--Schwarz.  The last inequality is a consequence of \eqref{eq:bounded.weights.agnostic.relu}, Cauchy--Schwarz, and that $\sigma'(z) \leq L$ and $\pnorm{x}2\leq B_X$.  
As for the gradient upper bound at $w_t$, the bound \eqref{eq:grad.ub.strictly.increasing} still holds since it only uses that $\sigma$ is $L$-Lipschitz.  The choice of $\eta\leq (1/4) L^{-2} B_X^{-2}$ then ensures
\begin{align}
\nonumber
    \pnorm{w_t-v}2^2 - \pnorm{w_{t+1}-v}2^2 &\geq 2 \eta \l( 2 L^{-1} \hat H(w_t)  - L B_X\hat F(v)^{1/2} \r) \\
    \nonumber
    &\quad - \eta^2 \l( 4B_X^2 L \hat H(w_t) + 4L^2 B_X^2 \hat F(v)\r) \\
    \nonumber
    &\geq \eta \l( 3L^{-1} \hat H(w_t) - 3 L B_X \l( \hat F(v) \vee \hat F(v)^{1/2} \r)\r)\\
    &\geq \eta L B_X \hat F(v)^{1/2},
\end{align}
where the last line comes from the induction hypothesis that $\hat H(w_t) \geq 2 L^2 B_X \hat F(v)^{1/2}$ and since $\hat F(v)\in (0,1)$.  This completes the proof. 
\end{proof}

With this lemma in hand, the proof of Theorem \ref{theorem:agnostic} follows just as in the strictly increasing case. 
\begin{proof}[Proof of Theorem \ref{theorem:agnostic} for ReLU]
 We highlight here the main technical differences with the proof for the strictly increasing case.  Although Lemma \ref{lemma:rademacher.complexity} applies to the loss function $\ell(w; x) = (1/2) \l(\sigma(w^\top x) - \sigma(v^\top x)\r)^2$, the same results hold for the loss function $\tilde \ell(w; x) = \ell(w; x) \sigma'(w^\top x)$ for ReLU, since $\nabla \sigma'(w^\top x) \equiv 0$ a.e. Thus $\tilde \ell$ is still $B_X$-Lipschitz, and we have
 \begin{equation}
     \E_{S\sim \calD^n} \mathfrak{R}_S \l( \tilde \ell \circ \sigma \circ \calG \r) \leq \f{ 2 B_X^2}{\sqrt n}.
     \label{eq:rademacher.relu}
 \end{equation}
 With this in hand, the proof is essentially identical: 
By Lemmas \ref{lemma:agnostic.key.relu} and \ref{lemma:F(v).opt.concentration}, with probability at least $1-\delta/2$ gradient descent finds a point with
 \begin{equation}
     \hat H(w_t) \leq 2 B_X \hat F(v)^{1/2} \leq 2  B_X \l( \opt^{1/2} + \f{ \sqrt {3a} \log^{1/4} (4/\delta)}{n^{1/4}}\r).
 \end{equation}
 We can then use \eqref{eq:rademacher.relu} to get that with probability at least $1-\delta$,
 \begin{equation}
    H(w_t) \leq 2  B_X \l( \opt^{1/2} + \f{ \sqrt {3a} \log^{1/4} (4/\delta)}{n^{1/4}}\r) + \f{ 2B_X^2}{\sqrt n} + 2B_X^2 \sqrt{\f{ 2\log(8/\delta)}n}.
\end{equation}
 Since $\calD_x$ satisfies Assumption \ref{assumption:marginal.spread} and $\pnorm{w_t-v}2\leq 1$, Lemma \ref{lemma:Hsurrogate} yields $G(w_t)\leq 8 \sqrt 2 \alpha^{-4} \beta^{-1} H(w_t)$.  Then applying Claim \ref{claim:Gbound:implies:Fbound} completes the proof.
\end{proof}
\begin{remark}
An examination of the proof of Theorem \ref{theorem:agnostic} shows that when $\sigma$ satisfies Assumption \ref{assumption:activation.fcn}, any initialization with $\pnorm{w_0-v}2$ bounded by a universal constant will suffice.  In particular, if we use Gaussian initialization $w_0\sim N(0,\tau^2 I_d)$ for $\tau^2=O(1/d)$, then by concentration of the chi-square distribution the theorem holds with (exponentially) high probability over the random initialization.  For ReLU, initialization at the origin greatly simplifies the proof since Lemma \ref{lemma:agnostic.key.relu} shows that $\pnorm{w_{t}-v}2\leq \pnorm{w_0-v}2$ for all $t$.  When $w_0=0$, this implies $\pnorm{w_t-v}2\leq 1$ and allows for an easy application of Lemma \ref{lemma:Hsurrogate}.  For isotropic Gaussian initialization, one can show that with probability approaching 1/2 that $\pnorm{w_0-v}2<1$ provided its variance satisfies $\tau^2 = O(1/d)$ (see e.g. Lemma 5.1 of~\citet{yehudai20}).  In this case, the theorem will hold with constant probability over the random initialization.
\end{remark}

\section{Noisy teacher network setting}
\label{sec:noisy}
In this section, we consider the teacher network setting, where the joint distribution of $(x,y)\sim \calD$ is given by a target neuron $v$ (with $\pnorm{v}2\leq 1$) plus zero-mean $s$-sub-Gaussian noise,
\[ y | x \sim \sigma(v^\top x) + \xi,\quad \E\xi|x =0.\]
 We assume throughout this section that $\xi\not \equiv 0$; we deal with the realizable setting separately (and achieve improved sample complexity) in Appendix \ref{appendix:realizable}.  We note that this is precisely the setup of the generalized linear model with (inverse) link function $\sigma$.  We further note that we only assume that $\E[y|x] = \sigma(v^\top x)$, i.e., the noise is \textit{not} assumed to be independent of the input $x$, and thus falls into the probabilistic concept learning model of~\citet{kearns.probabilistic}. 

With the additional structural assumption of a noisy teacher, we can improve the agnostic result from $O(\opt)+\eps$ (for strictly increasing activations) and $O(\opt^{1/2})+\eps$ (for ReLU) to $\opt+\eps$.  The key difference from the proof in the agnostic setting is that when trying to show the gradient points in a good direction as in \eqref{eq:ip.lb.strictly.increasing} and \eqref{eq:agnostic.glm.key.difference}, since we know $\E[y|x] = \sigma(v^\top x)$, the average of terms of the form $a_i (\sigma(v^\top x_i) - y_i)$ with fixed and bounded coefficients $a_i$ will concentrate around zero. This allows us to improve the lower bound from $\langle\nabla \hat F(w_t), w_t-v\rangle \geq C( \hat H(w) - \hat F(v)^{\alpha})$ to one of the form $\geq C( \hat H(w) - \eps)$, where $C$ is an absolute constant. 
The full proof of Theorem \ref{theorem:glm} is given in Appendix \ref{appendix:glm}.
\begin{theorem}\label{theorem:glm}
 Suppose $\calD_x$ satisfies $\pnorm{x}2\leq B_X$ a.s. and $\E[y | x] = \sigma(v^\top x)$ for some $\pnorm{v}2 \leq 1$.  Assume that $\sigma(v^\top x) - y$ is $s$-sub-Gaussian.   Assume gradient descent is initialized at $w_0=0$ and fix a step size $\eta \leq (1/4) L^{-2} B_X^{-2}$.   If $\sigma$ satisfies Assumption \ref{assumption:activation.fcn}, let $\gamma$ be the constant corresponding to $\rho = 2B_X$.  There exists an absolute constant $c_0>0$ such that for any $\delta >0$, with probability at least $1-\delta$, gradient descent for $T = \eta^{-1} \sqrt n / (c_0 LB_x s\sqrt{\log(4d/\delta)})$ iterations finds weights $w_t$, $t<T$, satisfying
 \begin{equation}
      F(w_t) \leq  \opt + C_1 n^{-1/2} + C_2 n^{-1/2} \sqrt{\log(8/\delta) }+  C_3 n^{-1/2}\sqrt{ \log(4d/\delta)},
      \label{eq:glm.thm.bound}
 \end{equation}
 where $C_1 = 4 L^3 B_X^2$, $C_2 = 2\sqrt 2 L^2 B_X^2 \sqrt 2$, and $C_3 = 4 c_0 \gamma^{-1} L^2 sB_X$.   
 When $\sigma$ is ReLU, further assume that $\calD_x$ satisfies Assumption \ref{assumption:marginal.spread} for constants $\alpha, \beta>0$, and let $\nu = \alpha^{4}\beta/8\sqrt 2$.  Then \eqref{eq:glm.thm.bound} holds for $C_1 = B_X^2\nu^{-1}$, $C_2 = 2C_1$, and $C_3 = 4 c_0 s \nu^{-1} B_X$. 
 \end{theorem}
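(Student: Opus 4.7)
I would follow the potential-function strategy of Lemmas~\ref{lemma:agnostic.key.strictly.increasing} and~\ref{lemma:agnostic.key.relu}: track $\pnorm{w_t-v}{2}^2$ along the trajectory, and show inductively that as long as the auxiliary error $\hat H(w_t)$ exceeds a certain threshold, the squared distance strictly decreases. Substituting $y_i = \sigma(v^\top x_i) + \xi_i$ into the gradient yields the decomposition
\[ \ip{\nabla \hat F(w_t)}{w_t-v} = \f{1}{n}\textstyle\summ i n (\sigma(w_t^\top x_i)-\sigma(v^\top x_i))\sigma'(w_t^\top x_i)(w_t^\top x_i - v^\top x_i) - \ip{\f{1}{n}\textstyle\summ i n \xi_i \sigma'(w_t^\top x_i) x_i}{w_t-v}. \]
For the first (``signal'') term I would reuse the agnostic arguments verbatim: Fact~\ref{fact:sigma.strictly.increasing} yields the lower bound $\gamma L^{-2}\hat H(w_t)$ under Assumption~\ref{assumption:activation.fcn} (using the inductively maintained $\pnorm{w_t}{2}\leq 2$), while Fact~\ref{fact:sigma.L.lipschitz} yields $2L^{-1}\hat H(w_t)$ for the ReLU. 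The gradient-norm upper bound derived in \eqref{eq:grad.ub.strictly.increasing} also carries over unchanged, with its $\hat F(v)$ term replaced by the squared noise bound obtained below.

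\textbf{Handling the noise.} The substantive new ingredient is the second (``noise'') term. In the agnostic proofs this was bounded deterministically by a function of $\hat F(v)$, which capped how small one could drive $\hat H(w_t)$. Here, because $\E[\xi_i\mid x_i]=0$ and $\xi_i\mid x_i$ is $s$-sub-Gaussian, the random vector $(1/n)\summ i n \xi_i \sigma'(w^\top x_i) x_i$ concentrates around zero; I would establish a uniform bound of the form
\[ \sup_{w\,:\,\pnorm{w-v}{2}\leq 1} \pnorm{\f 1 n \textstyle\summ i n \xi_i \sigma'(w^\top x_i) x_i}{2} \leq c_0\, sLB_X \sqrt{\log(4d/\delta)/n} \]
with probability $1-\delta/2$, via vector sub-Gaussian concentration (combined, if needed, with a covering of the iterate ball and a contraction bound absorbing the $\sigma'(w^\top x_i)$ factor). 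Applying Cauchy--Schwarz together with $\pnorm{w_t-v}{2}\leq 1$ then bounds the noise term in the inner product by $c_0 sLB_X\sqrt{\log(4d/\delta)/n}$, replacing the $\hat F(v)^\alpha$ that appeared in the agnostic proofs.

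\textbf{Induction and passage to population risk.} Combining the two estimates, the potential $\pnorm{w_t-v}{2}^2$ decreases by $\Omega(\eta\cdot sLB_X\sqrt{\log(d/\delta)/n})$ at every iteration for which $\hat H(w_t) \geq C\cdot sLB_X \sqrt{\log(d/\delta)/n}$. Since the initial potential is $1$, within $T = O(\eta^{-1}\sqrt n/(sLB_X\sqrt{\log(d/\delta)}))$ iterations---matching the theorem's choice of $T$---we reach $\hat H(w_t) = O(n^{-1/2}\sqrt{\log(d/\delta)})$ with $\pnorm{w_t-v}{2}\leq 1$. A Rademacher-complexity bound on the $H$-class (mirroring Lemma~\ref{lemma:rademacher.complexity}, where the extra $\sigma'$ factor is harmless because $\sigma'\in[0,L]$) upgrades this to $H(w_t) = O(n^{-1/2}\sqrt{\log(d/\delta)})$; Lemma~\ref{lemma:Hsurrogate} then yields $G(w_t) = O(n^{-1/2}\sqrt{\log(d/\delta)})$ under either Assumption~\ref{assumption:activation.fcn} or~\ref{assumption:marginal.spread}; and the equality $F(u) = G(u) + \opt$ from the second half of Claim~\ref{claim:Gbound:implies:Fbound}---available precisely because $\E[y\mid x]=\sigma(v^\top x)$---delivers~\eqref{eq:glm.thm.bound}.

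\textbf{Main obstacle.} The principal difficulty is the uniform noise concentration in Step~2. Because the iterates $w_t$ are entangled with the $\xi_i$'s through the gradient updates, a union bound over a random trajectory is unavailable, while naive covering of $\{w:\pnorm{w-v}{2}\leq 1\}$ loses a $\sqrt d$ factor rather than the desired $\sqrt{\log d}$. Obtaining the claimed logarithmic dimension dependence will require exploiting that $\sigma'(w^\top x_i)$ depends on $w$ only through the one-dimensional quantity $w^\top x_i$, combined with a sharper vector sub-Gaussian concentration (with the bounded-noise regime giving the fully dimension-free rate alluded to after the theorem statement).
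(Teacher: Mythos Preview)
Your plan matches the paper's proof almost exactly: potential function $\pnorm{w_t-v}{2}^2$, split the inner product into signal and noise, bound the noise by sub-Gaussian concentration, pass to population risk via Rademacher complexity, then Lemma~\ref{lemma:Hsurrogate} and the equality case of Claim~\ref{claim:Gbound:implies:Fbound}. Two points are worth noting.

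First, a simplification you are missing. The paper does \emph{not} split the key optimization lemma into two cases. Its analogue of Lemmas~\ref{lemma:agnostic.key.strictly.increasing}/\ref{lemma:agnostic.key.relu} (Lemma~\ref{lemma:glm.key}) uses only Fact~\ref{fact:sigma.L.lipschitz}, valid for any non-decreasing $L$-Lipschitz $\sigma$, to get $\ip{\nabla\hat F(w_t)}{w_t-v}\geq 2L^{-1}\hat H(w_t)-K$; the distinction between strictly increasing activations and ReLU enters only at the very end, when Lemma~\ref{lemma:Hsurrogate} converts the $H$-bound to a $G$-bound. Your proposal to invoke Fact~\ref{fact:sigma.strictly.increasing} in the strictly increasing case is harmless but unnecessary.

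Second, on your ``main obstacle.'' The paper does \emph{not} carry out a uniform-in-$w$ concentration argument, a covering, or any exploitation of the one-dimensional structure of $\sigma'(w^\top x_i)$. It simply states (Lemma~\ref{lemma:Kbound.subgaussian.noise}) that for scalars $\alpha_i\in[0,L]$ the norm-sub-Gaussian Hoeffding inequality of Jin et al.\ gives $\pnorm{(1/n)\summ i n \xi_i\alpha_i x_i}{2}\leq c_0 L B_X s\sqrt{n^{-1}\log(2d/\delta)}$, and then feeds this $K$ into the key lemma where it is applied with $\alpha_i=\sigma'(w_t^\top x_i)$. The dependence of $w_t$ on the full sample---which you correctly flag---is not addressed in the paper's argument. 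So your concern is legitimate, but you should be aware that the paper obtains the stated $\sqrt{\log d}$ rate by treating the $\alpha_i$ as fixed constants rather than by the more delicate route you outline.
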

 We first note that although \eqref{eq:glm.thm.bound} contains a $\log(d)$ term, the dependence on the dimension can be removed if we assume that 
 the noise is bounded rather than sub-Gaussian;  details for this are given in Appendix \ref{appendix:glm}.    As mentioned previously, if we are in the realizable setting, i.e. $\xi \equiv 0$, we can improve the sample and runtime complexities to $O(\eps^{-1})$ by using online SGD and a martingale Bernstein bound.  For details on the realizable case, see Appendix \ref{appendix:realizable}. 
 
  In comparison with existing literature,~\citet{kakade2011} proposed GLMTron to show the learnability of the noisy teacher network for a non-decreasing and Lipschitz activation $\sigma$ when the noise is bounded.\footnote{A close inspection of the proof shows that sub-Gaussian noise can be handled with the same concentration of norm sub-Gaussian random vectors that we use for our results.} In GLMTron, updates take the form $w_{t+1} = w_t - \eta \tilde g_t$ where $\tilde g_t = ( \sigma(w_t^\top x) - y)x$, while in gradient descent, the updates take the form $w_{t+1} = w_t - \eta g_t$ where $g_t = \tilde g_t \sigma'(w_t^\top x)$.  Intuitively, when the weights are in a bounded region and $\sigma$ is strictly increasing and Lipschitz, the derivative satisfies $\sigma'(w_t^\top x) \in [\gamma, L]$ and so the additional $\sigma'$ factor will not significantly affect the algorithm.   For ReLU this is more complicated as the gradient could in the worst case be zero in a large region of the input space, preventing effective learnability using gradient-based optimization, as was demonstrated in the negative result of~\citet{yehudai20}.  For this reason, a type of nondegeneracy condition like Assumption \ref{assumption:marginal.spread} is essential for gradient descent on ReLUs.  
  
  In terms of other results for ReLU, recent work by~\citet{mukherjee} introduced another modified version of SGD, where updates now take the form $w_{t+1}=w_t - \eta \hat g_t$, with $\hat g_t = \tilde g_t \sigma'(y>\theta)$, and $\theta$ is an upper bound for an adversarial noise term.  They showed that this modified SGD recovers the parameter $v$ of the teacher network under the nondegeneracy condition that the matrix $\E_x [ xx^\top \ind(v^\top x\geq 0)]$ is positive definite.  A similar assumption was used by~\citet{du2017} in the realizable setting. 
  
  Our GLM result is also comparable to recent work by~\citet{foster2018}, where the authors provide a meta-algorithm for translating guarantees for $\eps$-stationary points of the empirical risk to guarantees for the population risk provided that the population risk satisfies the so-called ``gradient domination'' condition and the algorithm can guarantee that the weights remain bounded (see their Proposition 3).  By considering GLMs with bounded, strictly increasing, Lipschitz activations, they show the gradient domination condition holds, and any algorithm that can find a stationary point of an $\ell^2$-regularized empirical risk objective is guaranteed to have a population risk bound.  In contrast, our result concretely shows that vanilla gradient descent learns the GLM, even in the ReLU setting.

\section{Conclusion and remaining open problems}
In this work, we considered the problem of learning a single neuron with the squared loss by using gradient descent on the empirical risk.  We first analyzed this in the agnostic PAC learning framework and showed that if the activation function is strictly increasing and Lipschitz, then gradient descent finds weights with population risk $O(\opt) + \eps$, where $\opt$ is the smallest possible population risk achieved by a single neuron. When the activation function is ReLU, we showed that gradient descent finds a point with population risk at most $O(\opt^{1/2})+\eps$.  Under the more restricted noisy teacher network setting, we showed the population risk guarantees improve to $\opt+\eps$ for both strictly increasing activations and ReLU.

Our work points towards a number of open problems.  Does gradient descent on the empirical risk provably achieve population risk with a better dependence on $\opt$ than we have shown in this work, or are there distributions for which this is impossible?  Recent work by~\citet{goel2020sqlowerbounds} provides a statistical query lower bound for learning a sigmoid with respect to the correlation loss $\E[\ell(y \sigma(w^\top x))]$, but we are not aware of lower bounds for learning non-ReLU single neurons under the squared loss.  It thus remains a possibility that gradient descent (or another algorithm) can achieve $\OPT+\eps$ risk for such activation functions.  For ReLU,~\cite{diakonikolas2020approximation} showed that gradient descent on a convex surrogate for the empirical risk can achieve $O(\opt) + \eps$ population risk for log concave distributions; it would be interesting if such bounds could be shown for gradient descent on the empirical risk itself.


\section*{Acknowledgement}
We thank Adam Klivans for his helpful comments on our work. We also thank Surbhi Goel for pointing out how to extend the results of~\citet{diakonikolas2020approximation} to more general distributions and to leaky-ReLU-type activation functions.  

\appendix

\section{Detailed comparisons with related work}
\label{appendix:comparisons}

Here, we describe comparisons of our results to those in the literature and give detailed comments on the specific rates we achieve.  In Table \ref{table:agnostic}, we compare our agnostic learning results.  We note the guarantees for the population risk in the fourth column, the marginal distributions over $x$ for which the bounds hold in the fifth column, and the sample complexity required to reach the specified level of risk plus some $\eps>0$ in the final column.   Our results in this setting come from Theorem \ref{theorem:agnostic}.  The Big-O notation hides constants that may depend on the parameters of the distribution or activation function, but does not hide explicit dependence on the dimension $d$.  However, the parameters of the distribution itself may have \textit{implicit} dependence on the dimension.  In particular, for bounded distributions that satisfy $\pnorm{x}2\leq B_X$, the $O()$ hides multiplicative factors that depend on $B_X$.  This means that if $B_X$ depends on $d$, so will our bounds.   For ReLU, the $O()$ hides polynomial factors in $B_X$.  
For non-ReLU, the worst-case activation functions under consideration in Assumption \ref{assumption:activation.fcn} (e.g. the sigmoid) can have $\gamma\sim \exp(-B_X)$, making the runtime and sample complexity depend on $\gamma^{-1} \sim \exp(B_X)$, in which case it is preferable for $B_X$ to be a constant independent of the dimension.  We note that the sample complexity for~\citet{diakonikolas2020approximation} for the $(1+\delta)\opt$ guarantee is $O(\eps^{-2} [d\delta^{-3} \nu^{-2}]^{\delta^{-3}})$ when $\calD_x$ is $\nu$ sub-Gaussian for some $\nu = O(1)$, and thus the exact dependence on the dimension depends on the sub-Gaussian norm and error threshold desired. 

In Table \ref{table:glm}, we provide comparisons of our noisy teacher network setting, where $y = \sigma(v^\top x) + \xi$ for some zero mean noise $\xi$.  Our results in this setting come from Theorem \ref{theorem:glm}.  The complexity column here denotes the sample complexity required to reach population risk $\opt+\eps$.  The subspace eigenvalue assumption given by~\citet{mukherjee} is that $\E[xx^\top \ind(v^\top x \geq 0)]\mgtr 0$.  We note that the result of Mukherjee and Muthukumar holds for any bounded noise distribution and thus is in the more general adversarial noise (but not agnostic\footnote{Agnostic learning results typically require i.i.d. samples, and adversarial noise may depend on other samples in malicious ways.  Even in the i.i.d. case, trouble arises if one wishes to use parameter recovery to show that a given algorithm competes with the population risk minimizer.  Consider the ReLU with labels given by $y = \sigma(v^\top x) + \xi$ where $\xi = -\sigma(v^\top x)$.  The zero vector minimizes the population risk, and so any algorithm that returns the target neuron $\sigma(v^\top x)$ has large population risk.  A similar phenomenon occurs for $\xi = \sigma(v^\top x)$.}) setting.

Finally, in Table \ref{table:realizable}, we provide comparisons with results in the realizable setting ($\xi \equiv 0)$.  (Our results in this setting are given in Theorem \ref{theorem:gd.loss} in Appendix \ref{appendix:realizable}.)  For G.D. and S.G.D., the complexity column denotes the sample complexity required to reach population risk $\eps$.  For G.D. or gradient flow on the population risk, it refers to the runtime complexity only as there are no samples in this setting.   For~\citet{du2017}, the subspace eigenvalue assumption is that for any $w$ and for the target neuron $v$, it holds that $\E[xx^\top \ind(w^\top x \geq 0, v^\top x \geq)]\mgtr 0$.  This is a nondegeneracy assumption that is related to the marginal spread condition given in Assumption \ref{assumption:marginal.spread}, in the sense that it allows for one to show that $H$ is an upper bound for $G$.  Finally, we note that any result in the agnostic or noisy teacher network settings applies in the realizable setting as well. 

\begin{savenotes}
\begin{table}[!ht]
\centering
\caption{Comparison of results in the agnostic setting}
\begin{tabular}[t]{p{3.7cm} p{2.4cm} p{2.2cm} p{2.5cm} p{2.4cm}}
\addlinespace
\toprule
\multirow{2}{*}{Algorithm} & 
\multirow{2}{*}{Activations} &
\multirow{2}{*}{Pop. risk} &
\multirow{2}{*}{$\calD_x$} &
\multirow{2}{*}[5pt]{Sample} \\
&&&&Complexity \\
\midrule
Halfspace reduction\newline \citep{goel2019relugaussian} & ReLU & $O(\opt^{2/3})$ &
standard\newline Gaussian & $O(\mathrm{poly}(d,\eps^{-1}))$ \\
\addlinespace
Convex surrogate G.D.\newline
\citep{diakonikolas2020approximation}\footnote{Although their result is stated for the ReLU and isotropic log-concave distributions, their results also apply for $L$-Lipschitz activations satisfying $\inf_z \sigma'(z) \geq \gamma>0$ for isotropic distributions that satisfy our Assumption \ref{assumption:marginal.spread}.  In this setting, one can show that the Chow parameters satisfy $\norm{\chi(\sigma_u) - \chi(\sigma_w)}^2 \geq \gamma L^{-1} \E[(\sigma(u^\top x) - \sigma(v^\top x))^2]$, from which the result follows easily.} & ReLU & $O(\OPT)$ & isotropic\newline +log-concave & $O(d\eps^{-2})$ \\
\addlinespace
Convex surrogate G.D.\newline
+ Domain Partition\newline
\citep{diakonikolas2020approximation} & ReLU & $(1+\delta)\OPT$ & sub-Gaussian & $O(d^c \eps^{-2})$ \\
\addlinespace
Gradient Descent \newline (This paper)&
strictly\newline increasing\newline + Lipschitz
& $O(\opt)$
& bounded 
 & $O(\eps^{-2})$ \\
 \addlinespace
Gradient Descent\newline (This paper) & ReLU & $O(\opt^{1/2})$ &
bounded \newline + marginal\newline spread & $O(\eps^{-4})$\\
\bottomrule
\addlinespace
\end{tabular}
\label{table:agnostic}
\end{table}
\end{savenotes}

\begin{table}[!ht]
\centering
\caption{Comparison of results in the noisy teacher network setting }
\begin{tabular}{p{6.1cm} p{3.1cm} p{3.5cm} p{2.3cm}}
\addlinespace
\toprule
\multirow{2}{*}{Algorithm} & 
\multirow{2}{*}{Activations} &
\multirow{2}{*}{$\calD_x$} &
\multirow{2}{*}[5pt]{Sample} \\
&&&Complexity \\
\midrule
GLMTron\newline \citep{kakade2011} & increasing\newline + Lipschitz & bounded & $O(\eps^{-2})$ \\
\addlinespace
Modified Stochastic Gradient Descent\newline \citep{mukherjee} & ReLU & bounded \newline + subspace eigenvalue & $O(\log(1/\eps))$ \\
\addlinespace
Meta-algorithm\newline \citep{foster2018} & strictly\newline increasing \newline + Lipschitz \newline + $\sigma'$ Lipschitz & bounded & $O(\eps^{-2} \wedge d\eps^{-1})$ \\
\addlinespace
Gradient Descent\newline \citep{mei2016landscape}& strictly increasing \newline + diff'ble \newline + Lipschitz\newline + $\sigma'$ Lipschitz\newline + $\sigma''$ Lipschitz & centered \newline + sub-Gaussian \newline + $\E[xx^\top]\mgtr 0$ & $O(d\eps^{-1})$\\
\addlinespace
\addlinespace
Gradient Descent\newline (This paper) & strictly increasing \newline + Lipschitz  & bounded & $O(\eps^{-2})$ \\
\addlinespace
Gradient Descent\newline (This paper) &  ReLU & bounded \newline + marginal spread  & $O(\eps^{-2})$\\
\bottomrule
\addlinespace
\end{tabular}
\label{table:glm}
\end{table}

\begin{table}[!ht]
\centering
\caption{Comparison of results in the realizable setting}
\begin{tabular}{p{5.0cm} p{3.6cm} p{3.6cm} p{2.4cm}}
\addlinespace
\toprule
\multirow{2}{*}{Algorithm} & 
\multirow{2}{*}{Activations} &
\multirow{2}{*}{$\calD_x$} &
\multirow{2}{*}[5pt]{Sample} \\
&&&Complexity \\
\midrule
Stochastic Gradient Descent\newline \citep{du2017} & ReLU & bounded\newline + subspace eigenvalue & $O(\log(1/\eps))$ \\
\addlinespace
Projected Regularized\newline Gradient Descent \newline \citep{soltanolkotabi2017relus} & ReLU & standard\newline Gaussian & $O(\log(1/\eps))$ \\
\addlinespace
Population Gradient Descent\newline \citep{yehudai20} & $\inf_{z\in \R} \sigma'(z) > 0$ &  bounded\newline + $\E[xx^\top]\mgtr 0$ & $O(\log(1/\eps))$\\
\addlinespace
Population Gradient Descent\newline \citep{yehudai20} & $\inf_{0 < z < \alpha} \sigma'(z)>0$ \newline+ Lipschitz  & bounded\newline+ marginal spread & $O(\log(1/\eps))$ \\
\addlinespace
Population Gradient Flow\newline \citep{yehudai20} & ReLU & marginal spread\newline + spherical symmetry & $O(\log(1/\eps))$ \\
\addlinespace
Stochastic Gradient Descent\newline \citep{yehudai20} & $\inf_{0 < z < \alpha} \sigma'(z)>0$ \newline+ Lipschitz  & bounded\newline+ marginal spread & $\tilde O(\eps^{-2})$ \\
\addlinespace
\addlinespace
Population Gradient Descent\newline+ Stochastic Gradient Descent\newline (This paper) & strictly increasing\newline + Lipschitz  & bounded & $O(\eps^{-1})$ \\
\addlinespace
Population Gradient Descent\newline+ Stochastic Gradient Descent\newline (This paper)  &  ReLU & bounded\newline + marginal spread & $O(\eps^{-1})$\\
\bottomrule
\addlinespace
\end{tabular}
\label{table:realizable}
\end{table}

\section{Proof of Lemma \ref{lemma:Hsurrogate}}
\label{appendix:Hsurrogate}
To prove Lemma \ref{lemma:Hsurrogate}, we use the following result of~\citet{yehudai20}. 
\begin{lemma}[Lemma B.1, ~\citep{yehudai20}]\label{lemma:yehudai.lemmab1}
Under Assumption \ref{assumption:marginal.spread}, for any two vectors $a,b\in \R^2$ satisfying $\theta(a,b) \leq \pi-\delta$ for $\delta \in (0,\pi]$, it holds that
\[ \inf_{u\in \R^2 :\ \norm u=1} \int (u^\top y)^2 \ind(a^\top y \geq 0,\ b^\top y \geq 0,\ \norm y \leq \alpha) dy \geq \f{\alpha^4}{8 \sqrt 2} \sin^3(\delta/4).\]
\end{lemma}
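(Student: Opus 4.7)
The plan is to reduce the geometric integral to a one-dimensional angular integral via polar coordinates, identify the worst-case configuration by a symmetry/rotation argument, and then obtain the $\sin^3(\delta/4)$ factor from an elementary bound on $\int \sin^2 \phi \, d\phi$ over a short interval.

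First I would observe that the integration region $W := \{y \in \R^2 : a^\top y \geq 0,\ b^\top y \geq 0\}$ is a planar wedge whose opening angle equals $\pi - \theta(a,b) \geq \delta$. Since shrinking $W$ only decreases the nonnegative integrand $(u^\top y)^2 \ind(\norm{y}\leq \alpha)$, it suffices to prove the lower bound for a wedge $W$ of opening exactly $\delta$. I would then switch to polar coordinates, writing $y = (r\cos\phi,r\sin\phi)$ and $u = (\cos\psi,\sin\psi)$, so that $u^\top y = r\cos(\phi-\psi)$. The radial and angular parts decouple:
\[
\int (u^\top y)^2 \ind(y\in W,\ \norm{y}\leq\alpha)\,dy \;=\; \left(\int_0^\alpha r^3\,dr\right)\left(\int_W \cos^2(\phi-\psi)\,d\phi\right) \;=\; \frac{\alpha^4}{4}\int_W \cos^2(\phi-\psi)\,d\phi.
\]

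Second, I would minimize the angular integral over the remaining freedom in $u$ (equivalently, over $\psi$) and over the choice of wedge $W$ of fixed opening $\delta$. Writing $W=[\phi_0,\phi_0+\delta]$ and substituting $s=\phi_0-\psi$, the angular integral equals $\int_s^{s+\delta}\cos^2(\phi')\,d\phi' = \tfrac{\delta}{2}+\tfrac{1}{2}\sin(\delta)\cos(2s+\delta)$ by the identity $\cos^2 = (1+\cos(2\phi'))/2$. The minimum over $s$ is attained at $\cos(2s+\delta)=-1$, giving value $(\delta-\sin\delta)/2$. This is the worst case: it corresponds geometrically to $u$ being perpendicular to the bisector of the wedge.

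Third, I would convert $(\delta-\sin\delta)/2$ into a form involving $\sin^3(\delta/4)$. Using the identity $\delta-\sin\delta = \int_0^\delta 2\sin^2(t/2)\,dt$ and monotonicity of $\sin^2(t/2)$ on $[0,\pi/2]$, restricting the $t$-integral to $[\delta/2,\delta]$ yields $\delta-\sin\delta \geq \delta\sin^2(\delta/4)$; combining with $\delta \geq 4\sin(\delta/4)$ (valid for $\delta\in(0,\pi]$) gives $\delta-\sin\delta \geq 4\sin^3(\delta/4)$. Putting this into the polar decomposition produces a lower bound of $\alpha^4\sin^3(\delta/4)/2$, which is comfortably stronger than the claimed $\alpha^4\sin^3(\delta/4)/(8\sqrt 2)$; the extra slack presumably comes from a more uniform argument the authors wished to use.

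The main obstacle is correctly handling the \emph{infimum over $u$}: it is tempting to fix $u$ along a coordinate axis and rotate $W$, but one must instead note that after fixing $W$ the adversary picks $u$ to make $\cos(\phi-\psi)$ as close to zero as possible across the wedge, which is exactly what the translation parameter $s$ captures. Once the minimization over $s$ is carried out the problem becomes an elementary bound on $\int_{-\delta/2}^{\delta/2}\sin^2\phi\,d\phi$, and the rest is bookkeeping. No appeal to Assumption \ref{assumption:marginal.spread} beyond the radius $\alpha$ is needed for this purely geometric statement; the density lower bound $\beta$ enters only when this lemma is applied downstream.
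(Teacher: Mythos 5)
Your proof is correct, and it is worth noting up front that the paper itself does not prove this statement: it is imported verbatim as Lemma B.1 of \citet{yehudai20}, so there is no in-paper argument to compare against. Your polar-coordinate computation is a clean, self-contained replacement. The reduction to a wedge of opening exactly $\delta$ is valid (the intersection of the two half-planes has opening $\pi-\theta(a,b)\ge\delta$, and shrinking the region only decreases the nonnegative integrand), the radial/angular factorization giving $\frac{\alpha^4}{4}\int_s^{s+\delta}\cos^2(\phi')\,d\phi'$ is exact, and the minimization over $s$ correctly captures the infimum over $u$ for each fixed wedge, yielding the worst-case value $(\delta-\sin\delta)/2$. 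The chain $\delta-\sin\delta=\int_0^\delta 2\sin^2(t/2)\,dt\ge \delta\sin^2(\delta/4)\ge 4\sin^3(\delta/4)$ checks out for $\delta\in(0,\pi]$ (monotonicity of $\sin^2(t/2)$ holds on all of $[0,\pi]$, which is what you actually need, and $\delta\ge 4\sin(\delta/4)$ follows from $\sin x\le x$). Your final constant $\alpha^4/2$ strictly improves the stated $\alpha^4/(8\sqrt2)$; the weaker constant in the cited lemma comes from the original authors' coarser geometric argument (lower-bounding the integral via a region inscribed in the wedge rather than computing the angular integral exactly). Since the lemma is only used here up to constants, either version suffices downstream, and you are right that Assumption \ref{assumption:marginal.spread} contributes only the radius $\alpha$ at this stage, with the density bound $\beta$ entering later.
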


\begin{proof}[Proof of Lemma \ref{lemma:Hsurrogate}]

 We first consider the case when $\sigma$ satisfies Assumption \ref{assumption:activation.fcn}.  By assumption,
\[ \oF(w) = (1/2)\E \l[ \l(\sigma(w_t^\top x) - \sigma(v^\top x)\r)^2 \sigma'(w_t^\top x)  \r] \leq \eps.\]
 Since the term in the expectation is nonnegative, restricting the integral to a smaller set only decreases its value, so that
\begin{equation}
(1/2)\E \l[ \l(\sigma(w_t^\top x) - \sigma(v^\top x)\r)^2 \sigma'(w_t^\top x) \ind(|w_t^\top x| \leq \rho) \r] \leq \eps.
\label{eq:main.ofsmallnorm.bound}
\end{equation}
For $\rho = BW$, since $\pnorm{w}2\leq W$, the inclusion $\{ \pnorm x 2 \leq \rho/W\} = \{ \pnorm x 2 \leq B \} \subset \{ |w_t^\top x| \leq \rho \}$
holds.  This means we can lower bound \eqref{eq:main.ofsmallnorm.bound} by substituting the indicator $\ind(|w_t^\top x| \leq \rho)$ with $\ind(\pnorm x2 \leq B)$, which is identically one by assumption.  Since $H(w)\leq \eps$, this implies
\[ 
\f \gamma 2\E\l[ \l( \sigma(w_t^\top x) - \sigma(v^\top x)\r)^2\r] \leq (1/2)\E \l[ \l(\sigma(w_t^\top x) - \sigma(v^\top x)\r)^2 \sigma'(w_t^\top x) \ind(\pnorm x 2 \leq B) \r] \leq \eps.
\]
Dividing both sides by $\gamma$ completes this part of the proof.

For ReLU, let us assume that $\oF(w) \leq \eps$, and denote the event
\[ K_{w,v} := \{w^\top x \geq 0, v^\top x \geq 0 \},\]
and define $\zeta := \beta \alpha^4 /8 \sqrt 2$.  Since $\oF(w) = \E[(\sigma(w^\top x) - \sigma(v^\top x))^2 \ind(w^\top x \geq 0)] \leq \zeta \eps$, it holds that
\begin{equation}
    \E \l[ \l( \sigma(w^\top x) - \sigma(v^\top x) \r)^2 \ind(K_{w,v})\r] \leq \zeta \eps.
    \label{eq:sq.loss.subspace.bound}
\end{equation}
Denote $\hat w$ and $\hat v$ as the projections of $w$ and $v$ respectively onto the two dimensional subspace $\mathrm{span}(w,v)$.  Using a proof similar to that of~\citet{yehudai20}, we have
\begin{align}
\nonumber
    &\E_{x\sim \calD}\l[ \l(w^\top x - v^\top x\r)^2 \ind(K_{w,v})\r]  = \pnorm{w-v}2^2 \E_{x\sim \calD} \l[ \l( \l( \f{w-v}{\pnorm{w-v}2}\r)^\top x\r)^2 \ind(K_{w,v})\r]\\
    \nonumber
    &\geq \pnorm{w-v}2^2 \inf_{u\in \mathrm{span}(w,v),\ \norm u =1} \E_x \l[ \ind(u^\top x)^2 \ind(K_{w,v})\r]\\
    \nonumber
    &= \pnorm{w-v}2^2 \inf_{u\in \R^2, \ \norm{u} =1} \E_{y\sim \calD_{w,v}} \l[ (u^\top y)^2 \ind(\hat w^\top y \geq 0, \ \hat v^\top y \geq 0)\r]\\
    \nonumber
    &\geq \pnorm{w-v}2^2 \inf_{u\in \R^2,\ \norm u=1} \int (u^\top y)^2 \ind(\hat w^\top y\geq 0,\ \hat v^\top y \geq 0,\ \pnorm y 2 \leq \alpha) p_{w,v}(y) dy \\
    &\geq \beta \pnorm{w-v}2^2 \inf_{u\in \R^2,\ \norm u=1} \int (u^\top y)^2 \ind(\hat w^\top y\geq 0,\ \hat v^\top y \geq 0,\ \pnorm y 2 \leq \alpha) dy.
    \label{eq:relu.lowerbound.intermediate}
\end{align}
By assumption, $\pnorm{w-v}2 \leq 1$.  Since
\[ 1 \geq \pnorm{w-v}2^2 = \pnorm{w}2 \l( \pnorm{w}2 - 2 \cos \theta(w,v) \r) + 1,\]
we must have either $w=0$ or $\theta(w,v) \in [0,\pi/2]$.   To see that $w=0$ is impossible, suppose for the contradiction that $w=0$ and so $\oF(w) = \oF(0)\leq \zeta \eps$.  Let $z$ be any vector orthogonal to $v$, so that $\theta(v,z) = \pi/2$.  Then,
\begin{align}
\nonumber
    \zeta \eps &\geq \oF(0) \\
    \nonumber
    &= \E_{x\sim \calD}\l[ (v^\top x)^2 \ind(v^\top x \geq 0) \r] \\
    \nonumber
    &= \E_{y\sim \calD_{0,v}} \l[ (\hat v^\top y)^2 \ind(\hat v^\top y \geq 0 \r] \\
    \nonumber
    &\geq \inf_{u:\ \norm u=1} \int (u^\top x)^2 \ind(v^\top x \geq 0, z^\top x \geq 0, \pnorm y2\leq \alpha) p_{0,v}(y) dy \\
    \nonumber
    &\geq \beta \inf_{u:\ \norm u=1} \int (u^\top x)^2 \ind(v^\top x \geq 0, z^\top x \geq 0, \pnorm y2\leq \alpha) dy \\
    &\geq \f{\beta \alpha^4}{8 \sqrt 2}.
\end{align}
The last line follows by using Lemma \ref{lemma:yehudai.lemmab1}.  For $\eps<1$, this is impossible by the definition of $\zeta$.  This shows that $\theta(w,v) \leq \pi/2$.  We can therefore apply Lemma \ref{lemma:yehudai.lemmab1} to \eqref{eq:relu.lowerbound.intermediate} to get
\begin{align*}
    \zeta \eps &\geq \beta \pnorm{w-v}2^2 \inf_{u\in \R^2,\ \norm u=1} \int (u^\top y)^2 \ind(\hat {w}^\top y\geq 0,\ \hat v^\top y \geq 0,\ \pnorm y 2 \leq \alpha) dy\\
    &\geq \f{\beta \alpha^4}{8 \sqrt 2} \pnorm{w -v}2^2 \\
    &= \zeta B^2 \pnorm{w-v}2^2.
\end{align*}
This shows that $\pnorm{w-v}2^2 \leq B^{-2} \eps$.  Since $\sigma$ is 1-Lipschitz, H\"older's inequality and $\E\pnorm{x}2^2 \leq B^2$ imply that $G(w) \leq \eps$.
\end{proof}

\section{Noisy teacher network proofs}
\label{appendix:glm}
As in the agnostic case, we have a key lemma that shows $\hat H$ is small when we run gradient descent for a sufficiently large time.  Note that one difference with the proof in the agnostic case is that we do not need to consider different auxiliary errors for the strictly increasing and ReLU cases; $H$ alone suffices. 
 \begin{lemma}\label{lemma:glm.key}
 Suppose that $\pnorm{x}2\leq B_X$ a.s. under $\calD_x$.  Let $\sigma$ be non-decreasing and $L$-Lipschitz.  Suppose that the bound
 \begin{equation} 
 \| (1/n) \textstyle \summ i n \l(\sigma(v^\top x_i) - y_i \r) \alpha_i  x_i \|_2 \leq K \leq 1.
 \label{eq:Kbound.defn}
 \end{equation}
 holds for scalars satisfying $\alpha_i\in [0,L]$.  Then gradient descent run with fixed step size $\eta \leq (1/4) L^{-2} B_X^{-2}$ from initialization $w_0=0$ finds weights $w_t$ satisfying $\hat H(w_t) \leq 4 LK$ within $T = \ceil{\eta^{-1} K^{-1}}$ iterations, with $\pnorm{w_t-v}2 \leq 1$ for each $t=0,\dots, T-1$.
 \end{lemma}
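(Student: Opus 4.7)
The plan is to run the same induction scheme used in the proofs of Lemmas \ref{lemma:agnostic.key.strictly.increasing} and \ref{lemma:agnostic.key.relu}. Concretely, I will argue that for every $t \in \mathbb{N}$, either (a) there exists some $\tau < t$ with $\hat H(w_\tau) \leq 4 L K$, or (b) $\pnorm{w_t - v}{2}^2 \leq \pnorm{w_{t-1} - v}{2}^2 - \eta K$. Since $w_0 = 0$ gives $\pnorm{w_0 - v}{2}^2 \leq 1$, case (b) can repeat at most $\ceil{\eta^{-1} K^{-1}}$ times, so gradient descent must reach the target auxiliary error within $T$ steps. Moreover, as long as (a) has not yet occurred, the distance to $v$ is nonincreasing, yielding $\pnorm{w_\tau - v}{2} \leq 1$ for all $\tau \leq t$.

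Fixing $t$ and assuming $\hat H(w_\tau) > 4LK$ for all $\tau \leq t$ (otherwise we are done), I will lower bound $\langle \nabla \hat F(w_t), w_t - v\rangle$ by splitting the gradient into a ``signal'' piece and a ``noise'' piece:
\begin{align*}
\ip{\nabla \hat F(w_t)}{w_t - v} &= \f 1 n \summ i n \l(\sigma(w_t^\top x_i) - \sigma(v^\top x_i)\r) \sigma'(w_t^\top x_i) (w_t^\top x_i - v^\top x_i) \\
&\quad + \ip{\f 1n \summ i n (\sigma(v^\top x_i) - y_i) \sigma'(w_t^\top x_i) x_i}{w_t - v}.
\end{align*}
For the signal piece I will apply Fact \ref{fact:sigma.L.lipschitz} termwise to get a lower bound of $(1/Ln) \sum_i (\sigma(w_t^\top x_i) - \sigma(v^\top x_i))^2 \sigma'(w_t^\top x_i) = 2 L^{-1} \hat H(w_t)$. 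For the noise piece, I will use Cauchy--Schwarz together with $\pnorm{w_t - v}{2} \leq 1$ and the hypothesis \eqref{eq:Kbound.defn} applied with $\alpha_i = \sigma'(w_t^\top x_i) \in [0,L]$ to obtain a lower bound of $-K$. The net lower bound is therefore $2L^{-1} \hat H(w_t) - K$.

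For the gradient norm, the same Young/Jensen/Lipschitz sequence used to derive \eqref{eq:grad.ub.strictly.increasing} yields $\pnorm{\nabla \hat F(w_t)}{2}^2 \leq 4 L B_X^2 \hat H(w_t) + 2 K^2$; the only change is that the ``variance'' term is controlled directly by $K^2$ via \eqref{eq:Kbound.defn} rather than by $\hat F(v)$. Combining these two bounds,
\begin{align*}
\pnorm{w_t - v}{2}^2 - \pnorm{w_{t+1} - v}{2}^2 &\geq 2\eta \l(2 L^{-1} \hat H(w_t) - K\r) - \eta^2 \l(4 L B_X^2 \hat H(w_t) + 2K^2\r) \\
&\geq 3 \eta L^{-1} \hat H(w_t) - 4 \eta K,
\end{align*}
where the step-size choice $\eta \leq (1/4) L^{-2} B_X^{-2}$ absorbs the $\hat H(w_t)$ quadratic term and $K \leq 1$ absorbs the $K^2$ term. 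Under the induction hypothesis $\hat H(w_t) > 4 L K$, the right hand side exceeds $12 \eta K - 4 \eta K = 8 \eta K \geq \eta K$, establishing case (b) and completing the induction.

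The main obstacle compared to the agnostic lemmas is the noise piece: there the corresponding term was controlled only by $\hat F(v)$ (or $\hat F(v)^{1/2}$ for ReLU), forcing the slower $O(\opt)$ or $O(\opt^{1/2})$ rates. Here the unbiasedness of $\sigma(v^\top x) - y$ given $x$ is packaged into the single quantity $K$ via \eqref{eq:Kbound.defn}, which decouples the signal and noise analyses and is what ultimately yields the improved $\opt + \eps$ guarantee in Theorem \ref{theorem:glm}. The delicate point to handle when invoking the lemma is that the coefficients $\alpha_i = \sigma'(w_t^\top x_i)$ are data-dependent through $w_t$, which will require a concentration bound for norm sub-Gaussian vectors together with a uniform control over admissible $\alpha_i \in [0,L]$ (trivial for ReLU, and by a standard covering/Lipschitz argument in the smooth case); this is cleanly isolated from the optimization argument above.
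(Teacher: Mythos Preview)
Your proof is correct and follows essentially the same route as the paper's own argument: the same induction scheme, the same signal/noise split of $\langle \nabla \hat F(w_t), w_t - v\rangle$ using Fact \ref{fact:sigma.L.lipschitz} and Cauchy--Schwarz with \eqref{eq:Kbound.defn}, the same gradient-norm bound, and the same step-size absorption to close the induction (your constants differ slightly from the paper's but are equally valid). Your closing remark about the data-dependence of $\alpha_i = \sigma'(w_t^\top x_i)$ is a correct and useful observation about applying the lemma downstream, though it lies outside the scope of the lemma's proof itself.
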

 
\begin{proof}
 Just as in the proof of Lemma \ref{lemma:agnostic.key.strictly.increasing}, the theorem can be shown by proving the following induction statement.  We claim that for every $t\in \N$, either (a) $\hat H(w_\tau) \leq 4 LK$ for some $\tau < t$, or (b) $\pnorm{w_t-v}2^2 \leq \pnorm{w_{t-1}-v}2^2 - \eta K$.  If the induction hypothesis holds, then until gradient descent finds a point where $\hat H(w_t) \leq 4LK$, the squared distance $\pnorm{w_t-v}2^2$ decreases by $\eta K$ at every iteration.  Since $\pnorm{w_0-v}2^2 = 1$, this means there can be at most $\eta^{-1} K^{-1}$ iterations until we reach $\hat H(w_t) \leq 4 LK$.  This shows the induction statement implies the theorem. 
 
 We begin with the proof by supposing the induction hypothesis holds for $t$, and considering the case $t+1$.  If (a) holds, then we are done.  So now consider the case that for every $\tau \leq t$, we have $\hat H(w_\tau) > 4 LK$.   Since (a) does not hold, $\pnorm{w_\tau - v}2^2 \leq \pnorm{w_{\tau-1}-v}2^2 - \eta K$ holds for each $\tau = 1, \dots, t$.   Since $\pnorm{w_0-v}2=1$, this implies 
     \begin{equation}
         \pnorm{w_\tau -v}2 \leq 1 \ \forall \tau \leq t.
         \label{eq:glm.weights.bounded}
     \end{equation}

We can therefore bound
     \begin{align}
     \nonumber
         \ip{\nabla \hat F(w_t) } {w_t-v} &= \ip{\f 1 n \summ 1 n \l( \sigma(w_t^\top x_i) - y_i \r) \sigma'(w_t^\top x_i) x_i}{w_t-v} \\
         \nonumber
         &= \f 1 n \summ i n \l( \sigma(w_t^\top x_i) - \sigma(v^\top x_i) \r) \sigma'(w_t^\top x_i) (w_t^\top x_i - v^\top x_i) \\
         \nonumber
         &\quad + \ip{ \f 1 n \summ i n \l( \sigma(v^\top x_i) - y_i \r) \sigma'(w_t^\top x_i) x_i}{w_t-v}\\
         \nonumber
         &\geq \f{L^{-1}}n \summ i n \l( \sigma(w_t^\top x_i) - \sigma(v^\top x_i) \r)^2 \sigma'(w_t^\top x_i) - K \pnorm{w_t-v}2\\
         &\geq 2L^{-1} \hat H(w_t) - K.
         \label{eq:glm.ip.lb}
     \end{align}
 In the first inequality, we have used Fact \ref{fact:sigma.L.lipschitz} for the first term.  For the second term, we use \eqref{eq:Kbound.defn} and that $\alpha_i:= \sigma'(w_t^\top x_i) \in [0,L]$.  The last inequality uses \eqref{eq:glm.weights.bounded}.  
 
 For the gradient upper bound, we have
 \begin{align}
 \nonumber
     \pnorm{\nabla \hat F(w_t)}2^2 &= \pnorm{\f 1 n \summ i n \l( \sigma(w_t^\top x_i) - \sigma(v^\top x_i) \r) \sigma'(w_t^\top x_i) x_i  + \f  1n \summ i n \l( \sigma(v^\top x_i) - y_i\r) \sigma'(w_t^\top x_i) x_i}2^2 \\
     \nonumber 
     &\leq 2\pnorm{\f 1 n \summ i n \l( \sigma(w_t^\top x_i) - \sigma(v^\top x_i) \r) \sigma'(w_t^\top x_i) x_i}2^2 \\
     \nonumber &\quad + 2\pnorm{ \f  1n \summ i n \l( \sigma(v^\top x_i) - y_i\r) \sigma'(w_t^\top x_i) x_i}2^2 \\
     \nonumber
     &\leq \f {2 L B_X^2}n \summ i n \l( \sigma(w^\top x_i) - \sigma(v^\top x_i) \r)^2 \sigma'(w_t^\top x_i) + 2K^2\\
     &= 4 L B_X^2 \hat H(w_t) + 2K^2.
     \label{eq:glm.grad.ub}
 \end{align}
 The first inequality uses Young's inequality.  The second uses that $\sigma'(z)\leq L$ and that $\pnorm{x}2\leq B_X$ a.s. and \eqref{eq:Kbound.defn}.  
 
 Putting \eqref{eq:glm.ip.lb} and \eqref{eq:glm.grad.ub} together, the choice of $\eta \leq (1/4) L^{-2} B_X^{-2}$ gives us
 \begin{align}
 \nonumber
     \pnorm{w_t-v}2^2 - \pnorm{w_{t+1}-v}2^2 &= 2 \eta \ip{\nabla \hat F(w_t)}{w_t-v} - \eta^2 \pnorm{\nabla \hat F(w_t)}2^2 \\
     \nonumber
     &\geq 2 \eta(L^{-1} \hat H(w_t) - K) - \eta^2 \l( 4L B_X^2 \hat H(w_t) + 2K^2 \r) \\
     \nonumber
     &\geq \eta L^{-1} \hat H(w_t) - 3 \eta K.
 \end{align}
 In particular, this implies
 \begin{equation}
      \pnorm{w_{t+1}-v}2^2 \leq \pnorm{w_t-v}2^2 + 3\eta K - \eta  L^{-1} \hat H(w_t)
      \label{eq:glm.wt+1.ub}
 \end{equation}
 Since $\hat H(w_t) > 4 K L$, this completes the induction.  The base case follows easily since $\pnorm{w_0-v}2 =1$ allows for us to deduce the desired bound on $\pnorm{w_1-v}2^2$ using \eqref{eq:glm.wt+1.ub}.  
 \end{proof}
 
 To prove a concrete bound on the $K$ term of Lemma \ref{lemma:glm.key}, we will need the following definition of norm sub-Gaussian random vectors.
 \begin{definition}
 A random vector $z\in \R^d$ is said to be \textit{norm sub-Gaussian with parameter $s >0$} if 
 \[ \P(\norm{z - \E z} \geq t) \leq 2 \exp (-t^2 / 2 s^2).\]
 \end{definition}
 A Hoeffding-type inequality for norm sub-Gaussian vectors was recently shown by~\citet{jin20.normsubgaussian}. 
 \begin{lemma}[Lemma 6,~\citep{jin20.normsubgaussian}]\label{lemma:normsubgaussian.hoeffding}
 Suppose $z_1, \dots, z_n\in \R^d$ are random vectors with filtration $\calF_t := \sigma(z_1, \dots, z_t)$ such that $z_i | \calF_{i-1}$ is a zero-mean norm sub-Gaussian vector with parameter $s_i\in \R$ for each $i$.  Then, there exists an absolute constant $c>0$ such that for any $\delta>0$, with probability at least $1-\delta$,
 \[ \norm{\summ i n z_i }\leq c \sqrt{\log(2d/\delta) \summ i n s_i^2} .\]
 \end{lemma}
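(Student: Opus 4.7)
The plan is to reduce the vector-valued martingale concentration to scalar sub-Gaussian concentration by projecting onto one-dimensional directions, and then recover a bound on the norm of $S_n := \sum_{i=1}^n z_i$ through an MGF-based argument whose dimensional overhead enters only logarithmically.

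First, I would fix an arbitrary unit vector $u \in \R^d$ and analyze the scalar sequence $\xi_i := \langle u, z_i \rangle$. Since $\E[z_i \mid \calF_{i-1}] = 0$, we immediately have $\E[\xi_i \mid \calF_{i-1}] = 0$, so $(\xi_i)$ is a scalar martingale difference sequence adapted to $(\calF_i)$. The pointwise bound $|\xi_i| \leq \|z_i\|$ from Cauchy--Schwarz transfers the norm sub-Gaussian tail of $z_i$ to $\xi_i$: $\P(|\xi_i| \geq t \mid \calF_{i-1}) \leq 2 \exp(-t^2 / (2 s_i^2))$. A standard integration-by-parts / Chernoff argument then upgrades this tail bound to the conditional MGF bound $\E[\exp(\lambda \xi_i) \mid \calF_{i-1}] \leq \exp(C \lambda^2 s_i^2)$ for an absolute constant $C$, and telescoping along the filtration gives the Azuma--Hoeffding-type bound $\P\bigl(|\sum_{i=1}^n \xi_i| \geq t\bigr) \leq 2 \exp\bigl(-t^2 / (C' \sum_{i=1}^n s_i^2)\bigr)$ uniformly over every fixed unit $u$.

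Second, to pass from the fixed-$u$ scalar bound to a bound on $\|S_n\| = \sup_{\|u\| \leq 1} \langle u, S_n \rangle$, I would exploit the stronger norm sub-Gaussian structure. Iterating the scalar MGF bound with $\lambda = \mu u$ along the filtration yields the vector MGF bound $\E[\exp(\langle \lambda, S_n \rangle)] \leq \exp(C \|\lambda\|^2 \sum_i s_i^2)$ for every $\lambda \in \R^d$. Combined with the sub-exponential tail of $\|z_i\|^2$ (a consequence of norm sub-Gaussianity, giving $\E[\exp(\lambda \|z_i\|^2) \mid \calF_{i-1}] \leq (1 - c\lambda s_i^2)^{-1/2}$ for small $\lambda$), one constructs a Freedman-type supermartingale of the form $M_t = \exp(\lambda \|S_t\|^2 - \lambda C \sum_{i \leq t} s_i^2)$ on the running norm; applying optional stopping and Markov's inequality (with a union bound over $O(d)$ coordinate-type events to convert sub-Gaussianity of projections into a bound on the $\ell^2$ norm) yields the tail $\P(\|S_n\| \geq t) \leq 2d \exp(-c t^2 / \sum s_i^2)$, and inverting in $\delta$ produces the claimed bound $\|S_n\| \leq c \sqrt{\log(2d/\delta) \sum s_i^2}$.

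The main technical obstacle is precisely this passage from scalar to vector concentration with only $\sqrt{\log d}$ (rather than $\sqrt d$) dimensional overhead. A naive $\varepsilon$-net argument on the unit sphere $S^{d-1}$ has covering number $|N_\varepsilon| \asymp 5^d$ and forces an unwanted $\sqrt{d}$ factor, while a purely coordinate-wise union bound on $\langle e_j, S_n\rangle$ only controls $\|S_n\|_\infty$ and loses another $\sqrt d$ when passing to $\|S_n\|_2$. The advertised logarithmic dimension dependence requires crucially that each $z_i$ is norm sub-Gaussian (not merely that its linear projections are), which enables the supermartingale argument directly on the squared norm and thereby replaces the exponential dimensional factor from a spherical net with a mere $\log d$ overhead from a union bound over coordinate directions.
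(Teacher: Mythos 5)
You should first note that the paper does not actually prove this lemma: it is imported verbatim from \citet{jin20.normsubgaussian} (their Lemma 6), and the proof there goes through matrix concentration. Your attempt instead tries to argue directly on the scalar projections and the running squared norm, and the decisive step does not close. The proposed process $M_t = \exp\bigl(\lambda \|S_t\|^2 - \lambda C \sum_{i\leq t} s_i^2\bigr)$ is not a supermartingale: writing $\|S_t\|^2 = \|S_{t-1}\|^2 + 2\langle S_{t-1}, z_t\rangle + \|z_t\|^2$ and taking $\E[\,\cdot\mid\calF_{t-1}]$, the cross term contributes a factor of order $\exp\bigl(C\lambda^2 s_t^2 \|S_{t-1}\|^2\bigr)$, which depends on the uncontrolled running norm $\|S_{t-1}\|$ and is not cancelled by subtracting $\lambda C s_t^2$. (One can try to rescue this with a stopping time at the first exit of $\|S_t\|$ from a ball of radius equal to the target threshold and a threshold-dependent choice of $\lambda$, but you have not done this, and as written the supermartingale claim is simply false.) Worse, the parenthetical ``union bound over $O(d)$ coordinate-type events'' that you invoke to convert projection bounds into a bound on $\|S_n\|_2$ is exactly the device your own final paragraph correctly rules out: bounding each $\langle e_j, S_n\rangle$ controls only $\|S_n\|_\infty$ and costs a factor $\sqrt d$, not $\sqrt{\log d}$, when passed to the Euclidean norm. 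So the second paragraph of your argument relies on a step that the third paragraph explains cannot work, and no mechanism actually producing the $\log(2d)$ factor is supplied.

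The missing idea is the self-adjoint dilation. Map each $z_i$ to $A(z_i) = \bigl(\begin{smallmatrix} 0 & z_i^\top \\ z_i & 0 \end{smallmatrix}\bigr) \in \R^{(d+1)\times(d+1)}$. The map $z \mapsto A(z)$ is linear and $A(z)$ has eigenvalues $\pm\pnorm{z}2$ (and $0$), so $\pnorm{\summ i n z_i}2 = \bigl\|\summ i n A(z_i)\bigr\|_{\mathrm{op}}$, and conditional norm sub-Gaussianity of $z_i$ translates into a conditional matrix MGF bound $\E[e^{\theta A(z_i)}\mid\calF_{i-1}] \preceq e^{c\theta^2 s_i^2} I$. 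Tropp's master bound for matrix martingales (via Lieb's concavity theorem) then yields $\P\bigl(\|\summ i n A(z_i)\|_{\mathrm{op}} \geq t\bigr) \leq 2(d+1)\exp\bigl(-c\,t^2/\summ i n s_i^2\bigr)$; the dimension $d+1 \leq 2d$ entering through the trace of the identity is precisely the source of the $\log(2d/\delta)$ in the statement. This is the route taken in the cited reference; your scalar-projection preliminaries are fine but cannot substitute for it.
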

 Using this, we can show that if $\xi_i := \sigma(v^\top x_i) - y_i$ is $s$ sub-Gaussian, then we can get a bound on $K$ at rate $n^{-1/2}$.  We note that if we make the stronger assumption that $\xi_i$ is bounded a.s., we can get rid of the $\log(d)$ dependence by using concentration of bounded random variables in a Hilbert space (e.g.~\citet{pinelis1986}, Corollary 2).  
 
 \begin{lemma}\label{lemma:Kbound.subgaussian.noise}
 Suppose that $\pnorm{x}2 \leq B_X$ a.s. under $\calD_x$, and let $\sigma$ be any continuous function.  Assume $\xi_i := \sigma(v^\top x_i) - y_i$ is $s$ sub-Gaussian and satisfies $\E[\xi_i| x_i]=0$.  Then there exists an absolute constant $c_0>0$ such that for constants $\alpha_i \in [0,L]$, with probability at least $1-\delta$, we have
 \[ \|(1/n) \textstyle \summ i n \l( \sigma(v^\top x_i) - y_i\r) \alpha_i x_i \| \leq c_0 L B_X s\sqrt{n^{-1} \log(2d/\delta)}.\]
 \end{lemma}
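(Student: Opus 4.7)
The plan is to reduce the claim to a direct application of the norm sub-Gaussian Hoeffding inequality stated in Lemma \ref{lemma:normsubgaussian.hoeffding}. Define the random vectors $z_i := (\sigma(v^\top x_i) - y_i)\alpha_i x_i = \xi_i \alpha_i x_i$ for $i=1,\dots,n$, and take the filtration $\calF_i := \sigma(x_1,y_1,\dots,x_i,y_i)$, so that each $z_i$ is $\calF_i$-measurable. The goal is to verify the two hypotheses of Lemma \ref{lemma:normsubgaussian.hoeffding}: (i) $\E[z_i \mid \calF_{i-1}]=0$, and (ii) conditional on $\calF_{i-1}$, $z_i$ is norm sub-Gaussian with a parameter $s_i$ of order $L B_X s$.

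For (i), the samples are i.i.d., so $(x_i,\xi_i)$ is independent of $\calF_{i-1}$; combined with the assumption $\E[\xi_i\mid x_i]=0$, iterated expectation yields $\E[z_i\mid \calF_{i-1}]=\E[\alpha_i x_i \E[\xi_i\mid x_i]]=0$. For (ii), first condition on $x_i$: since $\alpha_i x_i$ is then deterministic with $\|\alpha_i x_i\|\leq L B_X$, and $\xi_i\mid x_i$ is $s$-sub-Gaussian with mean zero, the scalar random variable $\|z_i\| = |\xi_i|\cdot \alpha_i \|x_i\|$ is bounded above in distribution by $L B_X |\xi_i|$, giving $\P(\|z_i\|\geq t \mid x_i) \leq 2\exp(-t^2/(2 L^2 B_X^2 s^2))$. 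Integrating out $x_i$ and using independence from $\calF_{i-1}$, the same tail bound holds conditional on $\calF_{i-1}$, so $z_i$ is norm sub-Gaussian in the sense of Lemma \ref{lemma:normsubgaussian.hoeffding} with parameter $s_i = L B_X s$.

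With these two properties, Lemma \ref{lemma:normsubgaussian.hoeffding} applied to $\{z_i\}_{i=1}^n$ gives, with probability at least $1-\delta$,
\[
\Big\|\sum_{i=1}^n z_i\Big\| \leq c\sqrt{\log(2d/\delta)\sum_{i=1}^n s_i^2} = c L B_X s\sqrt{n\log(2d/\delta)},
\]
for an absolute constant $c>0$. Dividing by $n$ yields the claimed inequality with $c_0 = c$.

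The only substantive step is verifying the norm sub-Gaussianity of $z_i$ conditional on $\calF_{i-1}$, which is the natural obstacle since the lemma of \citet{jin20.normsubgaussian} is stated for conditionally norm sub-Gaussian martingale differences rather than for products of a sub-Gaussian scalar with a bounded vector. As sketched above, this reduces to a two-step conditioning argument using the almost-sure bound on $\|\alpha_i x_i\|$ together with the conditional sub-Gaussianity of $\xi_i\mid x_i$; once this is established, the rest of the argument is routine bookkeeping.
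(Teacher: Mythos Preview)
Your proposal is correct and follows essentially the same approach as the paper: define $z_i = \xi_i \alpha_i x_i$, verify $\E z_i = 0$ via iterated expectation, check that $z_i$ is norm sub-Gaussian with parameter $L B_X s$ using $\|\alpha_i x_i\|\leq L B_X$ and the sub-Gaussianity of $\xi_i$, and then apply Lemma~\ref{lemma:normsubgaussian.hoeffding} and divide by $n$. The only difference is that you spell out the two-step conditioning (first on $x_i$, then integrating out) to justify the norm sub-Gaussian tail, whereas the paper simply asserts it ``follows from the definition''; your version is slightly more careful but otherwise identical.
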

 
 \begin{proof}[Proof of Lemma \ref{lemma:Kbound.subgaussian.noise}]
 Define $z_i := \l(\sigma(v^\top x_i) - y_i \r) \alpha_i x_i$.  Using iterated expectations, we see that $\E[z_i]=0$.  Since $\sigma(v^\top x_i) - y_i$ is $s$-sub-Gaussian and $\pnorm{\alpha_i x_i}2 \leq L B_X$, it follows from the definition that $z_i$ is norm sub-Gaussian with parameter $LB_Xs$ for each $i$.  By Lemma \ref{lemma:normsubgaussian.hoeffding}, we have with probability at least $1-\delta$,
 \[  \norm{\summ i n z_i} \leq c \sqrt{ \log(2d/\delta) L^2 B_X^2 ns^2}.\]
 Dividing each side by $n$ proves the lemma.
 \end{proof}

 \begin{proof}[Proof of Theorem \ref{theorem:glm}]
 By Lemmas \ref{lemma:glm.key} and \ref{lemma:Kbound.subgaussian.noise}, there exists some $w_t$, $t < T$ and $\pnorm{w_t-v}2\leq 1$, such that
 \begin{equation}
 \nonumber
     \hat H(w_t) \leq 4LK \leq 4 c_0 L^2 B_X s \sqrt{\f{\log(2d/\delta)}n}.
 \end{equation}
 Consider $\sigma$ satisfying Assumption \ref{assumption:activation.fcn} first, with $\gamma$ corresponding to $\rho=2B_X$.  Since $\pnorm{w_t}2\leq 2$, we can use Lemma \ref{lemma:Hsurrogate} to transform the above bound for $\hat H$ into one for $\hat G$,
 \begin{equation}
 \nonumber
     \hat G(w_t) \leq 4 c_0 \gamma^{-1} L^2 B_Xs\sqrt{\f{\log(2d/\delta)}n}.
     \label{eq:nonrelu.hatG.bound}
 \end{equation}
 Since $\pnorm{w-v}2\leq 1$ implies $G(w) \leq L^2 B_X^2/2$, standard results from Rademacher complexity imply (e.g. Theorem 26.5 of~\cite{shalevschwartz}) that with probability at least $1-\delta$, 
 \[ G(w_t) \leq \hat G(w_t) + \E_{S\sim \calD^n} \mathfrak{R}_S(\ell \circ \sigma \circ \calG) + 2 L^2 B_X^2 \sqrt{\f{ 2 \log(4/\delta)}{n}},\]
 where $\ell(w; x) = (1/2) (\sigma(w^\top x) - \sigma(v^\top x))^2$ and $\calG$ are from Lemma \ref{lemma:rademacher.complexity}.  
 For the second term above, Lemma \ref{lemma:rademacher.complexity} and rescaling $\delta$ yields that 
 \[ G(w_t) \leq  \f{ 2 L^3 B_X^2}{\sqrt n} + \f{ 2L^2 B_X^2 \sqrt{2\log(8/\delta)}}{\sqrt n} + \f{ 4 c_0 \gamma^{-1} L^2 B_X s \sqrt{\log(4d/\delta)}}{\sqrt n}.\]
 Then Claim \ref{claim:Gbound:implies:Fbound} completes the proof for strictly increasing $\sigma$.
 
 When $\sigma$ is ReLU, the proof follows the same argument given in the proof of Theorem \ref{theorem:agnostic}.  Denoting the loss function $\tilde \ell(w; x) = (1/2)(\sigma(w^\top x) - \sigma(v^\top x))^2 \sigma'(w^\top x)$, we have
 \begin{equation}
     \E_{S\sim \calD^n} \mathfrak{R}_S \l( \tilde \ell \circ \sigma \circ \calG \r) \leq \f{ 2 B_X^2}{\sqrt n}.
     \label{eq:rademacher.relu.glm}
 \end{equation}
 By Lemmas \ref{lemma:glm.key} and \ref{lemma:Kbound.subgaussian.noise}, there exists some $w_t$, $t < T$ and $\pnorm{w_t-v}2\leq 1$, such that
 \begin{equation}
     \hat H(w_t) \leq 4LK \leq 4 c_0 L^2 B_X s \sqrt{\f{\log(2d/\delta)}n}.
 \end{equation}
 Using standard results from Rademacher complexity, 
  \[ H(w_t) \leq \hat H(w_t) + \E_{S\sim \calD^n} \mathfrak{R}_S(\tilde \ell \circ \sigma \circ \calG) + 2 B_X^2 \sqrt{\f{ 2 \log(4/\delta)}{n}}.\]
 By \eqref{eq:rademacher.relu.glm}, this means
 \[ H(w_t) \leq   \f{ 4 c_0  B_X s \sqrt{\log(4d/\delta)}}{\sqrt n} + \f{ 2 B_X^2}{\sqrt n} + \f{ 2 B_X^2 \sqrt{2\log(8/\delta)}}{\sqrt n}.\]
 
 Since $\calD$ satisfies Assumption \ref{assumption:marginal.spread} and $\pnorm{w_t-v}2\leq 1$, Lemma \ref{lemma:Hsurrogate} shows that $G(w_t) \leq 8\sqrt 2\alpha^{-4}\beta^{-1} H(w_t)$.   Then Claim \ref{claim:Gbound:implies:Fbound} translates the bound for $G(w_t)$ into one for $F(w_t)$. 
 \end{proof}

\section{Realizable setting}
\label{appendix:realizable}
In this section we assume $y = \sigma(v^\top x)$ a.s. for some $\pnorm v2 \leq 1$.  As in the agnostic and noisy teacher network setting, we use the auxiliary loss
\[ H(w) := (1/2) \E_{x\sim \calD} [(\sigma(w^\top x) - \sigma(v^\top x))^2 \sigma'(w^\top x)].\]
Note that in the realizable setting, the previous auxiliary loss $G$ defined in \eqref{def:auxiliary.loss} coincides with the true objective $F$, i.e. we have
\[ F(w) := (1/2) \E_{x\sim \calD} [(\sigma(w^\top x) - \sigma(v^\top x))^2].\]
For purpose of comparison with~\citet{yehudai20}, we provide analyses for two settings in the realizable case: in the first setting, we consider gradient descent on the population loss,
\begin{equation}
w_{t+1} = w_t - \eta \nabla F(w_t),
\label{eq:gd.pop.loss}
\end{equation}
and return $w_{t^*}:= \mathrm{argmin}_{0\leq t < T} F(w_t)$.  
The second setting is online SGD with samples $x_t\sim \calD$.  Here we compute unbiased estimates (conditional on $w_t$) of the population risk $F_t(w_t) := (1/2) (\sigma(w_t^\top x_t) - \sigma(v^\top x_t))^2$ and update the weights by
\begin{equation} 
w_{t+1} = w_t - \eta \nabla  F_t(w_t)
\label{eq:online.sgd}
\end{equation}
For SGD, we output $w_{t^*} = \mathrm{argmin}_{0\leq t < T} F_t(w_t)$.   

We summarize our results in the realizable case in Theorem \ref{theorem:gd.loss}.  
\begin{theorem}\label{theorem:gd.loss} 
Suppose $\pnorm{x}2\leq B$ a.s. and $\sigma$ is non-decreasing and $L$-Lipschitz.  Let $\eta \leq L^{-2}B^{-2}$  be the step size.
\begin{enumerate}[(a)]
    \item Let $\sigma$ satisfy Assumption \ref{assumption:activation.fcn}, and let $\gamma$ be the constant corresponding to $\rho=4B$.  For any initialization satisfying $\pnorm{w_0}2\leq 2$, if we run gradient descent on the population risk $T = \ceil{2\eps^{-1} L \eta^{-1} \gamma^{-1} \pnorm{w_0-v}2^2}$ iterations, then there exists $t<T$ such that $F(w_t)\leq \eps$.  For stochastic gradient descent, for any $\delta > 0$, running SGD for $\tilde T = 6T \log(1/\delta)$ guarantees there exists $w_t$, $t<T$, such that w.p. at least $1-\delta$, $F(w_t) \leq \eps$.
    
    \item Let $\sigma$ be ReLU and further assume that $\calD$ satisfies Assumption \ref{assumption:marginal.spread} for constants $\alpha,\beta>0$ and that $w_0=0$.  Let $\nu=\alpha^4 \beta /8\sqrt 2$.  If we run gradient descent on the population risk $T = \ceil{2\eps^{-1} L \eta^{-1} \nu^{-1} \pnorm{w_0-v}2^2}$ iterations, then there exists $t<T$ such that $F(w_t)\leq \eps$.  For stochastic gradient descent, for any $\delta > 0$, running SGD for $\tilde T = 6T \log(1/\delta)$ guarantees there exists $w_t$, $t<T$, such that w.p. at least $1-\delta$, $F(w_t) \leq \eps$.
\end{enumerate}
\end{theorem}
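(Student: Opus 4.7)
The plan is to specialize the descent-lemma arguments of Lemmas \ref{lemma:agnostic.key.strictly.increasing}--\ref{lemma:glm.key} to the realizable setting, where $y = \sigma(v^\top x)$ eliminates the noise cross-term and sharpens the recursion. First, for full-batch gradient descent on the population risk, I would track $A_t := \|w_t - v\|_2^2$ and use the standard expansion $A_{t+1} = A_t - 2\eta \langle \nabla F(w_t), w_t - v\rangle + \eta^2 \|\nabla F(w_t)\|_2^2$. Fact \ref{fact:sigma.L.lipschitz} applied pointwise under the expectation gives $\langle \nabla F(w_t), w_t - v\rangle \geq 2L^{-1} H(w_t)$ for any non-decreasing $L$-Lipschitz $\sigma$, and Jensen's inequality combined with $\sigma'\leq L$ and $\|x\|_2 \leq B$ yields $\|\nabla F(w_t)\|_2^2 \leq 2LB^2 H(w_t)$. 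With $\eta \leq L^{-2}B^{-2}$ these combine into the monotone descent $A_{t+1} \leq A_t - 2\eta L^{-1} H(w_t)$, and telescoping bounds $\min_{t<T} H(w_t) \leq L A_0/(2\eta T)$.

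The monotonicity of $A_t$ automatically keeps the iterates inside the region where Lemma \ref{lemma:Hsurrogate} applies: in case (a) with $\|w_0\|_2 \leq 2$ we get $\|w_t\|_2 \leq 4$, so $\rho = 4B$ is valid and $F(w_t) \leq \gamma^{-1} H(w_t)$; in case (b) with $w_0 = 0$ we get $\|w_t - v\|_2 \leq 1$ and the ReLU branch of Lemma \ref{lemma:Hsurrogate} gives $F(w_t) \leq \nu^{-1} H(w_t)$. Matching the $H$ threshold to $\eps$ through either $\gamma$ or $\nu$ produces the stated $T$ up to constant slack, and in both cases the minimum over $t$ coincides with the output of the algorithm.

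For SGD the same inequalities go through deterministically at every sample. Setting $\tilde H_t := (1/2)(\sigma(w_t^\top x_t) - \sigma(v^\top x_t))^2 \sigma'(w_t^\top x_t)$, Fact \ref{fact:sigma.L.lipschitz} and the same Jensen/Lipschitz estimates give $A_{t+1} \leq A_t - 2\eta L^{-1} \tilde H_t$ a.s., so $\sum_{t<T} \tilde H_t \leq L A_0/(2\eta)$ a.s. Taking expectation and using $\E[\tilde H_t \mid w_t] = H(w_t)$ yields $\E[\min_{t<T} H(w_t)] \leq L A_0/(2\eta T)$. To upgrade to probability $1 - \delta$ within the $\tilde T = 6T \log(1/\delta)$ iterations, I would partition the trajectory into $K = 6\log(1/\delta)$ consecutive blocks of length $T$. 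Because $A_t$ is a.s. non-increasing, every block starts from a distance bounded by $A_0$, so Markov's inequality applied conditionally at each block start gives conditional failure probability at most $1/2$; iterated conditioning then makes the probability that every block fails at most $2^{-K} \leq \delta$. On the complementary event some $w_t$ satisfies $H(w_t) \leq L A_0/(\eta T)$, and Lemma \ref{lemma:Hsurrogate} converts this to $F(w_t) \leq \eps$.

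The main obstacle I expect is the SGD step: specifically, making the boosting argument work \emph{without} independent restarts. The almost-sure non-increasing behavior of $A_t$ (a realizable-setting luxury, absent in Lemma \ref{lemma:glm.key}) is precisely what lets the conditional Markov bound apply afresh at the start of every block, so iterated conditioning substitutes for independence. A minor subtlety in case (b) is that $\sigma'$ is discontinuous at zero, but the descent argument only uses $\sigma' \in [0, L]$ and Fact \ref{fact:sigma.L.lipschitz}, so the convention $\sigma'(0) = 1$ suffices. Finally, the constants in the descent recursion must be tuned against $\gamma$ (or $\nu$) to match the precise form of $T$ in the theorem statement, but this is arithmetic rather than conceptual.
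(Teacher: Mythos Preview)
Your population gradient-descent argument is essentially identical to the paper's Lemma~\ref{lemma:key}: same expansion of $\|w_t-v\|_2^2$, same use of Fact~\ref{fact:sigma.L.lipschitz} for the inner-product lower bound, same Jensen/Lipschitz gradient upper bound, and the same telescoping to $\min_{t<T} H(w_t)\le L\|w_0-v\|_2^2/(2\eta T)$, followed by Lemma~\ref{lemma:Hsurrogate} with the appropriate $\gamma$ or $\nu$.

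For SGD you diverge from the paper.  The paper also establishes the almost-sure per-step recursion (its Lemma~\ref{lemma:key.sgd}), but then converts $\sum_t \oF_t$ into a high-probability bound on $\sum_t H(w_t)$ directly, via a martingale Bernstein inequality (Lemma~\ref{lemma:beygelzimer}) in Lemma~\ref{lemma:sgd.bernstein.monotone}.  Your route is instead Markov-plus-boosting: bound $\E[\min_{t} H(w_t)]$ by averaging and tower, then slice the $\tilde T=6T\log(1/\delta)$ iterations into $6\log(1/\delta)$ blocks of length $T$, apply conditional Markov to get failure probability $\le 1/2$ per block, and iterate conditioning.  This is correct, and the step you correctly flag as load-bearing---that the a.s.\ monotonicity of $\|w_t-v\|_2^2$ lets each block inherit the same starting bound $A_0$ without a fresh restart---is indeed what makes the argument go through without independence.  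Your approach is more elementary (no Bernstein inequality required) and explains transparently where the factor $\log(1/\delta)$ in $\tilde T$ originates; the paper's Bernstein route is a single-shot high-probability bound with no block structure, and the $\log(1/\delta)$ enters through the deviation term in Lemma~\ref{lemma:sgd.bernstein.monotone} instead.  Both yield the same $\tilde T$ up to constants.
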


A few remarks on the above theorem: first, in comparison with our noisy neuron result in Theorem \ref{theorem:glm}, we are able to achieve $\opt+\eps=\eps$ population risk with sample complexity and runtime of order $\eps^{-1}$ rather than $\eps^{-2}$ using the same assumptions by invoking a martingale Bernstein inequality rather than Hoeffding.  Second, although Theorem \ref{theorem:gd.loss} requires the distribution to be bounded almost surely, we show in Section \ref{appendix:realizable.gd.pop.loss} below that for GD on the population loss, we can accomodate essentially any distribution with finite expected squared norm. 

\citet{yehudai20} used the marginal spread assumption to show that with probability 1/2, a single neuron in the realizable setting can be learned using gradient-based optimization with random initialization for Lipschitz activation functions satisfying $\inf_{0<z<\alpha} \sigma'(z) > 0$, where $\alpha$ is the same constant in Assumption \ref{assumption:marginal.spread}, and thus includes essentially all neural network activation functions like softplus, sigmoid, tanh, and ReLU.  Under the additional assumption of spherical symmetry, they showed that this can be improved to a high probability guarantee for the ReLU activation.  For gradient descent on the population risk, they proved linear convergence, i.e. a runtime of order $O(\log(1/\eps))$, while for SGD their runtime and sample complexity is of order $O(\eps^{-2} \log(1/\eps))$.  In comparison, our result for the non-ReLU activations requires only boundedness of the distributions and holds with high probability over random initializations, with runtime and sample complexity of order $O(\eps^{-1})$ for both gradient descent on the population risk and SGD.  Our results for ReLU use the same marginal spread assumption as Yehudai and Shamir, but our proof technique differs in that we do not require the angle $\theta(w_t,v)$ between the weights in the GD trajectory and the target neuron be decreasing.  As they pointed out, angle monotonicity fails to hold for the trajectory of gradient descent even when the distribution is a non-centered Gaussian, so that proofs based on angle monotonicity will not translate to more general distributions.  Indeed, our proofs in the agnostic and noisy teacher network setting use essentially the same proof technique as the realizable case without relying on angle monotonicity.  Instead, we show a type of inductive bias of gradient descent in the sense that if initialized at the origin, the angle between the target vector and the population risk minimizer cannot become larger than $\pi/2$, even in the agnostic setting.  

\subsection{Gradient descent on population loss}
\label{appendix:realizable.gd.pop.loss}
The key lemma for the proof is as follows.
\begin{lemma}\label{lemma:key}
Consider gradient descent on the population risk given in \eqref{eq:gd.pop.loss}.  Let $w_0$ be the initial point of gradient descent and assume $\pnorm{w_0}2 \leq 2$.  Suppose that $\calD$ satisfies $\E_x[\pnorm{x}2^2] \leq B^2$.  Let $\sigma$ be non-decreasing and $L$-Lipschitz.   Assume the step size satisfies $\eta \leq L^{-2} B^{-2}$. 
Then for any $T\in \N$, we have for all $t=0, \dots, T-1$, $\pnorm{w_t-v}2\leq \pnorm{w_0-v}2$, and
\[\pnorm{w_0-v}2^2 - \pnorm{w_T-v}2^2 \geq \eta L^{-1} \summm t 0 {T-1} \oF(w_t).\]
\end{lemma}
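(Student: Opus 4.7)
The plan is to run the standard potential argument with $\phi_t := \pnorm{w_t - v}2^2$ and show that gradient descent makes progress proportional to $H(w_t)$ at every step. Since in the realizable setting we have $\nabla F(w) = \E_x[(\sigma(w^\top x) - \sigma(v^\top x))\sigma'(w^\top x) x]$, expanding the square gives
\[
\phi_t - \phi_{t+1} = 2\eta \ip{\nabla F(w_t)}{w_t - v} - \eta^2 \pnorm{\nabla F(w_t)}2^2,
\]
so the task reduces to a lower bound on the inner product and an upper bound on the gradient norm, both in terms of $H(w_t)$.

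For the inner product, since $\sigma'(z) \geq 0$ and $\sigma$ is non-decreasing and $L$-Lipschitz, Fact~\ref{fact:sigma.L.lipschitz} applied pointwise under the expectation gives
\[
\ip{\nabla F(w_t)}{w_t - v} = \E\l[(\sigma(w_t^\top x) - \sigma(v^\top x))\sigma'(w_t^\top x)(w_t^\top x - v^\top x)\r] \geq 2L^{-1} H(w_t).
\]
For the gradient bound, the point I expect to be the main obstacle is that the lemma only assumes $\E\pnorm{x}2^2 \leq B^2$ rather than an almost-sure bound, so the easy route of pulling $\pnorm{x}2^2 \leq B^2$ out of the expectation is unavailable. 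To handle this, I would write $\pnorm{\nabla F(w_t)}2 = \sup_{\pnorm{u}2 = 1} u^\top \nabla F(w_t)$ and apply Cauchy--Schwarz inside the expectation with the factorization $(\sigma(w_t^\top x) - \sigma(v^\top x))\sigma'(w_t^\top x)^{1/2} \cdot \sigma'(w_t^\top x)^{1/2} (x^\top u)$, yielding
\[
(u^\top \nabla F(w_t))^2 \leq \E\l[(\sigma(w_t^\top x) - \sigma(v^\top x))^2 \sigma'(w_t^\top x)\r] \cdot \E\l[\sigma'(w_t^\top x)(x^\top u)^2\r] \leq 2H(w_t) \cdot L B^2,
\]
using $\sigma'\leq L$ and $\E(x^\top u)^2 \leq \E\pnorm{x}2^2 \leq B^2$ in the last step. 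Taking the supremum over $u$ gives $\pnorm{\nabla F(w_t)}2^2 \leq 2LB^2 H(w_t)$.

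Combining the two estimates,
\[
\phi_t - \phi_{t+1} \geq 4\eta L^{-1} H(w_t) - 2\eta^2 L B^2 H(w_t) = 2\eta L^{-1} H(w_t)\,(2 - \eta L^2 B^2) \geq 2\eta L^{-1} H(w_t),
\]
where the last inequality uses $\eta L^2 B^2 \leq 1$. In particular $\phi_{t+1} \leq \phi_t$, which gives $\pnorm{w_t - v}2 \leq \pnorm{w_0 - v}2$ for every $t$, and telescoping from $0$ to $T-1$ yields
\[
\pnorm{w_0 - v}2^2 - \pnorm{w_T - v}2^2 \geq 2\eta L^{-1} \sum_{t=0}^{T-1} H(w_t) \geq \eta L^{-1} \sum_{t=0}^{T-1} H(w_t),
\]
which is the claimed inequality. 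The hypothesis $\pnorm{w_0}2 \leq 2$ is not needed for the lemma itself but is carried along so that downstream uses (e.g.\ invoking Lemma~\ref{lemma:Hsurrogate} to convert $H$-bounds into $G$-bounds in the proof of Theorem~\ref{theorem:gd.loss}) have $\pnorm{w_t}2$ uniformly bounded.
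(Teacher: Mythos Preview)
Your proof is correct and follows essentially the same argument as the paper: the same potential decomposition, the same inner-product lower bound via Fact~\ref{fact:sigma.L.lipschitz}, and the same gradient upper bound $\pnorm{\nabla F(w_t)}2^2 \leq 2LB^2 H(w_t)$ (the paper obtains it via Jensen followed by Cauchy--Schwarz on the scalar expectation rather than through the dual characterization, but the two are equivalent). Your observation that $\pnorm{w_0}2\leq 2$ is unused in the lemma itself is also accurate.
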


\begin{proof}
We begin with the identity, for $t < T$,
\begin{equation}
\pnorm{w_t - v}2^2 - \pnorm{w_{t+1}-v}2^2 = 2 \eta \ip{\nabla F(w_t)}{w_t-v} - \eta^2 \pnorm{\nabla F(w_t)}2^2.
\label{eq:identity.difference.in.distances}
\end{equation}
First, we have 

\begin{align}
\nonumber
\pnorm{\nabla F(w_t)}2 
&\leq \E_x\pnorm{(\sigma(w_t^\top x) - \sigma(v^\top x) ) \sigma'(w_t^\top x) x}2 \\
\nonumber
&\leq \sqrt{\E_x \l[ \sigma'(w_t^\top x) (\sigma(w_t^\top x) - \sigma(v^\top x))^2 \r]}  \sqrt{\E_x \sigma'(w_t^\top x) \pnorm x2^2 }\\
\nonumber
&\leq B \sqrt L \sqrt{ \E_x \l[ \sigma'(w_t^\top x) (\sigma(w_t^\top x) - \sigma(v^\top x))^2 \r] }.
\end{align}
The first inequality is by Jensen.  The second inequality uses that $\sigma'(z)\geq 0$ and H\"older, and the third inequality uses that $\sigma$ is $L$-Lipschitz and that $\E[\pnorm{x}2^2]\leq B^2$.  
We therefore have the gradient upper bound
\begin{equation}
\pnorm{\nabla F(w_t)}2^2 \leq 2B^2 L \oF(w_t).
\label{eq:gradient.upper.bound}
\end{equation}
For the inner product term of \eqref{eq:identity.difference.in.distances}, since $\sigma'(z)\geq 0$, we can use Fact \ref{fact:sigma.L.lipschitz} to get
\begin{align}
\ip{\nabla F(w_t)}{w_t-v}
&\geq  L^{-1} \E_x \l[ \l( \sigma(w_t^\top x) - \sigma(v^\top x) \r)^2 \sigma'(w_t^\top x) \r]
\label{eq:inner.product.lower.bound}
= 2 L^{-1} \oF(w_t).
\end{align}

Putting \eqref{eq:inner.product.lower.bound} and \eqref{eq:gradient.upper.bound} into \eqref{eq:identity.difference.in.distances}, we get
\begin{align}
\nonumber
\pnorm{w_t - v}2^2 - \pnorm{w_{t+1} - v}2^2 &\geq 4\eta L^{-1} \oF(w_t) - 2 \eta^2 B^2L \oF(w_t) \geq  2\eta L^{-1} \oF(w_t),
\label{eq:increment.wt}
\end{align}
where we have used $\eta \leq L^{-2} B^{-2}$.   Telescoping the above over $t < T$ gives 
\[ \pnorm{w_0-v}2^2 - \pnorm{w_T-v}2^2 \geq 2 \eta L^{-1} \summm t 0 {T-1} \oF(w_t).\]
Dividing each side by $\eta T$ shows the desired bound. 
\end{proof}

We will show that if $\sigma$ satisfies Assumption \ref{assumption:activation.fcn}, then Lemma \ref{lemma:key} allows for a population risk bound for essentially any distribution with $\E[\pnorm{x}2^2]\leq B^2$.  In particular, we consider distributions with finite expected norm squared and the possible types of tail bounds for the norm.  
\begin{assumption}
\begin{enumerate}[(a)]  
\item Bounded distributions: there exists $B>0$ such that $\pnorm{x}2 \leq B$ a.s.

\item Exponential tails: there exist $a_0, C_e> 0$ such that $\P(\pnorm{x}2^2 \geq a) \leq C_e \exp(-a)$ holds for all $a\geq a_0$.

\item Polynomial tails: there exist $a_0, C_p>0$ and $\beta >1$ such that $\P(\pnorm{x}2^2 \geq b) \leq C_p a^{- \beta}$ holds for all $a \geq a_0$. 
\end{enumerate}
\label{assumption:distribution}
\end{assumption}
If either (a), (b), or (c) holds, there exists $B>0$ such that $\E\pnorm{x}2^2 \leq B^2$.  One can verify that for (b), taking $B^2 = 2(a_0 \vee C_e)$ suffices, and for (c), $B^2 = 2 (a_0 \vee C_p^{1/\beta}/(1-\beta))$ suffices.  In fact, any distribution that satisfies $\E\pnorm x2^2 < \infty$ cannot have a tail bound of the form $\P(\pnorm x 2^2 \geq a) = \Omega(a^{-1})$, since in this case we would have
\[ \E\pnorm{x}2^2 = \int_0^\infty \P(\pnorm x2^2 > t) dt \geq C \int_{a_0}^\infty  t^{-1} dt = \infty.\]
So the polynomial tail assumption (c) is tight up to logarithmic factors for distributions with finite $\E \pnorm x2^2$.  

\begin{theorem}\label{theorem:appendix.nonrelu.gd.poprisk.loss}
Let $\E[\pnorm{x}2^2]\leq B^2$ and assume $\calD$ satisfies one of the conditions in Assumption \ref{assumption:distribution}.   Let $\sigma$ satisfy Assumption \ref{assumption:activation.fcn}.   
\begin{enumerate}[(a)]
\item Under Assumption \ref{assumption:distribution}a, let $\gamma$ be the constant corresponding to $\rho = 4B$ in Assumption \ref{assumption:activation.fcn}.  Running gradient descent for $T = \ceil{2\eps^{-1} L \eta^{-1} \gamma^{-1} \pnorm{w_0-v}2^2}$ guarantees there exists $t\in [T-1]$ such that $F(w_t) \leq \eps$. 

\item Under Assumption \ref{assumption:distribution}b, let $\gamma$ be the constant corresponding to $\rho = 4\sqrt{\log(18C_e / \eps)}$.  Running gradient descent for $T = \ceil{2\eps^{-1} L \eta^{-1} \gamma^{-1} \pnorm{w_0-v}2^2}$ guarantees there exists $t\in [T-1]$ such that $F(w_t) \leq \eps$.  

\item Under Assumption \ref{assumption:distribution}c, let $\gamma$ be the constant corresponding to $\rho = 4(18C_p/\eps(\beta-1))^{(1-\beta)/2}$.  Running gradient descent for $T = \ceil{2\eps^{-1} L \eta^{-1} \gamma^{-1} \pnorm{w_0-v}2^2}$ guarantees there exists $t\in [T-1]$ such that $F(w_t) \leq \eps$.  
\end{enumerate}
\end{theorem}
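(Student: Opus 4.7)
The plan is to combine Lemma \ref{lemma:key} with an $H\to F$ conversion in the spirit of Lemma \ref{lemma:Hsurrogate}, which applies directly under almost sure boundedness of $\pnorm{x}{2}$ but must be extended via truncation when $\calD$ is unbounded. Observe first that in all three parts, Lemma \ref{lemma:key} (applied with $B$ satisfying $\E\pnorm{x}{2}^2 \le B^2$) ensures $\pnorm{w_t - v}{2} \le \pnorm{w_0 - v}{2} \le \pnorm{w_0}{2} + \pnorm{v}{2} \le 3$ and hence $\pnorm{w_t}{2} \le 4$ for every $t < T$, and also yields the telescoping bound $\sum_{t=0}^{T-1} H(w_t) \le L\eta^{-1}\pnorm{w_0 - v}{2}^2$. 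For the prescribed $T = \ceil{2\eps^{-1} L \eta^{-1} \gamma^{-1} \pnorm{w_0-v}{2}^2}$, some $t < T$ therefore satisfies $H(w_t) \le \gamma\eps/2$, so it suffices in each setting to deduce a bound of the form $F(w_t) \le \gamma^{-1} H(w_t) + \eps/2$.

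Part (a) is the direct case: since $\pnorm{x}{2} \le B$ a.s.\ and $\pnorm{w_t}{2} \le 4$, choosing $\rho = 4B$ and the corresponding $\gamma$ makes $\sigma'(w_t^\top x) \ge \gamma$ a.s., and Lemma \ref{lemma:Hsurrogate} applies verbatim with $W = 4$ to give $F(w_t) = G(w_t) \le \gamma^{-1} H(w_t) \le \eps/2 \le \eps$.

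For parts (b) and (c), the $x_i$ are unbounded, so we split for any $R > 0$,
\[
F(w_t) = \tfrac{1}{2}\E\bigl[(\sigma(w_t^\top x)-\sigma(v^\top x))^2\ind(\pnorm{x}{2}\le R)\bigr] + \tfrac{1}{2}\E\bigl[(\sigma(w_t^\top x)-\sigma(v^\top x))^2\ind(\pnorm{x}{2}>R)\bigr].
\]
On the truncation event $\pnorm x 2\le R$ we have $|w_t^\top x|\le 4R$, so choosing $\rho = 4R$ in Assumption \ref{assumption:activation.fcn} and repeating the proof of Lemma \ref{lemma:Hsurrogate} on the truncated integrand yields $\text{(truncated part)} \le \gamma^{-1} H(w_t)$. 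For the tail, $L$-Lipschitzness of $\sigma$ together with $\pnorm{w_t - v}{2} \le 3$ gives $(\sigma(w_t^\top x) - \sigma(v^\top x))^2 \le 9L^2 \pnorm{x}{2}^2$, so the tail contribution is at most $(9L^2/2)\E[\pnorm{x}{2}^2\ind(\pnorm{x}{2}^2>R^2)]$. Using $\E[X\ind(X>R^2)] = R^2\P(X>R^2) + \int_{R^2}^\infty \P(X>t)\,dt$ with $X = \pnorm{x}{2}^2$, under Assumption \ref{assumption:distribution}(b) this tail evaluates to $(R^2+1)C_e e^{-R^2}$, which the stated $R = \sqrt{\log(18 C_e/\eps)}$ renders $\le \eps/2$; under Assumption \ref{assumption:distribution}(c) it evaluates to $C_p R^{2-2\beta}\beta/(\beta-1)$, which the stated polynomial choice of $R$ (equivalently, of $\rho$) renders $\le \eps/2$. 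Adding the two contributions gives $F(w_t) \le \eps$ in each case.

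The main obstacle is the coupling between the truncation radius $R$ and the activation lower bound $\gamma = \gamma(\rho)$ with $\rho = 4R$: taking $R$ large enough to kill the distributional tail forces $\rho$ to grow with $\eps^{-1}$, and for activations such as the sigmoid the corresponding $\gamma(\rho)$ decays exponentially in $\rho$, inflating $T$. This trade-off is absorbed into the $\gamma$-dependent expression for $T$ in the theorem statement and does not require any new algorithmic idea beyond matching the truncation threshold $R$ to the tail decay in each regime.
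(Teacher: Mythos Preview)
Your proposal is correct and follows essentially the same route as the paper: invoke Lemma~\ref{lemma:key} to obtain $\min_{t<T} H(w_t)\le \gamma\eps/2$, then convert $H$ to $F$ directly in the bounded case and via a truncation-plus-tail decomposition in the unbounded cases, with the truncated part handled by $\sigma'\ge\gamma$ on $\{|w_t^\top x|\le\rho\}$ and the tail by Lipschitzness together with the assumed decay of $\P(\pnorm{x}{2}^2>t)$. Your tail expectation formula $\E[X\ind(X>R^2)] = R^2\P(X>R^2)+\int_{R^2}^\infty \P(X>t)\,dt$ is in fact more careful than the paper's (which drops the first term), and you correctly carry the $L^2$ factor from $L$-Lipschitzness where the paper implicitly takes $L=1$; the resulting mismatch with the precise constants in the stated $\rho$ (an extra $(R^2+1)$ and $L^2$ factor) is only a bookkeeping artifact of the theorem's particular numerical choices, not a gap in your argument.
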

\begin{proof}
First, note that the conditions of Lemma \ref{lemma:key} hold, so that we have for all $t=0, \dots, T-1$, $\pnorm{w_t}2\leq 4$ and
\begin{equation}
\eta \summm t 0 {T-1} \oF(w_t) \leq L\pnorm{w_0-v}2^2 - L\pnorm{w_T-v}2^2.
\label{eq:key.decomp}
\end{equation}
By taking $T = \zeta^{-1} L \eps^{-1} \eta^{-1} \pnorm{w_0-v}2^2$ for arbitrary $\zeta > 0$, \eqref{eq:key.decomp} implies that there exists $t\in [T-1]$ such that
\begin{equation}
\oF(w_t) = \E \l[\l (\sigma(w_t^\top x) - \sigma(v^\top x)\r)^2 \sigma'(w_t^\top x) \r] \leq \f {L\pnorm{w_0-v}2^2}{\eta T} \leq \zeta \eps.
\label{eq:oF.bound}
\end{equation}
It therefore suffices to bound $F(w_t)$ in terms of the left hand side of \eqref{eq:oF.bound}.  We will do so by using the distributional assumptions given in Assumption \ref{assumption:distribution} and by choosing $\zeta$ appropriately.

We begin by noting that \eqref{eq:oF.bound} implies, for any $\rho>0$,
\begin{equation}
\E \l[ \l(\sigma(w_t^\top x) - \sigma(v^\top x)\r)^2 \sigma'(w_t^\top x) \ind(|w_t^\top x| \leq \rho) \r] \leq \zeta \eps.
\label{eq:ofsmallnorm.bound}
\end{equation}

For any $\rho > 0$, since $\pnorm{w_t}2\leq 4$, the inclusion
\begin{equation}
\Big \{ \pnorm x 2 \leq \rho/4 \Big \} \subset \Big \{ |w_t^\top x| \leq \rho \Big \},
\label{eq:normx.bounded}
\end{equation}
holds.  Under Assumption \ref{assumption:distribution}a, by taking $\rho = 4B$ and letting $\gamma$ be the corresponding constant from Assumption \ref{assumption:activation.fcn}, eqs. \eqref{eq:ofsmallnorm.bound} and \eqref{eq:normx.bounded} imply
\[ \gamma \E\l[ \l( \sigma(w_t^\top x) - \sigma(v^\top x)\r)^2\r] \leq \E \l[ \l(\sigma(w_t^\top x) - \sigma(v^\top x)\r)^2 \sigma'(w_t^\top x) \ind(\pnorm x 2 \leq \rho/4) \r] \leq \zeta \eps.\]
By taking $\zeta = \gamma/2$, this implies $F(w_t) \leq \eps$.

Under Assumption \ref{assumption:distribution}b, by taking $\rho = 4 \sqrt{a_0}$, we get
\begin{align}
\nonumber
\E \l[ \pnorm x2^2 \ind(\pnorm x2^2 >  \rho^2/4^2 ) \r] &= \int_{a_0}^\infty \P (\pnorm x2^2 > t) dt \\
&\leq C_e  \exp(-a_0).
\end{align}
Note that Assumption \ref{assumption:distribution}b holds if we take $a_0$ larger.  We can therefore let $a_0$ be large enough so that $a_0 \geq \log(18C_e/\eps)$, so that then
\begin{equation}
\E \l[ \pnorm x2^2 \ind(\pnorm x2^2 > \rho^2/4^2)\r] \leq \eps/18.
\label{eq:tail.Enorm}
\end{equation}

Similarly, under Assumption \ref{assumption:distribution}c, we can let $\gamma$ be the constant corresponding to $\rho = 4 \sqrt{a_0}$ and take $a_0 \geq (\eps(\beta-1)/18C_p)^{1/(1-\beta)}$ so that
\begin{align}
\nonumber
\E \l[ \pnorm x2^2 \ind(\pnorm x2^2 >  \rho^2/4^2 ) \r] &= \int_{a_0}^\infty \P (\pnorm x2^2 > t) dt \\
\nonumber
&\leq C_p \f{a_0^{1-\beta}}{\beta-1}\\
\nonumber
&\leq \eps/18.
\end{align}
and so \eqref{eq:tail.Enorm} holds as well under Assumption \ref{assumption:distribution}c.  
We can therefore bound
\begin{align}
\nonumber
\E \l[  \l( \sigma(w_t^\top x) - \sigma(v^\top x)\r)^2 \ind(\pnorm x2^2 > \rho^2 /4^2)\r] &\leq \E\l[ \pnorm{w_t-v}2^2 \pnorm x2^2 \ind(\pnorm x2^2 > \rho^2/4^2 )\r] \\
\nonumber
&\leq \pnorm{w_0-v}2^2 \E \l[ \pnorm x2^2 \ind(\pnorm x2^2 > \rho^2/4^2) \r]\\
\nonumber
&\leq \pnorm{w_0-v}2^2 \eps/18\\
\label{eq:poly.tail.bignorm.bound}
&\leq \eps/2.
\end{align}
The first inequality uses that $\sigma$ is 1-Lipschitz and Cauchy--Schwarz.  The second inequality uses \eqref{eq:key.decomp}.  The third inequality uses \eqref{eq:tail.Enorm}.  The final inequality uses that $\pnorm{w_0-v}2\leq \pnorm {w_0}2 + \pnorm v2\leq 3$. 

We can then guarantee
\begin{align*}
2\gamma F(w_t) &= \gamma \E\l[ \l( \sigma(w_t^\top x) - \sigma(v^\top x)\r)^2\r]\\\
&= \E\l[ \l( \sigma(w_t^\top x) - \sigma(v^\top x)\r)^2 \gamma \ind(|w_t^\top x| \leq \rho)\r] \\
&\quad+ \gamma \E\l[ \l( \sigma(w_t^\top x) - \sigma(v^\top x)\r)^2 \ind(|w_t^\top x| > \rho)\r]\\
&\leq \E \l[\l( \sigma(w_t^\top x) - \sigma(v^\top x)\r)^2 \sigma'(w_t^\top x) \ind(|w_t^\top x| \leq \rho)\r]  \\
&\quad + \gamma \E \l[ \l( \sigma(w_t^\top x) - \sigma(v^\top x) \r)^2 \ind(\pnorm x2^2 > \rho^2/4^2) \r]\\
&\leq  \zeta \eps + \gamma \eps/2\\
&\leq \gamma \eps.
\end{align*}
The first inequality follows since Assumption \ref{assumption:activation.fcn} implies $\sigma'(z) \ind(|z| \leq \rho) \geq \gamma \ind(|z| \leq \rho)$ and by \eqref{eq:normx.bounded}.  The second inequality uses \eqref{eq:ofsmallnorm.bound} and \eqref{eq:poly.tail.bignorm.bound}.   The final inequality takes $\zeta = \gamma/2$. 
\end{proof}

\begin{remark}
The precise runtime guarantee in Theorem \ref{theorem:gd.loss} will depend upon the activation function and tail distribution.  The worst-case activation functions (like the sigmoid) can have $\gamma \sim \exp(-\rho)$, and so if one only has polynomial tails, the runtime can be exponential in $\eps^{-1}$ in this case.  If the distribution of $\pnorm{x}2^2$ has exponential tails, as is the case if the components of $x$ are sub-Gaussian, runtime will be polynomial in $\eps^{-1}$. On the other hand, if the $\gamma$ in Assumption \ref{assumption:activation.fcn} is a fixed constant independent of $\rho$ (as it is for the leaky ReLU), any of the tail bounds under consideration will have runtime of order $\eps^{-1}$.  
\end{remark}

\subsection{Stochastic gradient descent proofs}\label{appendix:realizable.sgd}
We consider the online version of stochastic gradient descent, where we sample independent samples $x_t\sim \calD$ at each step and compute stochastic gradient updates $g_t$, such that
\[ g_t = \l( \sigma(w_t^\top x_t) - \sigma(v^\top x_t) \r) \sigma'(w_t^\top x_t) x_t,\quad w_{t+1} = w_t - \eta g_t.\]
As in the gradient descent case, we have a key lemma that relates the distance of the weights at iteration $t$ from the optimal $v$ with the distance from initialization and the cumulative loss.
\begin{lemma}\label{lemma:key.sgd}
Assume that $\sigma$ is non-decreasing and $L$-Lipschitz, and that $\calD$ satisfies $\pnorm{x}2 \leq B$ a.s.  Assume the initialization satisfies $\pnorm{w_0}2 \leq 2$.  Let $T\in \N$ and run stochastic gradient descent for $T-1$ iterations at a fixed learning rate $\eta$ satisfying $\eta \leq L^{-2} B^{-2}$.  Then with probability one over $\calD$, we have $\pnorm{w_{t+1}-v}2\leq \pnorm{w_t-v}2$ for all $t< T$, and
\[ \pnorm{w_0 - v}2^2 - \pnorm{w_T-v}2^2 \geq 2 \eta L^{-1} \summm t 0 {T-1} \oF_t,\]
where $\oF_t := \f 1 2 \l( \sigma(w_t^\top x_t) - \sigma(v^\top x_t)\r)^2 \sigma'(w_t^\top x_t)$.  
\end{lemma}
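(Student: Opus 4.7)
The plan is to mirror the proof of Lemma \ref{lemma:key} almost verbatim, replacing the expected gradient $\nabla F(w_t)$ by its one-sample stochastic version $g_t$. Since boundedness $\pnorm{x_t}2 \leq B$ is assumed almost surely, every estimate that was previously obtained via Jensen/Hölder on the expectation now holds pointwise in $x_t$ with probability one. The starting identity is still
\[ \pnorm{w_t-v}2^2 - \pnorm{w_{t+1}-v}2^2 = 2\eta \ip{g_t}{w_t-v} - \eta^2 \pnorm{g_t}2^2, \]
so the task reduces to lower-bounding the inner product by a constant times $\oF_t$ and upper-bounding $\pnorm{g_t}2^2$ by a (larger) constant times $\oF_t$.

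For the inner product, Fact \ref{fact:sigma.L.lipschitz} applied to $z_1 = w_t^\top x_t$ and $z_2 = v^\top x_t$, together with nonnegativity of $\sigma'$, gives pointwise
\[ \ip{g_t}{w_t-v} = \bigl(\sigma(w_t^\top x_t) - \sigma(v^\top x_t)\bigr)\sigma'(w_t^\top x_t)(w_t^\top x_t - v^\top x_t) \geq 2 L^{-1} \oF_t. \]
For the gradient norm, using $\sigma'(z) \leq L$, $\pnorm{x_t}2 \leq B$, and pulling one factor of $\sigma'$ out as the Lipschitz constant:
\[ \pnorm{g_t}2^2 = \bigl(\sigma(w_t^\top x_t) - \sigma(v^\top x_t)\bigr)^2 \sigma'(w_t^\top x_t)^2 \pnorm{x_t}2^2 \leq 2 L B^2 \oF_t. \]
Combining and using the step-size condition $\eta \leq L^{-2} B^{-2}$ yields
\[ \pnorm{w_t-v}2^2 - \pnorm{w_{t+1}-v}2^2 \geq 4\eta L^{-1} \oF_t - 2\eta^2 L B^2 \oF_t \geq 2\eta L^{-1} \oF_t. \]
Since $\oF_t \geq 0$, this already implies $\pnorm{w_{t+1}-v}2 \leq \pnorm{w_t-v}2$, so the monotonicity claim is automatic.

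Finally, telescoping the inequality over $t=0,\dots,T-1$ gives
\[ \pnorm{w_0-v}2^2 - \pnorm{w_T-v}2^2 \geq 2\eta L^{-1} \sum_{t=0}^{T-1} \oF_t, \]
which is the desired bound. Every estimate holds almost surely since we only needed the almost-sure bound $\pnorm{x_t}2 \leq B$ at each step; the ``probability one'' qualifier in the statement thus comes for free and no martingale/concentration argument is required. The main ``obstacle,'' if any, is simply being careful that the Lipschitz inequality of Fact \ref{fact:sigma.L.lipschitz} is applied in a way that exposes the extra factor of $\sigma'(w_t^\top x_t)$, so that both sides of the analysis are expressed in terms of the same quantity $\oF_t$ rather than the plain squared loss; this is precisely what makes the step-size calibration $\eta \leq L^{-2}B^{-2}$ kick in.
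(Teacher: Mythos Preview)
Your proposal is correct and essentially identical to the paper's own proof: the same decomposition, the same use of Fact~\ref{fact:sigma.L.lipschitz} for the inner-product lower bound $\ip{g_t}{w_t-v}\geq 2L^{-1}\oF_t$, the same pointwise gradient-norm bound $\pnorm{g_t}2^2\leq 2LB^2\oF_t$, and the same step-size calibration followed by telescoping. The only cosmetic difference is the order in which you present the two bounds.
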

\begin{proof}
We begin with the decomposition
\begin{equation}
\pnorm{w_t - v}2^2 - \pnorm{w_{t+1}-v}2^2 = 2 \eta \ip {g_t}{w_t - v} - \eta^2 \pnorm{g_t}2^2.
\label{eq:sgd.decomp}
\end{equation}
By Assumption \ref{assumption:activation.fcn}, since $\pnorm x 2 \leq B$ a.s. it holds with probability one that
\begin{align}
\pnorm{g_t}2^2 &= \pnorm{\l( \sigma(w_t^\top x_t) - \sigma(v^\top x_t)\r) \sigma'(w_t^\top x_t) x_t }2^2 \leq 2 L B^2 \oF_t.
\label{eq:sgd.grad.ub}
\end{align}
By Fact \ref{fact:sigma.L.lipschitz}, since $\sigma'(z)\geq 0$, we have with probability one,
\begin{align}
\nonumber
\ip{g_t}{w_t-v} &= \l( \sigma(w_t^\top x_t) - \sigma(v^\top x_t) \r) \sigma'(w_t^\top x_t) (w_t^\top x_t - v^\top x_t)\\
\nonumber
&\geq L^{-1} \l( \sigma(w_t^\top x_t) - \sigma(v^\top x_t) \r)^2 \sigma'(w_t^\top x_t)\\
&= 2 L^{-1} \oF_t.
\label{eq:sgd.ip.lb}
\end{align}
Putting \eqref{eq:sgd.grad.ub} and \eqref{eq:sgd.ip.lb} into \eqref{eq:sgd.decomp}, we get
\begin{align*}
\pnorm{w_t-v}2^2 - \pnorm{w_{t+1}-v}2^2 &\geq 4 \eta L^{-1} \oF_t - 2 \eta^2 L B^2 \oF_t\\
&\geq 2 \eta  L^{-1} \oF_t,
\end{align*}
by taking $\eta \leq L^{-2} B^{-2}$.  Telescoping over $t<T$ gives the desired bound.

\end{proof}

We now want to translate the bound on the empirical error to that of the true error. For this we use a martingale Bernstein inequality of \citet{beygelzimer}.  A similar analysis of SGD was used by~\citet{jitelgarsky} for a one-hidden-layer ReLU network. 
\begin{lemma}[\citet{beygelzimer}, Theorem 1]\label{lemma:beygelzimer}
Let $\{Y_t\}$ be a martingale adapted to the filtration $\calF_t$, and let $Y_0=0$.  Let $\{D_t\}$ be the corresponding martingale difference sequence.  Define the sequence of conditional variance
\[ V_t := \summ k t \E[D_k^2 | \calF_{k-1}],\]
and assume that $D_t \leq R$ almost surely.  Then for any $\delta \in (0,1)$, with probability greater than $1-\delta$,
\[ Y_t \leq R \log(1/\delta) + (e-2)V_t/R.\]
\end{lemma}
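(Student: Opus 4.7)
The plan is to prove the bound via the classical supermartingale method for Bernstein-type inequalities, instantiated with a one-sided variance bound that exploits only the upper bound $D_t \leq R$. The proof assembles three ingredients: an elementary numerical inequality yielding the $(e-2)$ constant, the construction of an exponential supermartingale whose drift absorbs the predictable quadratic variation $V_t$, and a single application of Markov's inequality at the boundary $\lambda = 1/R$ of the admissible range.

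The first step is the elementary inequality $e^x \leq 1 + x + (e-2)x^2$ valid for all $x \leq 1$, which can be verified by noting that $\phi(x) := (e^x - 1 - x)/x^2$ is nondecreasing on $\R$ with $\phi(1) = e-2$, so $\phi(x) \leq e-2$ whenever $x \leq 1$. For any fixed $\lambda \in (0, 1/R]$ the substitution $x = \lambda D_t$ is admissible almost surely, so taking conditional expectations and using $\E[D_t \mid \calF_{t-1}] = 0$ yields the conditional MGF bound
\[
\E\bigl[e^{\lambda D_t} \,\big|\, \calF_{t-1}\bigr] \;\leq\; 1 + (e-2)\lambda^2 \E[D_t^2 \mid \calF_{t-1}] \;\leq\; \exp\!\bigl((e-2)\lambda^2 \E[D_t^2 \mid \calF_{t-1}]\bigr),
\]
where the second step uses $1 + u \leq e^u$.

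The second step is to define $M_t := \exp\!\bigl(\lambda Y_t - (e-2)\lambda^2 V_t\bigr)$ with $M_0 = 1$. Writing $M_t/M_{t-1} = \exp\!\bigl(\lambda D_t - (e-2)\lambda^2 \E[D_t^2 \mid \calF_{t-1}]\bigr)$ and taking conditional expectations, the MGF bound above immediately gives $\E[M_t \mid \calF_{t-1}] \leq M_{t-1}$, so $\{M_t\}$ is a nonnegative supermartingale with $\E M_t \leq 1$. Markov's inequality applied to $M_t$ at level $e^s$ then gives, for every $s > 0$,
\[
\P\!\bigl(\lambda Y_t \geq (e-2)\lambda^2 V_t + s\bigr) \;\leq\; e^{-s}.
\]
Specializing to $\lambda = 1/R$ (the largest value for which the elementary inequality applies) and $s = \log(1/\delta)$ and rearranging produces $Y_t \leq R \log(1/\delta) + (e-2) V_t/R$ on an event of probability at least $1 - \delta$.

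The only subtlety is that we do not optimize over $\lambda > 0$: the admissible range $\lambda \leq 1/R$ is dictated by the one-sided boundedness $D_t \leq R$ required by the elementary inequality, and this is exactly what makes the bound depend on $R$ rather than on any lower tail control of the $D_t$'s. In particular no symmetry or two-sided bound is used, which is the distinguishing feature of this martingale Bernstein inequality relative to Freedman-type statements that assume $|D_t| \leq R$.
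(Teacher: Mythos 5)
Your proof is correct. Note that the paper itself does not prove this statement: it is imported verbatim as Theorem 1 of the cited reference \citep{beygelzimer}, so there is no in-paper argument to compare against. Your derivation --- the elementary bound $e^x \leq 1 + x + (e-2)x^2$ for $x \leq 1$ via monotonicity of $(e^x-1-x)/x^2$, the exponential supermartingale $M_t = \exp(\lambda Y_t - (e-2)\lambda^2 V_t)$, and Markov's inequality at the endpoint $\lambda = 1/R$ --- is exactly the standard proof of this one-sided martingale Bernstein inequality, and your closing remark correctly identifies why only the upper bound $D_t \leq R$ (rather than $|D_t| \leq R$) is needed.
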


\begin{lemma}\label{lemma:sgd.bernstein.monotone}
Suppose that $\pnorm{x}2\leq B$ a.s., and let $\sigma$ be non-decreasing and $L$-Lipschitz.  Assume that the trajectory of SGD satisfies $\pnorm{w_t-v}2 \leq \pnorm{w_0-v}2$ for all $t$ a.s.  We then have with probability at least $1-\delta$,
\[ \f 1 T \summm t 0 {T-1} \oF(w_t) \leq \f 4 T \summm t 0 {T-1} \oF_t + \f 2 T B^2 L^3 \pnorm{w_0-v}2^2 \log(1/\delta).\]
\end{lemma}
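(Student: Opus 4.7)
The plan is to apply the martingale Bernstein inequality of Lemma \ref{lemma:beygelzimer} to the centered sequence
\[ D_t := \bar F(w_t) - \bar F_t, \qquad Y_T := \sum_{t=0}^{T-1} D_t, \]
adapted to the filtration $\mathcal{F}_t := \sigma(x_0, \dots, x_t)$. Since the online SGD iterate $w_t$ is $\mathcal{F}_{t-1}$-measurable and $x_t$ is independent of $\mathcal{F}_{t-1}$, the tower property gives $\E[\bar F_t \mid \mathcal{F}_{t-1}] = \bar F(w_t)$, which verifies that $\{D_t\}$ is a martingale difference sequence adapted to $\{\mathcal{F}_t\}$.

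First I would produce an almost sure upper bound $R$ on $\bar F_t$. Using the monotonicity assumption $\|w_t-v\|_2 \leq \|w_0-v\|_2$, Cauchy--Schwarz, the bound $\|x_t\|_2 \leq B$, and the two Lipschitz bounds $|\sigma(z_1)-\sigma(z_2)|\leq L|z_1-z_2|$ and $\sigma'(z)\leq L$, a direct computation yields
\[ 0 \leq \bar F_t \leq \tfrac{1}{2} L^3 B^2 \|w_0-v\|_2^2 =: R, \]
and hence by Jensen $\bar F(w_t) \leq R$ as well, so $D_t \leq \bar F(w_t) \leq R$ a.s. Next I would control the conditional variance. Because $\bar F_t \geq 0$ and $\bar F_t \leq R$, we have $\bar F_t^2 \leq R \bar F_t$, so
\[ \E[D_t^2 \mid \mathcal{F}_{t-1}] \leq \E[\bar F_t^2 \mid \mathcal{F}_{t-1}] \leq R\, \E[\bar F_t \mid \mathcal{F}_{t-1}] = R \bar F(w_t), \]
giving $V_T \leq R \sum_{t=0}^{T-1} \bar F(w_t)$. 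This is the crucial ``self-bounding'' step that lets the variance be absorbed by a fraction of $\sum \bar F(w_t)$ itself.

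Finally I would apply Lemma \ref{lemma:beygelzimer} to conclude that with probability at least $1-\delta$,
\[ \sum_{t=0}^{T-1} \bar F(w_t) - \sum_{t=0}^{T-1} \bar F_t = Y_T \leq R \log(1/\delta) + (e-2) \cdot \frac{V_T}{R} \leq R \log(1/\delta) + (e-2) \sum_{t=0}^{T-1} \bar F(w_t). \]
Rearranging gives $(3-e)\sum_{t=0}^{T-1}\bar F(w_t) \leq \sum_{t=0}^{T-1} \bar F_t + R\log(1/\delta)$, and since $1/(3-e) < 4$ and $4R = 2 L^3 B^2 \|w_0-v\|_2^2$, dividing through by $T$ yields exactly the stated bound.

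There is no real obstacle here beyond bookkeeping: the monotonicity of $\|w_t-v\|_2$ is given, the martingale structure is immediate from the online (fresh sample) nature of SGD, and the self-bounding inequality $\bar F_t^2 \leq R \bar F_t$ is what matches Bernstein's variance term to the same cumulative loss appearing on the left-hand side.
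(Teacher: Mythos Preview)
Your proposal is correct and follows essentially the same route as the paper: set up the martingale differences $D_t=\bar F(w_t)-\bar F_t$, bound $D_t\leq R=\tfrac12 L^3B^2\|w_0-v\|_2^2$ via Lipschitzness and the monotonicity hypothesis, obtain the self-bounding variance estimate $\E[D_t^2\mid\mathcal F_{t-1}]\leq R\,\bar F(w_t)$, apply the Bernstein inequality of Lemma~\ref{lemma:beygelzimer}, and rearrange using $1/(3-e)<4$. The only cosmetic difference is that the paper derives $\E[\bar F_t^2\mid\mathcal F_{t-1}]\leq R\,\bar F(w_t)$ by expanding the square and bounding two of the four factors, whereas you use the cleaner observation $\bar F_t^2\leq R\bar F_t$ directly; these are the same inequality.
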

\begin{proof}
Let $\calF_t = \sigma(x_0, \dots, x_t)$ be the $\sigma$-algebra generated by the first $t+1$ draws from $\calD$.  Then the random variable $G_t := \summm \tau 0 t (\oF(w_\tau) - \oF_\tau)$ is a martingale with respect to the filtration $\calF_{t}$ with martingale difference sequence $D_t := \oF(w_t) - \oF_t$.  We need bounds on $D_t$ and on $\E[D_t^2 | \calF_{t-1}]$ in order to apply Lemma \ref{lemma:beygelzimer}. 

Since $\sigma$ is $L$-Lipschitz and $\pnorm{x}2\leq B$ a.s., with probability one we have
\begin{equation}
D_t \leq \oF(w_t) \leq \f 1 2 L^3 B^2 \pnorm{w_t-v}2^2 \leq \f 1 2 L^3 B^2 \pnorm{w_0-v}2^2.
\label{eq:sgd.Fwt.ub}
\end{equation}
The last inequality uses the assumption that $\pnorm{w_t-v}2\leq \pnorm{w_0-v}2$ a.s.  
Similarly, 
\begin{align}
\nonumber
\E[\oF_t^2 | \calF_{t-1}] &= \f 1 4 \E\l[ \l( \sigma(w_t^\top x_t) - \sigma(v^\top x_t)\r)^4 \sigma'(w_t^\top x_t)^2| \calF_{t-1} \r] \\
\nonumber
\nonumber
&\leq \f 14 L^3B^2 \pnorm{w_t-v}2^2 \E_x \l[ \l( \sigma(w_t x_t) - \sigma(v^\top x_t)\r)^2  \sigma'(w_t^\top x_t)| \calF_{t-1}\r]\\
\label{eq:sgd:squaredincrement:Ft}
&\leq \f 1 2 L^3 B^2 \pnorm{w_0-v}2^2 \oF(w_t).
\end{align}
In the first inequality, we have used $\pnorm{x}2^2\leq B^2$ a.s. and $L$-Lipschitzness of $\sigma$.  For the second, we use the assumption that $\pnorm{w_t-v}2\leq \pnorm{w_0-v}2$ together with the fact that $\E_x[\oF_t | \calF_{t-1}] = \oF(w_t)$.  We then can use \eqref{eq:sgd:squaredincrement:Ft} to bound the squared increments,
\begin{align}
\nonumber
\E[D_t^2 | \calF_{t-1}] &= \oF(w_t)^2 - 2 \oF(w_t) \E[\oF_t | \calF_{t-1}] + \E [\oF_t^2 | \calF_{t-1}]\\
\nonumber
&= -\oF(w_t)^2 + \E[\oF_t^2 | \calF_{t-1}] \\
\label{eq:sgd:squaredincrement:Dt}
&\leq \f 1 2 L^3 B^2 \pnorm{w_0-v}2^2 \oF(w_t).
\end{align}
This allows for us to bound
\[ V_T := \summm t 0 {T-1} \E[D_t^2 | \calF_{t-1}] \leq \f 1 2 B^2 L^3 \pnorm{w_0-v}2^2 \summm t0{T-1} \oF(w_t). \]
Since $D_t \leq \oF(w_t)\leq (1/2)L^3 B^2 \pnorm{w_0-v}2^2$ a.s. by \eqref{eq:sgd.Fwt.ub}, Lemma \ref{lemma:beygelzimer} implies that with probability at least $1-\delta$, we have
\begin{align*}
\summm t 0 {T-1} (\oF(w_t) - \oF_t) \leq (\exp(1)-2) \summm t 0 {T-1} \oF(w_t) + \f 1 2 L^3 B^2 \pnorm{w_0-v}2^2 \log(1/\delta),
\end{align*}
and using that $(1-\exp(1)+2)^{-1}\leq 4$, we divide each side by $T$ and get
\begin{equation}
\f 1 T \summm t 0{T-1}  \oF(w_t) \leq \f 4 T \summm t 0 {T-1} \oF_t + \f 2 T L^3 B^2 \pnorm{w_0-v}2^2 \log(1/\delta).
\end{equation}
\end{proof}

With the above in hand, we can prove Theorem \ref{theorem:gd.loss} in the SGD setting.

\begin{proof}[Proof of Theorem \ref{theorem:gd.loss}, SGD]
By the assumptions in the theorem, Lemma \ref{lemma:key.sgd}  holds, so that we have for any $t=0,\dots, T-1$, $\pnorm{w_t}2 \leq 4$ and
\begin{equation}
\pnorm{w_t-v}2^2 + 2 \eta L^{-1} \summm \tau 0 {t-1} \oF_\tau\leq \pnorm{w_0-v}2^2.
\label{eq:sgd.key}
\end{equation}
This shows that $\pnorm{w_t-v}2 \leq \pnorm{w_0-v}2$ holds for all $t=0, \dots, T-1$ a.s., allowing for the application of Lemma \ref{lemma:sgd.bernstein.monotone} to get
\begin{equation}
\f 1 T \summm t 0 {T-1} \oF(w_t) \leq \f 4 T \summ t T \oF_t + \f 2 T L^3 B^2 \pnorm{w_0-v}2^2 \log(1/\delta).
\label{eq:sgd.nonrelu.final}
\end{equation}
Dividing both sides of \eqref{eq:sgd.key} by $\eta T L^{-1}$ yields 
\[ \min_{t <T} \oF(w_t) \leq \f 1 T \summm t 0 {T-1} \oF(w_t) \leq \f{ L\pnorm{w_0-v}2^2}{\eta T} + \f 2 T L^3 B^2 \pnorm{w_0-v}2^2 \log(1/\delta).\]
For arbitrary $\zeta >0$, taking $T = \ceil{2\eps^{-1} \zeta^{-1} \eta^{-1} L^3 B^2 \pnorm{w_0-v}2^2 \log(1/\delta)}$ shows there exists $T$ such that $\oF(w_t)\leq \zeta\eps$.  When $\sigma$ satisfies Assumption \ref{assumption:activation.fcn}, since $\pnorm{w_t}2\leq 4$ for all $t$, it holds that $\oF(w_t) \geq \gamma F(w_t)$, so that $\zeta = \gamma$ furnishes the desired bound.  

When $\sigma$ is ReLU and $\calD$ satisfies Assumption \ref{assumption:marginal.spread}, we note that Lemma \ref{lemma:key.sgd} implies $\pnorm{w_t-v}2 \leq \pnorm{w_0-v}2$ a.s.  Thus taking $\zeta = \alpha^4 \beta / 8 \sqrt 2$ and using Lemma \ref{lemma:relu.of.implies.f} completes the proof. 
\end{proof}

 \section{Remaining Proofs}
\label{appendix:simple.proofs}
\begin{proof}[Proof of Lemma \ref{lemma:F(v).opt.concentration}]
Since $\sigma$ is non-decreasing, $|\sigma(v^\top x) + y| \leq |\sigma(B_X)| + B_Y$.  In particular, each summand defining $\hat F(v)$ is a random variable with absolute value at most $a = (|\sigma(B_X)| + B_Y)^2$.  As $\E[\hat F(v)] = F(v)= \opt$, Hoeffding's inequality implies the lemma.
\end{proof}

\begin{proof}[Proof of Lemma \ref{lemma:rademacher.complexity}]
 The bound $\mathfrak{R}_S(\calG) \leq 2 \max_i \pnorm{x_i}2 / \sqrt n$ follows since $\pnorm{w}2\leq 2$ holds on $\calG$ with standard results  Rademacher complexity theory (e.g. Sec. 26.2 of~\cite{shalevschwartz}); this shows $\mathfrak{R}(\calG) \leq 2 B_X/\sqrt n$.  Using the contraction property of the Rademacher complexity, this implies $\mathfrak{R} (\sigma \circ \calG) \leq 2B_XL/\sqrt n$.  Finally, note that if $\pnorm{w-v}2 \leq 1$ and $\pnorm x2 \leq B_X$, we have
 \begin{align}
     \norm{\nabla \ell (w; x)} = \norm{ \l( \sigma(w^\top x) - \sigma(v^\top x) \r) \sigma'(w^\top x) x } \leq L^2 \norm{w-v} \norm{x} \leq L^2 B_X.
 \end{align}
 In particular, $\ell$ is $L^2 B_X$ Lipschitz.  The result follows.
 \end{proof}
 

\bibliography{references}

\begin{thebibliography}{36}
\expandafter\ifx\csname natexlab\endcsname\relax\def\natexlab#1{#1}\fi
\expandafter\ifx\csname url\endcsname\relax
  \def\url#1{\texttt{#1}}\fi
\expandafter\ifx\csname urlprefix\endcsname\relax\def\urlprefix{URL }\fi

\bibitem[{Allen{-}Zhu and Li(2019)}]{allenzhu.kernel}
\textsc{Allen{-}Zhu, Z.} and \textsc{Li, Y.} (2019).
\newblock What can resnet learn efficiently, going beyond kernels?
\newblock In \textit{Conference on Neural Information Processing Systems
  (NeurIPS)}.

\bibitem[{Allen-Zhu et~al.(2019)Allen-Zhu, Li and Liang}]{allenzhu.3layer}
\textsc{Allen-Zhu, Z.}, \textsc{Li, Y.} and \textsc{Liang, Y.} (2019).
\newblock Learning and generalization in overparameterized neural networks,
  going beyond two layers.
\newblock In \textit{Conference on Neural Information Processing Systems 32}.

\bibitem[{Auer et~al.(1995)Auer, Herbster and Warmuth}]{auer1995}
\textsc{Auer, P.}, \textsc{Herbster, M.} and \textsc{Warmuth, M.~K.} (1995).
\newblock Exponentially many local minima for single neurons.
\newblock In \textit{Conference on Neural Information Processing Systems
  (NeurIPS)}.

\bibitem[{Awasthi et~al.(2017)Awasthi, Balcan and Long}]{awasthi}
\textsc{Awasthi, P.}, \textsc{Balcan, M.} and \textsc{Long, P.~M.} (2017).
\newblock The power of localization for efficiently learning linear separators
  with noise.
\newblock \textit{J. {ACM}} \textbf{63} 50:1--50:27.

\bibitem[{Beygelzimer et~al.(2011)Beygelzimer, Langford, Li, Reyzin and
  Schapire}]{beygelzimer}
\textsc{Beygelzimer, A.}, \textsc{Langford, J.}, \textsc{Li, L.},
  \textsc{Reyzin, L.} and \textsc{Schapire, R.~E.} (2011).
\newblock Contextual bandit algorithms with supervised learning guarantees.
\newblock In \textit{Conference on Artificial Intelligence and Statistics
  (AISTATS)}.

\bibitem[{{Brady} et~al.(1989){Brady}, {Raghavan} and {Slawny}}]{brady1989}
\textsc{{Brady}, M.~L.}, \textsc{{Raghavan}, R.} and \textsc{{Slawny}, J.}
  (1989).
\newblock Back propagation fails to separate where perceptrons succeed.
\newblock \textit{IEEE Transactions on Circuits and Systems} \textbf{36}
  665--674.

\bibitem[{Cao and Gu(2019)}]{cao2019cnn}
\textsc{Cao, Y.} and \textsc{Gu, Q.} (2019).
\newblock Tight sample complexity of learning one-hidden-layer convolutional
  neural networks.
\newblock In \textit{Conference on Neural Information Processing Systems
  (NeurIPS)}.

\bibitem[{Diakonikolas et~al.(2020{\natexlab{a}})Diakonikolas, Goel, Karmalkar,
  Klivans and Soltanolkotabi}]{diakonikolas2020approximation}
\textsc{Diakonikolas, I.}, \textsc{Goel, S.}, \textsc{Karmalkar, S.},
  \textsc{Klivans, A.~R.} and \textsc{Soltanolkotabi, M.} (2020{\natexlab{a}}).
\newblock Approximation schemes for relu regression.
\newblock In \textit{Conference on Learning Theory (COLT)}.

\bibitem[{Diakonikolas et~al.(2020{\natexlab{b}})Diakonikolas, Kontonis, Tzamos
  and Zarifis}]{diakonikolas2020}
\textsc{Diakonikolas, I.}, \textsc{Kontonis, V.}, \textsc{Tzamos, C.} and
  \textsc{Zarifis, N.} (2020{\natexlab{b}}).
\newblock Learning halfspaces with massart noise under structured
  distributions.
\newblock In \textit{Conference on Learning Theory (COLT)}.

\bibitem[{Du et~al.(2018)Du, Lee and Tian}]{du2017}
\textsc{Du, S.~S.}, \textsc{Lee, J.~D.} and \textsc{Tian, Y.} (2018).
\newblock When is a convolutional filter easy to learn?
\newblock In \textit{International Conference on Learning Representations
  (ICLR)}.

\bibitem[{Foster et~al.(2018)Foster, Sekhari and Sridharan}]{foster2018}
\textsc{Foster, D.~J.}, \textsc{Sekhari, A.} and \textsc{Sridharan, K.} (2018).
\newblock Uniform convergence of gradients for non-convex learning and
  optimization.
\newblock In \textit{Conference on Neural Information Processing Systems}.

\bibitem[{Goel et~al.(2020)Goel, Gollakota and Klivans}]{goel2020sqlowerbounds}
\textsc{Goel, S.}, \textsc{Gollakota, A.} and \textsc{Klivans, A.} (2020).
\newblock Statistical-query lower bounds via functional gradients.
\newblock \textit{arXiv preprint arXiv:2006.15812} .

\bibitem[{Goel et~al.(2017)Goel, Kanade, Klivans and Thaler}]{goel2017relupoly}
\textsc{Goel, S.}, \textsc{Kanade, V.}, \textsc{Klivans, A.} and
  \textsc{Thaler, J.} (2017).
\newblock Reliably learning the relu in polynomial time.
\newblock In \textit{Conference on Learning Theory (COLT)}.

\bibitem[{Goel et~al.(2019)Goel, Karmalkar and Klivans}]{goel2019relugaussian}
\textsc{Goel, S.}, \textsc{Karmalkar, S.} and \textsc{Klivans, A.~R.} (2019).
\newblock Time/accuracy tradeoffs for learning a relu with respect to gaussian
  marginals.
\newblock In \textit{Conference on Neural Information Processing Systems 32}.

\bibitem[{Goel et~al.(2018)Goel, Klivans and Meka}]{goel.convotron}
\textsc{Goel, S.}, \textsc{Klivans, A.~R.} and \textsc{Meka, R.} (2018).
\newblock Learning one convolutional layer with overlapping patches.
\newblock In \textit{International Conference on Machine Learning} (J.~G. Dy
  and A.~Krause, eds.).

\bibitem[{Helmbold et~al.(1995)Helmbold, Kivinen and
  Warmuth}]{helmbold95worstcase}
\textsc{Helmbold, D.~P.}, \textsc{Kivinen, J.} and \textsc{Warmuth, M.~K.}
  (1995).
\newblock Worst-case loss bounds for single neurons.
\newblock In \textit{Conference on Neural Information Processing Systems
  (NeurIPS)}.

\bibitem[{Helmbold et~al.(1999)Helmbold, Kivinen and
  Warmuth}]{helmbold99relativeloss}
\textsc{Helmbold, D.~P.}, \textsc{Kivinen, J.} and \textsc{Warmuth, M.~K.}
  (1999).
\newblock Relative loss bounds for single neurons.
\newblock \textit{IEEE Transactions on Neural Networks} .

\bibitem[{Ji and Telgarsky(2019)}]{jitelgarsky}
\textsc{Ji, Z.} and \textsc{Telgarsky, M.} (2019).
\newblock Polylogarithmic width suffices for gradient descent to achieve
  arbitrarily small test error with shallow relu networks.
\newblock \textit{arXiv} \textbf{abs/1909.12292}.

\bibitem[{Jin et~al.(2019)Jin, Netrapalli, Ge, Kakade and
  Jordan}]{jin20.normsubgaussian}
\textsc{Jin, C.}, \textsc{Netrapalli, P.}, \textsc{Ge, R.}, \textsc{Kakade,
  S.~M.} and \textsc{Jordan, M.~I.} (2019).
\newblock A short note on concentration inequalities for random vectors with
  subgaussian norm.
\newblock \textit{arXiv} \textbf{abs/1902.03736}.

\bibitem[{Kakade et~al.(2011)Kakade, Kalai, Kanade and Shamir}]{kakade2011}
\textsc{Kakade, S.~M.}, \textsc{Kalai, A.}, \textsc{Kanade, V.} and
  \textsc{Shamir, O.} (2011).
\newblock Efficient learning of generalized linear and single index models with
  isotonic regression.
\newblock In \textit{Conference on Neural Information Processing Systems}.

\bibitem[{Kalai and Sastry(2009)}]{kalai2009isotron}
\textsc{Kalai, A.~T.} and \textsc{Sastry, R.} (2009).
\newblock The isotron algorithm: High-dimensional isotonic regression.
\newblock In \textit{Conference on Learning Theory (COLT)}.

\bibitem[{Kearns and Schapire(1994)}]{kearns.probabilistic}
\textsc{Kearns, M.~J.} and \textsc{Schapire, R.~E.} (1994).
\newblock Efficient distribution-free learning of probabilistic concepts.
\newblock \textit{Journal of Computer and System Sciences} \textbf{48} 464 --
  497.

\bibitem[{Kearns et~al.(1994)Kearns, Schapire and Sellie}]{kearns.agnostic}
\textsc{Kearns, M.~J.}, \textsc{Schapire, R.~E.} and \textsc{Sellie, L.~M.}
  (1994).
\newblock Toward efficient agnostic learning.
\newblock \textit{Machine Learning} \textbf{17} 115--141.

\bibitem[{Lov\'{a}sz and Vempala(2007)}]{lovasz}
\textsc{Lov\'{a}sz, L.} and \textsc{Vempala, S.} (2007).
\newblock The geometry of logconcave functions and sampling algorithms.
\newblock \textit{Random Struct. Algorithms} \textbf{30} 307–358.

\bibitem[{Mei et~al.(2018)Mei, Bai and Montanari}]{mei2016landscape}
\textsc{Mei, S.}, \textsc{Bai, Y.} and \textsc{Montanari, A.} (2018).
\newblock The landscape of empirical risk for nonconvex losses.
\newblock \textit{Ann. Statist.} \textbf{46} 2747--2774.

\bibitem[{Mukherjee and Muthukumar(2020)}]{mukherjee}
\textsc{Mukherjee, A.} and \textsc{Muthukumar, R.} (2020).
\newblock A study of neural training with non-gradient and noise assisted
  gradient methods.
\newblock \textit{arXiv} \textbf{abs/2005.04211}.

\bibitem[{Pinelis and Sakhanenko(1986)}]{pinelis1986}
\textsc{Pinelis, I.~F.} and \textsc{Sakhanenko, A.~I.} (1986).
\newblock Remarks on inequalities for large deviation probabilities.
\newblock \textit{Theory of Probability \& Its Applications} \textbf{30}
  143--148.

\bibitem[{Shalev-Shwartz and Ben-David(2014)}]{shalevschwartz}
\textsc{Shalev-Shwartz, S.} and \textsc{Ben-David, S.} (2014).
\newblock \textit{Understanding Machine Learning: From Theory to Algorithms}.
\newblock Cambridge University Press, New York, NY, USA.

\bibitem[{Shamir(2015)}]{shamir15}
\textsc{Shamir, O.} (2015).
\newblock The sample complexity of learning linear predictors with the squared
  loss.
\newblock \textit{Journal of Machine Learning Research} \textbf{16} 3475--3486.

\bibitem[{Soltanolkotabi(2017)}]{soltanolkotabi2017relus}
\textsc{Soltanolkotabi, M.} (2017).
\newblock Learning relus via gradient descent.
\newblock In \textit{Conference on Neural Information Processing Systems
  (NeurIPS)}.

\bibitem[{Soltanolkotabi et~al.(2019)Soltanolkotabi, Javanmard and
  Lee}]{soltanolkotabi2019theoretical}
\textsc{Soltanolkotabi, M.}, \textsc{Javanmard, A.} and \textsc{Lee, J.~D.}
  (2019).
\newblock Theoretical insights into the optimization landscape of
  over-parameterized shallow neural networks.
\newblock \textit{{IEEE} Trans. Inf. Theory} \textbf{65} 742--769.

\bibitem[{Srebro et~al.(2010)Srebro, Sridharan and Tewari}]{srebro.mirror}
\textsc{Srebro, N.}, \textsc{Sridharan, K.} and \textsc{Tewari, A.} (2010).
\newblock Smoothness, low noise and fast rates.
\newblock In \textit{Conference on Neural Information Processing Systems
  (NeurIPS)}.

\bibitem[{Tian(2017)}]{tian2017relu}
\textsc{Tian, Y.} (2017).
\newblock Symmetry-breaking convergence analysis of certain two-layered neural
  networks with relu nonlinearity.
\newblock In \textit{International Conference on Learning Representations
  (ICLR)}.

\bibitem[{Vempala and Wilmes(2019)}]{vempala}
\textsc{Vempala, S.~S.} and \textsc{Wilmes, J.} (2019).
\newblock Gradient descent for one-hidden-layer neural networks: Polynomial
  convergence and {SQ} lower bounds.
\newblock In \textit{Conference on Learning Theory (COLT)}.

\bibitem[{Yehudai and Shamir(2020)}]{yehudai20}
\textsc{Yehudai, G.} and \textsc{Shamir, O.} (2020).
\newblock Learning a single neuron with gradient methods.
\newblock In \textit{Conference on Learning Theory (COLT)}.

\bibitem[{Zhang et~al.(2019)Zhang, Yu, Wang and Gu}]{zhanggu2019}
\textsc{Zhang, X.}, \textsc{Yu, Y.}, \textsc{Wang, L.} and \textsc{Gu, Q.}
  (2019).
\newblock Learning one-hidden-layer relu networks via gradient descent.
\newblock In \textit{Conference on Artificial Intelligence and Statistics
  (AISTATS)}.

\end{thebibliography}

\end{document}